\title[A Modular Analysis of Adaptive (Non-)Convex Optimization]{A Modular Analysis of Adaptive (Non-)Convex Optimization: Optimism, Composite Objectives, and Variational Bounds}
\author{
  \Name{Pooria Joulani} \Email{pooria@ualberta.ca}\\
    \addr Department of Computing Science\\
    University of Alberta\\
    Edmonton, Alberta, Canada
  \AND
  \Name{Andr\'{a}s Gy\"{o}rgy} \Email{a.gyorgy@imperial.ac.uk}\\
    \addr Department of Electrical and Electronic Engineering\\
    Imperial College London\\
    London, UK
  \AND
  \Name{Csaba Szepesv\'{a}ri} \Email{szepesva@ualberta.ca}\\
    \addr Department of Computing Science\\
    University of Alberta\\
    Edmonton, Alberta, Canada
}
\declaretheorem[name=Lemma,refname={Lemma,Lemmas},Refname={Lemma,Lemmas},sibling=theorem]{lemma}
\declaretheorem[name=Corollary,refname={Corollary,Corollaries},Refname={Corollary,Corollaries},sibling=theorem]{corollary}
\declaretheorem[name=Assumption,refname={Assumption,Assumptions},Refname={Assumption,Assumptions}]{assumption}
\declaretheorem[name=Proposition,refname={Proposition,Propositions},Refname={Proposition,Propositions},sibling=theorem]{proposition}
\declaretheorem[name=Definition,refname={Definition,Definitions},Refname={Definition,Definitions},sibling=theorem]{definition}
\declaretheorem[name=Remark,refname={Remark,Remarks},Refname={Remark,Remarks}]{remark}
\Crefname{corollary}{Corollary}{Corollaries}
\Crefname{lemma}{Lemma}{Lemmas}
\Crefname{proposition}{Proposition}{Propositions}
\Crefname{theorem}{Theorem}{Theorems}
\Crefname{definition}{Definition}{Definitions}
\Crefname{assumption}{Assumption}{Assumptions}
\Crefname{example}{Example}{Examples}
\Crefname{remark}{Remark}{Remarks}
\Crefname{setting}{Setting}{Settings}
 \newcommand{\argmin}{\operatorname{argmin}}
 \newcommand{\E}[1]{\mathbb{E}{\left\lbrace #1 \right\rbrace}}
 \newcommand{\EE}{\mathbb{E}}
 \newcommand{\Reals}{\mathbb{R}}
 \newcommand{\ExReals}{\overline{\mathbb{R}}}
 \newcommand{\R}{\mathbb{R}}
 \newcommand{\dotx}[1]{\langle #1 \rangle}
 \newcommand{\tl}[1]{\tilde{#1}}
 \newcommand{\dom}{\tn{dom}}
 \newcommand{\AdaFTRL}{\textsc{Ada-FTRL}\xspace}
 \newcommand{\AdaMD}{\textsc{Ada-MD}\xspace}
 \newcommand{\FTProx}{\textsc{FTRL-Prox}\xspace}
 \newcommand{\FTCentered}{\textsc{FTRL-Centered}\xspace}
 \newcommand{\AdaGrad}{\textsc{AdaGrad}\xspace}
 \newcommand{\tn}[1]{\textnormal{#1}}
 \newcommand{\cX}{\mathcal{X}}
 \newcommand{\cI}{\mathcal{I}}
 \newcommand{\cH}{\mathcal{H}}
 \newcommand{\cC}{\mathcal{C}}
 \newcommand{\cB}{\mathcal{B}}
 \newcommand{\trace}[1]{\textnormal{tr}\left(#1\right)}
 \newcommand{\tT}{\top}
 \newcommand{\norm}[1]{\|#1\|}
 \newcommand{\one}[1]{\mathbb{I}\{#1\}}
\begin{document}

\maketitle

\begin{abstract}
Recently, much work has been done on extending the scope of online learning and incremental stochastic optimization algorithms. In this paper we contribute to this effort in two ways: First, based on a new regret decomposition and a generalization of Bregman divergences, we provide a self-contained, modular analysis of the two workhorses of online learning: (general) adaptive versions of Mirror Descent (MD) and the Follow-the-Regularized-Leader (FTRL) algorithms. The analysis is done with extra care so as not to introduce assumptions not needed in the proofs and allows to combine, in a straightforward way, different algorithmic ideas (e.g., adaptivity, optimism, implicit updates) and learning settings (e.g., strongly convex or composite objectives). This way we are able to reprove, extend and refine a large body of the literature, while keeping the proofs concise.
The second contribution is a byproduct of this careful analysis: We present algorithms with improved variational bounds for smooth, composite objectives, including a new family of optimistic MD algorithms with only one projection step per round. Furthermore, we provide a simple extension of adaptive regret bounds to practically relevant non-convex problem settings with essentially no extra effort.
\end{abstract}

\begin{keywords}
  Online Learning, Stochastic Optimization, Non-convex Optimization, AdaGrad, Mirror-Descent, Follow-The-Regularized-Leader, Implicit Updates, Optimistic Online Learning, Smooth Losses, Strongly-Convex Learning.
\end{keywords}

\section{Introduction}
\label{sec:intro}
Online and stochastic optimization algorithms form the underlying machinery in much of modern machine learning.
Perhaps the most well-known example is Stochastic Gradient Descent (SGD) and its adaptive variants, the so-called \AdaGrad algorithms \citep{mcmahan2010adaptive,duchi2011adaptive}. Other special cases include multi-armed and linear bandit algorithms, as well as algorithms for online control, tracking and prediction with expert advice  \citep{cesa-bianchi2006prediction,shalev2011online,hazan2016introduction}.

There are numerous algorithmic variants in online and stochastic optimization, such as adaptive~\citep{duchi2011adaptive,mcmahan2010adaptive} and optimistic algorithms~\citep{rakhlin2013online,rakhlin2013optimization,chiang2012online,mohri2016accelerating,kamalaruban2016improved}, implicit updates~\citep{kivinen1997exponentiated,kulis2010implicit}, composite objectives~\citep{xiao2009dual,duchi2011adaptive,duchi2010composite}, or non-monotone regularization~\citep{sra2016adadelay}.
Each of these variants has been analyzed under a specific set of assumptions on the problem, e.g., smooth~\citep{juditsky2011solving,lan2012optimal,dekel2012optimal}, convex~\citep{shalev2011online,hazan2016introduction,orabona2015generalized,mcmahan2014survey}, or strongly convex~\citep{shalev2009mind,hazan2007logarithmic,orabona2015generalized,mcmahan2014survey} objectives.
However, a useful property is typically missing from the analyses: modularity.
It is typically not clear from the original analysis whether the algorithmic idea can be mixed with other techniques, or whether the effect of the assumptions extend beyond the specific setting considered.
For example, based on the existing analyses it is very much unclear to what extent \AdaGrad techniques, or the effects of smoothness, or variational bounds in online learning, extend to new learning settings. Thus, for every new combination of algorithmic ideas, or under every new learning setting, the algorithms are typically analyzed from scratch.

A special new learning setting is non-convex optimization. While the bulk of results in online and stochastic optimization assume the convexity of the loss functions, online and stochastic optimization algorithms have been successfully applied in settings where the objectives are non-convex.
In particular, the highly popular deep learning techniques \citep{goodfellow2016deep} are based on the application of stochastic optimization algorithms to non-convex objectives.
In the face of this discrepancy between the state of the art in theory and practice, an on-going thread of research attempts to generalize the analyses of stochastic optimization to non-convex settings.
In particular, certain non-convex problems have been shown to actually admit efficient optimization methods,
usually taking some form of a gradient method (one such problem is matrix completion, see, e.g., \citealp{ge2016nips,bhojanapalli2016nips}).

The goal of this paper is to provide a flexible, modular analysis of online and stochastic optimization algorithms that allows to easily combine different algorithmic techniques and learning settings under as little assumptions as possible.

\subsection{Contributions}\label{sec:contributions}
First, building on previous attempts to unify the analyses of online and stochastic optimization \citep{shalev2011online,hazan2016introduction,orabona2015generalized,mcmahan2014survey}, we provide a unified analysis of a large family of optimization algorithms in general Hilbert spaces. The analysis is crafted to be \emph{modular}: it decouples the contribution of each assumption or algorithmic idea from the analysis, so as to enable us to combine different assumptions and techniques without analyzing the algorithms from scratch.

The analysis depends on a novel decomposition of the optimization performance (optimization error or regret) into two parts: the first part captures the generic performance of the algorithm, whereas the second part connects the assumptions about the learning setting to the information given to the algorithm.
Lemma~\ref{lem:regret-decomposition} in Section~\ref{sec:regret-decomposition} provides such a decomposition.\footnote{This can be viewed as a refined version of the so-called ``be-the-leader'' style of analysis. Previous work (e.g., \citealt{mcmahan2014survey,shalev2011online}) may give the impression that ``follow-the-leader/be-the-leader'' analyses lose constant factors while other methods such as primal-dual analysis don't. This is not the case about our analysis. In fact, we improve constants in optimistic online learning; see Section~\ref{sec:optimistic-learning}.}
Then, in Theorem~\ref{thm:cheating-regret}, we bound the generic (first) part, using a careful analysis of the linear regret of generalized adaptive Follow-The-Regularized-Leader (FTRL) and Mirror Descent (MD) algorithms.

Second, we use this analysis framework to provide a concise summary of a large body of previous results. \cref{sec:applications} provides the basic results, and \cref{sec:composite-objective,sec:implicit-update,sec:optimistic-learning} present the relevant extensions and applications.

Third, building on the aforementioned modularity, we analyze new learning algorithms. In particular, in \cref{sec:scale-free} we analyze a new adaptive, optimistic, composite-objective FTRL algorithm with variational bounds for smooth convex loss functions, which combines the best properties and avoids the limitations of the previous work. We also present a new class of optimistic MD algorithms with only one MD update per round (\cref{sec:ao-md}).

Finally, we extend the previous results to special classes of non-convex optimization problems. In particular, for such problems, we provide global convergence guarantees for general adaptive online and stochastic optimization algorithms. The class of non-convex problems we consider (cf. \cref{sec:non-convex}) generalizes practical classes of functions considered in previous work on non-convex optimization.

\subsection{Notation and definitions}\label{sec:notation-defs}
We will work with a
(possibly infinite-dimensional) Hilbert space $\cH$ over the reals.
That is, $\cH$ is a real vector space equipped with an inner product $\dotx{\cdot , \cdot}: \cH \times \cH \to \Reals$, such that $\cH$ is complete with respect to (w.r.t.) the norm induced by $\dotx{\cdot,\cdot}$.
Examples include $\cH = \Reals^{d}$ (for a positive integer $d$) where $\dotx{\cdot, \cdot}$ is the standard dot-product, or $\cH = \Reals^{m \times n}$, the set of $m \times n$ real matrices, where $\dotx{A,B}=\trace{A^{\tT}B}$, or $\cH = \ell^{2}(\cC)$, the set of square-integrable real-valued functions on $\cC \subset \Reals^d$, where $\dotx{f,g} = \int_{\cC} f(x) g(x) dx$ for any $f,g \in \cH$.

We denote the extended real line by $\ExReals := \Reals \cup \{-\infty, +\infty\}$, and work with functions of the form $f:\cH \to \ExReals$.
Given a set $C \subset \cH$, the \emph{indicatrix} of $C$ is the function $\cI_{C}:\cH \to \ExReals$ given by $\cI_{C}(x) = 0$ for $x \in C$ and $\cI_{C}(x) = +\infty$ for $x \not \in C$. The \emph{effective domain} of a function $f: \cH \to \ExReals$, denoted by $\dom(f)$, is the set $\{x \in \cH\ | \ f(x) < +\infty\}$ where $f$ is less than infinity; conversely, we identify any function $f: C \to \ExReals$ defined only on a set $C \subset \cH$ by the function $f + \cI_{C}$. A function $f$ is \emph{proper} if $\dom(f)$ is non-empty and $f(x) > -\infty$ for all $x \in \cH$.

Let $f:\cH \to \ExReals$ be proper.
We denote the set of all sub-gradients of $f$ at $x \in \cH$ by $\partial f(x)$, i.e.,
\begin{align*}
\partial f(x) := \left\{ \ u \in \cH \ | \ \forall y \in \cH, \dotx{u, y-x} + f(x)  \le f(y) \ \right\}\,.
\end{align*}
The function $f$ is \emph{sub-differentiable} at $x$ if $\partial f(x) \not= \emptyset$; we use $f'(x)$ to denote any member of $\partial f(x)$.
Note $\partial f(x)=\emptyset$ when $x \not\in \dom(f)$.

Let $x \in \dom(f)$, assume that $f(x) > - \infty$, and let $z \in \cH$. The \emph{directional derivative} of $f$ at $x$ in the direction $z$ is defined as
$f'(x;z) := \lim_{\alpha \downarrow 0} \frac{f(x+\alpha z) - f(x)}{\alpha}\,,$ provided that the limit exists in $[-\infty, +\infty]$.
The function $f$ is \emph{differentiable} at $x$
if it has a \emph{gradient} at $x$, i.e., a vector $\nabla f(x) \in \cH$ such that $f'(x;z) = \dotx{\nabla f(x),z}$ for all $z \in \cH$.
The function $f$ is \emph{locally sub-differentiable} at $x$ if it has a \emph{local sub-gradient} at $x$, i.e., a vector $g_x \in \cH$ such that $\dotx{g_x,z} \le f'(x; z)$ for all $z \in \cH$. We denote the set of local sub-gradients of $f$ at $x$ by $\delta f(x)$. Note that if $f'(x; z)$ exists for all $z\in \cH$, and $f$ is sub-differentiable at $x$, then it is also locally sub-differentiable with $g_x = u$ for any $u \in \partial f(x)$. Similarly, if $f$ is differentiable at $x$, then it is also locally sub-differentiable, with $g_x = \nabla f(x)$.
The function $f$ is called \emph{directionally differentiable at $x \in \dom(f)$} if $f(x) > -\infty$ and $f'(x; z)$ exists in $[-\infty, +\infty]$ for all $z \in \cH$; $f$ is called \emph{directionally differentiable} if it is directionally differentiable at every $x \in \dom(f)$.

Next, we define a generalized\footnote{If $f$ is differentiable at $x$, then \eqref{eq:bregman-div} matches the traditional definition of Bregman divergence. Previous work also considered generalized Bregman divergences, e.g., the works of \citet{telgarsky2012agglomerative,kiwiel1997proximal} and the references therein. However, our definition is not limited to convex functions, allowing us to study convex and non-convex functions under a unified theory; see, e.g., \cref{sec:non-convex}.}
notion of Bregman divergence:
\begin{definition}[Bregman divergence]\label{def:bregman-div}
Let $f$ be directionally differentiable at $x \in \dom(f)$. The $f$-induced \emph{Bregman divergence} from $x$ is the function from $\cH \to \ExReals$, given by
\begin{align}
\cB_{f}(y,x) :=
\begin{cases}
  f(y) - f(x) - f'(x; y-x)\,. \label{eq:bregman-div} & \qquad \text{if $f(y)$ is finite;} \\
  +\infty & \qquad \text{otherwise}\,.
\end{cases}
\end{align}
\end{definition}
A function $f: \cH \to \ExReals$ is \emph{convex} if for all $x,y \in \dom(f)$ and all $\alpha \in (0,1)$, $\alpha f(x) + (1-\alpha) f(y) \ge f\left( \alpha x + (1-\alpha) y \right)$.
We can show that a proper convex functions is always directionally differentiable, and the Bregman divergence it induces is always nonnegative  (see Appendix~\ref{apx:tech-results}).
Let $\|.\|$ denote a norm on $\cH$ and let $L, \beta > 0$. A directionally differentiable function $f: \cH \to \ExReals$ is \emph{$\beta$-strongly convex} w.r.t. $\|.\|$ iff
$ \cB_{f}(x,y) \ge \frac{\beta}{2} \| x - y \|^{2}$
for all $x, y \in \dom(f)$. The function $f$ is \emph{$L$-smooth} w.r.t. $\|.\|$ iff for all $x, y \in \dom(f)$,
       $\left| \cB_{f}(x,y) \right| \le \frac{L}{2} \| x - y \|^{2}$.

We use $\{c_t\}_{t=i}^{j}$ to denote the sequence $c_i, c_{i+1}, \dots, c_{j}$, and $c_{i:j}$ to denote the sum $\sum_{t=i}^{j} c_t$, with $c_{i:j} := 0$ for $i>j$.

\section{Problem setting: online optimization}\label{sec:online-optimization}

We study a general first-order iterative optimization setting that encompasses several common optimization scenarios, including online, stochastic, and full-gradient optimization.
Consider a convex set $\cX \subset \cH$, a sequence of directionally differentiable
functions $f_1, f_2, \dots, f_T$ from $\cH$ to $\ExReals$ with $\cX \subset \dom(f_t)$ for all $t=1,2,\dots, T$, and a first-order iterative optimization algorithm. The algorithm starts with an initial point $x_1$. Then, in each iteration $t=1,2,\dots,T$, the algorithm suffers a loss $f_t(x_t)$ from the latest point $x_t$, receives some feedback $g_t \in \cH$, and selects the next point
$x_{t+1}$. Typically, $\dotx{g_t, \cdot}$ is supposed to be an estimate or lower bound on the directional derivative of $f_t$ at $x_t$.
This protocol is summarized in Figure~\ref{fig:online-optimization}.

Unlike Online Convex Optimization (OCO), at this stage we do not assume that the $f_t$ are convex\footnote{There is a long tradition of non-convex assumptions in the Stochastic Approximation (SA) literature, see, e.g., the book of \citet{bertsekas1978stochastic}. Our results differ in that they apply to more recent advances in online learning (e.g., AdaGrad algorithms), and we derive any-time regret bounds, rather than asymptotic convergence results, for specific non-convex function classes.} or differentiable, nor do we assume that $g_t$ are gradients or sub-gradients.
Our goal is to minimize the \emph{regret} $R_T(x^*)$ against any $x^* \in \cX$, defined as
$R_T(x^*) = \sum_{t=1}^{T} \left(f_t(x_t) - f_t(x^*)\right)\,.$

\begin{figure}[t]
  \textbf{Input}: convex set $\cX \subset \cH$; directionally differentiable functions $f_1, f_2, \dots, f_{T}$ from $\cH$ to $\ExReals$.
  \begin{itemize}
    \item The algorithm selects an initial point $x_1 \in \cX$.
    \item \textbf{For each} time step $t = 1,2,\dots,T$:
    \begin{itemize}
      \item The algorithm observes feedback $g_t \in \cH$ and
	  selects the next point $x_{t+1} \in \cX$.
    \end{itemize}
  \end{itemize}
  \textbf{Goal:} Minimize the regret $R_T(x^*)$ against any $x^* \in \cX$.
  \caption{Iterative optimization.
  \label{fig:online-optimization}}
\end{figure}

\subsection{Regret decomposition}\label{sec:regret-decomposition}

Below, we provide a decomposition of $R_T(x^*)$ (proved in Appendix~\ref{app:prL1}) which holds for any sequence of points $x_1, x_2, \dots, x_{T+1}$ and any $x^*$.
The decomposition is in terms of the \emph{forward linear regret} $R^{+}_T(x^*)$, defined as
\[
R^{+}_T(x^*) := \sum_{t=1}^{T} \dotx{g_t, x_{t+1} - x^*}\,.
\]
 Intuitively, $R^+_T$ is the regret (in linear losses) of the ``cheating'' algorithm that uses action $x_{t+1}$ at time $t$, and depends only on the choices of the algorithm and the feedback it receives.
\begin{lemma}[Regret decomposition]\label{lem:regret-decomposition} Let $x^*, x_1, x_2, \dots, x_{T+1}$ be any sequence of points in $\cX$. For $t=1,2,\dots, T$, let $f_t:\cH \to \ExReals$ be directionally differentiable with $\cX \subset \dom(f_t)$, and let $g_t \in \cH$. Then,
  \begin{align}
      R_T(x^*) = &\ R^{+}_T(x^*) + \sum_{t=1}^{T} \dotx{g_t, x_{t} - x_{t+1}} - \sum_{t=1}^{T} \cB_{f_t}(x^*, x_{t}) + \sum_{t=1}^{T} \delta_t\,, \label{eq:regret-decomposition}
  \end{align}
  where $\delta_t = -f'(x_t; x^* - x_t) + \dotx{g_t, x^* - x_t}$.
\end{lemma}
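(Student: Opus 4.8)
The plan is to start directly from the definition of the regret and insert the "cheating" sequence $x_{t+1}$ artificially, so that the identity becomes a matter of grouping terms correctly. Concretely, I would write
\[
R_T(x^*) = \sum_{t=1}^T \bigl(f_t(x_t) - f_t(x^*)\bigr)
= \sum_{t=1}^T \bigl(f_t(x_t) - f_t(x^*)\bigr) - \sum_{t=1}^T \cB_{f_t}(x^*,x_t) + \sum_{t=1}^T \cB_{f_t}(x^*,x_t),
\]
and then expand one copy of $\cB_{f_t}(x^*,x_t) = f_t(x^*) - f_t(x_t) - f_t'(x_t; x^*-x_t)$ (legitimate since $\cX \subset \dom(f_t)$, $x_t,x^* \in \cX$, and $f_t$ is directionally differentiable, so $f_t(x^*)$ is finite and the divergence is real-valued). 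This collapses $f_t(x_t) - f_t(x^*) + \cB_{f_t}(x^*,x_t)$ to $-f_t'(x_t; x^*-x_t)$, leaving
\[
R_T(x^*) = -\sum_{t=1}^T f_t'(x_t; x^*-x_t) - \sum_{t=1}^T \cB_{f_t}(x^*,x_t).
\]

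Next I would introduce the feedback vectors $g_t$ by adding and subtracting $\dotx{g_t, x^*-x_t}$ term by term:
\[
-f_t'(x_t; x^*-x_t) = \dotx{g_t, x^*-x_t} + \bigl(-f_t'(x_t; x^*-x_t) - \dotx{g_t, x^*-x_t}\bigr)\,,
\]
wait — the sign bookkeeping has to match the claimed $\delta_t = -f_t'(x_t; x^*-x_t) + \dotx{g_t, x^*-x_t}$, so I write $-f_t'(x_t; x^*-x_t) = \dotx{g_t, x^*-x_t} + \delta_t$ with this $\delta_t$; that is just the definition of $\delta_t$, rearranged. Summing, the $-\sum f_t'(x_t;x^*-x_t)$ piece becomes $\sum_t \dotx{g_t, x^*-x_t} + \sum_t \delta_t$.

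Finally I would rewrite the linear term $\sum_t \dotx{g_t, x^*-x_t}$ in terms of the forward linear regret. Using $x^* - x_t = (x^* - x_{t+1}) + (x_{t+1} - x_t)$ gives
\[
\sum_{t=1}^T \dotx{g_t, x^*-x_t} = \sum_{t=1}^T \dotx{g_t, x^*-x_{t+1}} + \sum_{t=1}^T \dotx{g_t, x_{t+1}-x_t}
= -R_T^+(x^*) - \sum_{t=1}^T \dotx{g_t, x_t - x_{t+1}}\,.
\]
Substituting back yields exactly \eqref{eq:regret-decomposition}. I do not expect a genuine obstacle here: the statement is an algebraic identity, and the only points requiring care are (i) checking that every quantity that appears is well-defined and finite — in particular that $\cB_{f_t}(x^*,x_t)$ is real-valued, which follows from $f_t(x^*)$ being finite since $x^* \in \cX \subset \dom(f_t)$ — and (ii) keeping the signs of $\delta_t$ and of the $\dotx{g_t, x_t - x_{t+1}}$ term consistent with the way they are stated in the lemma. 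The telescoping-style splitting $x^*-x_t = (x^*-x_{t+1})+(x_{t+1}-x_t)$ is the one "trick," and it is a standard move in be-the-leader analyses.
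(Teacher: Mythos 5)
Your approach is the same as the paper's: expand $\cB_{f_t}(x^*,x_t)$ by definition, introduce $\dotx{g_t,\cdot}$ via $\delta_t$, and split the direction through $x_{t+1}$. The paper does this all in one displayed chain per~$t$ and then sums; you add-and-subtract the Bregman terms globally first, but the mechanism is identical.

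However, there is a sign error at the step where you bring in $\delta_t$, and it does not cancel out. From the paper's definition $\delta_t = -f_t'(x_t; x^*-x_t) + \dotx{g_t, x^*-x_t}$, rearranging gives
\begin{equation*}
-f_t'(x_t; x^*-x_t) \;=\; \delta_t - \dotx{g_t, x^*-x_t} \;=\; \delta_t + \dotx{g_t, x_t - x^*}\,,
\end{equation*}
whereas you wrote $-f_t'(x_t; x^*-x_t) = \dotx{g_t, x^*-x_t} + \delta_t$, which would correspond to $\delta_t = -f_t'(x_t;x^*-x_t) - \dotx{g_t,x^*-x_t}$, the negative of the correct inner-product term. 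With your version, after you convert $\sum_t\dotx{g_t, x^*-x_t}$ to $-R_T^+(x^*) - \sum_t\dotx{g_t, x_t - x_{t+1}}$, the final identity comes out as $R_T(x^*) = -R_T^+(x^*) - \sum_t\dotx{g_t, x_t - x_{t+1}} - \sum_t\cB_{f_t}(x^*,x_t) + \sum_t\delta_t$, which contradicts~\eqref{eq:regret-decomposition} (the first two terms have the wrong sign). The fix is to carry $\dotx{g_t, x_t - x^*}$ (not $\dotx{g_t, x^*-x_t}$) forward, split it as $\dotx{g_t, x_t - x_{t+1}} + \dotx{g_t, x_{t+1} - x^*}$, and sum; this gives $R_T^+(x^*) + \sum_t \dotx{g_t, x_t - x_{t+1}}$ with the correct signs. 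Once that is repaired, the argument is complete and matches the paper's.
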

Intuitively, the second term captures the regret due to the algorithm's inability to look ahead into the future.\footnote{This is also related to the concept of ``prediction drift'', which appears in learning with delayed feedback \citep{joulani2016delay}, and to the role of stability in online algorithms \citep{saha2012interplay}.}
The last two terms capture, respectively, the gain in regret that is possible due to the curvature of $f_t\,$, and the accuracy of the first-order (gradient) information $g_t$.

In light of this lemma, controlling the regret reduces to controlling the individual terms in \eqref{eq:regret-decomposition}. First, we provide upper bounds on $R^{+}_T(x^*)$ for a large class of online algorithms.

\section{The algorithms: \AdaFTRL and \AdaMD}
\label{sec:algorithms}
In this section, we analyze \AdaFTRL and \AdaMD. These two algorithms generalize the well-known core algorithms of online optimization: FTRL \citep{shalev2011online,hazan2016introduction} and
MD \citep{nemirovsky1983problem,beck2003mirror,warmuth1997continuous,duchi2010composite}.
In particular, \AdaFTRL and \AdaMD capture variants of FTRL and MD such as
Dual-Averaging \citep{nesterov2009primal,xiao2009dual},
AdaGrad \citep{duchi2011adaptive,mcmahan2010adaptive},
composite-objective algorithms \citep{xiao2009dual,duchi2011adaptive,duchi2010composite},
implicit-update MD \citep{kivinen1997exponentiated,kulis2010implicit},
strongly-convex and non-linearized FTRL \citep{shalev2009mind,hazan2007logarithmic,orabona2015generalized,mcmahan2014survey},
optimistic FTRL and MD
\citep{rakhlin2013online,rakhlin2013optimization,chiang2012online,mohri2016accelerating,kamalaruban2016improved},
and even algorithms like AdaDelay \citep{sra2016adadelay} that violate the common non-decreasing regularization assumption existing in much of the previous work.

\subsection{\AdaFTRL: Generalized adaptive Follow-the-Regularized-Leader}
The \AdaFTRL algorithm works with two sequences of \emph{regularizers}, $p_1, p_2, \dots, p_T$ and $q_0, q_1, q_2, \dots, q_T$, where each $p_t$ and $q_t$ is a function from $\cH$ to $\ExReals$.
At time $t=0,1,2,\dots,T$, having received $(g_s)_{s=1}^{t}$, \AdaFTRL uses $g_{1:t}, p_{1:t}$ and $q_{0:t}$ to compute the next point $x_{t+1}$. The regularizers $p_t$ and $q_t$ can be built by \AdaFTRL in an online adaptive manner using the information generated up to the end of time step $t$ (including $g_t$ and $x_t$). In particular, we use $p_t$ to distinguish the ``proximal'' part of this adaptive regularization: for all $t=1,2,\dots, T$, we require that $p_t$ (but not necessarily $q_t$) be minimized over $\cX$ at $x_t$, that is\footnote{
Note that $x_t$ does not depend on $p_t$, but is rather computed using only $p_{1:t-1}$. Once $x_t$ is calculated, $p_t$ can be chosen so that \eqref{eq:ptcond} holds (and then used in computing $x_{t+1}$).
},
\begin{align}
  p_t(x_t) = \inf_{x \in \cX} p_t(x) < +\infty\,.
  \label{eq:ptcond}
\end{align}

With the definitions above, for $t=0,1,2,\dots, T$, \AdaFTRL selects $x_{t+1}$ such that
\begin{align}
 x_{t+1} \in \argmin_{x \in \cX}\ \dotx{g_{1:t}, x} + p_{1:t}(x) + q_{0:t}(x)\,.
 \label{eq:ftrl-update}
\end{align}
In particular, this means that the initial point $x_1$ satisfies\footnote{
The case of an arbitrary $x_1$ is equivalent to using, e.g., $q_0 \equiv 0$ (and changing $q_1$ correspondingly).}
\begin{align*}
  x_1 \in \argmin_{x \in \cX}\ q_0(x)\,.
\end{align*}
In addition, for notational convenience, we define $r_t := p_t + q_{t-1}, t=1,2,\dots,T$, so that
\begin{align}
   x_{t+1} \in \argmin_{x \in \cX}\ \dotx{g_{1:t}, x} + q_{t}(x) + r_{1:t}(x)\,.
   \label{eq:ftrl-update-with-r}
\end{align}
Finally, we need to make a minimal assumption to ensure that \AdaFTRL is well-defined.
\begin{assumption}[Well-posed \AdaFTRL]\label{assum:ftrl}
  The functions $q_0$ and $p_t, q_t, t=1,2,\dots,T,$ are proper. In addition, for all $t=0,1,\dots,T$, the $\argmin$ sets that define $x_{t+1}$ in \eqref{eq:ftrl-update} are non-empty, and their optimal values are finite. Finally, for all $t=1,2,\dots,T$, $r_{1:t}$ is directionally differentiable, and $p_t$ satisfies \eqref{eq:ptcond}.
\end{assumption}

Table~\ref{tbl:ftrl-special-cases}
provides examples of several special cases of \AdaFTRL. In particular, \AdaFTRL combines, unifies and considerably extends the two major types of FTRL algorithms previously considered in the literature, i.e., the so-called \FTCentered and \FTProx algorithms \citep{mcmahan2014survey} and their variants, as discussed in the subsequent sections.

\begin{table}[!ht]
\small
  \begin{tabular*}{\textwidth}{
    lll
    }
    \toprule
    Algorithm & Regularization & Notes, Conditions and Assumptions \\
    \midrule
    Online Gradient & $q_0 = \frac{1}{2\eta} \| . \|_{2}^{2}$ & $\cX = \Reals^{d}, \eta > 0$ \\
    Descent (OGD) & $q_t = p_t = 0, t \ge 1$ & Update: $x_{t+1} = x_t - \eta g_t$\\
    \midrule
    Dual Averaging & $q_t = \frac{\alpha_t}{2} \| . \|_2^2$ & $\alpha_{0:t} \ge 0, \alpha_t \ge 0\, \  (t \ge 1)$ \\
    (DA) & $p_t = 0$ & \\
    \midrule
    AdaGrad -  & $q_t = \frac{1}{2\eta} \|x\|_{(t)}^2 $ &
    $\|x\|_{(t)}^2 := x^{\top} (Q_{0:t}^{1/2} - Q_{0:t-1}^{1/2}) x$
    \\
    Dual Averaging & $Q_0 = \gamma I$ & $Q_{1:t} = \sum_{s=1}^{t} g_s g_s^{\top} $ (full-matrix update)
    \\
    & $p_t = 0$ & $Q_{1:t}^{(j,j)} = \sum_{s=1}^{t} g_{s,j}^2$ (diagonal-matrix update) \\
    \midrule
    \FTProx  & $q_t = 0$ & $Q_0 = 0$
    \\
    & $p_t = \frac{1}{2\eta} \| x-x_t \|_{(t)}^2$ & $Q_t$ and $\|\cdot\|_{(t)}$ as in AdaGrad-DA\\
    \midrule
    Composite- & $q_0 = \tl{q}_0$ & For adding composite-objective learning to \\
    Objective & $q_t = \psi_{t} + \tl{q}_{t}$ & any instance of \AdaFTRL (see also \cref{sec:composite-objective})\\
    Online Learning & $p_t = \tl{p}_t$ & $x_{t+1} = \argmin_{\cX} \dotx{g_{1:t}, x} + \psi_{1:t}(x) + \tl{p}_{1:t} (x) + \tl{q}_{0:t}(x) $\\
    \bottomrule
  \end{tabular*}
  \caption{Some special instances of \AdaFTRL; see also the survey of \citet{mcmahan2014survey}.}\label{tbl:ftrl-special-cases}
\end{table}

\subsection{\AdaMD: Generalized adaptive Mirror-Descent}
As in \AdaFTRL, the \AdaMD algorithm uses two sequences of regularizer functions from $\cH$ to $\ExReals$: $r_1, r_2, \dots, r_T$ and $q_0, q_1, \dots, q_T$. Further, we assume that the domains of $(r_t)$ are non-increasing, that is, $\dom(r_t) \subset \dom(r_{t-1})$ for $t=2,3,\ldots,T$. Again, $q_t, r_t$ can be created using the information generated by the end of time step $t$.
The initial point $x_1$ of \AdaMD satisfies\footnote{
The case of an arbitrary $x_1$ is equivalent to using, e.g., $q_0 \equiv 0$ (and changing $r_1$ correspondingly).}
\begin{align*}
  x_1 \in \argmin_{x \in \cX}\ q_0(x)\,.
\end{align*}
Furthermore, at time $t=1,2,\dots,T$, having observed $(g_s)_{s=1}^{t}$, \AdaMD uses $g_t, q_t$ and $r_{1:t}$ to select the point $x_{t+1}$ such that
\begin{align}
	x_{t+1} \in \argmin_{x \in \cX}\ \dotx{g_{t}, x} + q_t(x) + \cB_{r_{1:t}}(x, x_t)\,.
	\label{eq:md-update}
\end{align}
In addition, similarly to \AdaFTRL, we define $p_t:= r_t - q_{t-1}, t=1,2,\ldots,T$, though we do not require $p_t$ to be minimized at $x_t$ in \AdaMD\footnote{
We use the convention $(+\infty) - (+\infty) = +\infty$ in defining $p_t$.
}.

Finally, we present our assumption on the regularizers of \AdaMD. Compared to \AdaFTRL, we require a stronger assumption to ensure that \AdaMD is well-defined, and that the Bregman divergences in \eqref{eq:md-update} have a controlled behavior.
\begin{assumption}[Well-posed \AdaMD]\label{assum:md}
  The regularizers $q_t, r_t, t=1,2,\dots,T$, are proper, and $q_0$ is directionally differentiable. In addition, for all $t=0,1,\dots,T$, the $\argmin$ sets that define $x_{t+1}$ in \eqref{eq:md-update} are non-empty, and their optimal values are finite. Finally, for all $t=1,2,\dots,T$, $q_t, r_{1:t}$, and $r_{1:t} + q_t$ are directionally differentiable, $x_t \in \dom(r_{1:t})$, and $r'_{1:t}(x_t; \cdot )$ is linear in the directions inside $\dom(r_{1:t})$, i.e., there is a vector in $\cH$, denoted by $\nabla r_{1:t}(x_t)$, such that $r'_{1:t}(x_t, x-x_t) = \dotx{\nabla r_{1:t}(x_t), x-x_t}$ for all $x \in \dom(r_{1:t})$.
\end{assumption}
\begin{remark}
  Our results also hold under the weaker condition that $r'_{1:t}(x_t; \cdot - x_t)$ is \emph{concave}\footnote{
  Without such assumptions, a Bregman divergence term in $r'_{1:t}$ appears in the regret bound of \AdaMD. Concavity ensures that this term is not positive and can be dropped, greatly simplifying the bounds.
  }
  (rather than linear) on $\dom(r_{1:t})$. However, in case of a convex $r_{1:t}$, this weaker condition would again translate into having a linear $r'_{1:t}$, because a convex $r_{1:t}$ implies a convex $r'_{1:t}$ \citep[Proposition 17.2]{bauschke2011convex}. While we do not \emph{require} that $r_{1:t}$ be convex, all of our subsequent examples in the paper use convex $r_{1:t}$. Thus, in the interest of readability, we have made the stronger assumption of linear directional derivatives here.
\end{remark}
\begin{remark}
  Note that $r'_{1:t}$ needs to be linear only in the directions inside the domain of $r_{1:t}$. As such, we avoid the extra technical conditions required in previous work, e.g., that $r_{1:t}$ be a Legendre function to ensure $x_t$ remains in the interior of $\dom(r_{1:t})$ and $\nabla r_{1:t}(x_t)$ is well-defined.
\end{remark}

\subsection{Analysis of \AdaFTRL and \AdaMD}
Next we present a bound on the forward regret of \AdaFTRL and \AdaMD, and discuss its implications; the proof is provided in Appendix~\ref{apx:cheating-regret}.
\begin{theorem}[Forward regret of \AdaFTRL and \AdaMD]\label{thm:cheating-regret}
  For any $x^* \in \cX$ and any sequence of linear losses $\dotx{g_t, \cdot}, t=1,2,\dots,T$, the forward regret of \AdaFTRL under Assumption~\ref{assum:ftrl} satisfies
  \begin{align}
    R^{+}_T(x^*)
    \le
	  &\ \sum_{t=0}^{T} \left( q_t(x^*) - q_{t}(x_{t+1}) \right) + \sum_{t=1}^T \left( p_t(x^*) - p_t(x_t) \right) - \sum_{t=1}^{T} \cB_{r_{1:t}}(x_{t+1}, x_t) \,, 	\label{eq:ftrl-forward-regret}
  \end{align}
  whereas the forward regret of \AdaMD under Assumption~\ref{assum:md} satisfies
  \begin{align}
    R^{+}_T(x^*) \le
    &\ \sum_{t=0}^{T} \left( q_t(x^*) - q_t(x_{t+1}) \right) + \sum_{t=1}^{T} \cB_{p_t}(x^*, x_t) - \sum_{t=1}^{T} \cB_{r_{1:t}}(x_{t+1}, x_t)\,.	\label{eq:md-forward-regret}
  \end{align}
\end{theorem}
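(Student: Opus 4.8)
The plan is to obtain both bounds from ``be-the-leader''-style arguments on the per-round subproblems that define $x_{t+1}$, sharpened by the extra structure carried by the first-order optimality condition at a constrained minimizer. Two elementary facts do most of the work. \emph{(F1)} If $z\in\argmin_{x\in\cX}\phi(x)$ and $\phi$ is directionally differentiable at $z$, then $\phi'(z;y-z)\ge 0$ for every $y\in\cX$ (convexity of $\cX$), hence $\phi(y)\ge\phi(z)+\cB_{\phi}(y,z)$. \emph{(F2)} Bregman divergences are additive, $\cB_{\phi+\psi}=\cB_{\phi}+\cB_{\psi}$ wherever the right-hand side is well defined, and $\cB_{\phi}\equiv 0$ for affine $\phi$. \cref{assum:ftrl,assum:md} are exactly what makes the relevant functions directionally differentiable with attained, finite minima, so every expression below is legitimate (with the $(+\infty)-(+\infty)=+\infty$ convention where needed, which simply renders the bound vacuous in degenerate cases).

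\textbf{\AdaFTRL.} Put $F_t:=\dotx{g_{1:t},\cdot}+r_{1:t}+q_t$, so by \eqref{eq:ftrl-update-with-r} we have $x_{t+1}\in\argmin_{x\in\cX}F_t(x)$ for $t=0,1,\dots,T$, with $F_0=q_0$ and $x_1\in\argmin_{\cX}q_0$. The crucial point is that condition \eqref{eq:ptcond} makes $p_t$ \emph{also} minimized over $\cX$ at $x_t$; hence $x_t$ is a common minimizer of $F_{t-1}$ and of $p_t$, so it minimizes $G_t:=F_{t-1}+p_t=\dotx{g_{1:t-1},\cdot}+r_{1:t}$ over $\cX$, and (F1) gives $G_t(x_{t+1})\ge G_t(x_t)+\cB_{r_{1:t}}(x_{t+1},x_t)$ --- this is where the curvature term in \eqref{eq:ftrl-forward-regret} originates. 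Since $F_t-F_{t-1}=\dotx{g_t,\cdot}+p_t+q_t$, evaluating at $x_{t+1}$ and writing $\Theta_t:=\min_{\cX}F_t=F_t(x_{t+1})$ gives $\dotx{g_t,x_{t+1}}=\Theta_t-F_{t-1}(x_{t+1})-p_t(x_{t+1})-q_t(x_{t+1})$; substituting $F_{t-1}(x_{t+1})=G_t(x_{t+1})-p_t(x_{t+1})$, the lower bound on $G_t(x_{t+1})$, and $G_t(x_t)=F_{t-1}(x_t)+p_t(x_t)=\Theta_{t-1}+p_t(x_t)$ collapses the per-round inequality to
\[
 \dotx{g_t,x_{t+1}}\le\Theta_t-\Theta_{t-1}-p_t(x_t)-q_t(x_{t+1})-\cB_{r_{1:t}}(x_{t+1},x_t)\,.
\]
Summing over $t=1,\dots,T$ telescopes the $\Theta_t$'s; bounding $\Theta_T\le F_T(x^*)$, using $\Theta_0=q_0(x_1)$, and expanding $r_{1:T}(x^*)=\sum_{t=1}^{T}p_t(x^*)+\sum_{t=0}^{T-1}q_t(x^*)$ via $r_t=p_t+q_{t-1}$ rearranges into \eqref{eq:ftrl-forward-regret}.

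\textbf{\AdaMD.} Fix $t$ and set $\Lambda_t:=\dotx{g_t,\cdot}+q_t+\cB_{r_{1:t}}(\cdot,x_t)$, so $x_{t+1}\in\argmin_{x\in\cX}\Lambda_t(x)$ by \eqref{eq:md-update}. For $t\le T-1$ I use the first-order optimality $\Lambda_t'(x_{t+1};x^*-x_{t+1})\ge 0$; since $[\cB_{r_{1:t}}(\cdot,x_t)]'(x_{t+1};z)=r_{1:t}'(x_{t+1};z)-\dotx{\nabla r_{1:t}(x_t),z}$ (here the linearity of $r_{1:t}'(x_t;\cdot)$ from \cref{assum:md} is used), the generalized three-point identity
\[
 -\dotx{\nabla r_{1:t}(x_t),x^*-x_{t+1}}+r_{1:t}'(x_{t+1};x^*-x_{t+1})=\cB_{r_{1:t}}(x^*,x_t)-\cB_{r_{1:t}}(x_{t+1},x_t)-\cB_{r_{1:t}}(x^*,x_{t+1})
\]
together with $q_t'(x_{t+1};x^*-x_{t+1})=q_t(x^*)-q_t(x_{t+1})-\cB_{q_t}(x^*,x_{t+1})$ yields the per-round bound $\dotx{g_t,x_{t+1}-x^*}\le\cB_{r_{1:t}}(x^*,x_t)-\cB_{r_{1:t}+q_t}(x^*,x_{t+1})-\cB_{r_{1:t}}(x_{t+1},x_t)+q_t(x^*)-q_t(x_{t+1})$. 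For the last round $t=T$ I instead use only the zeroth-order inequality $\Lambda_T(x^*)\ge\Lambda_T(x_{T+1})$, which gives $\dotx{g_T,x_{T+1}-x^*}\le\cB_{r_{1:T}}(x^*,x_T)-\cB_{r_{1:T}}(x_{T+1},x_T)+q_T(x^*)-q_T(x_{T+1})$ with \emph{no} remainder at $x_{T+1}$. Summing the $T$ bounds, the term $\cB_{r_{1:t}}(x^*,x_t)$ (from round $t$) and the term $-\cB_{r_{1:t-1}+q_{t-1}}(x^*,x_t)$ (from round $t-1$) combine --- via (F2) and $r_{1:t}=r_{1:t-1}+p_t+q_{t-1}$, i.e. $\cB_{r_{1:t}}-\cB_{r_{1:t-1}}-\cB_{q_{t-1}}=\cB_{p_t}$ --- into $\cB_{p_t}(x^*,x_t)$; the only endpoint remainder is $\cB_{r_1}(x^*,x_1)=\cB_{p_1}(x^*,x_1)+\cB_{q_0}(x^*,x_1)$, and $\cB_{q_0}(x^*,x_1)\le q_0(x^*)-q_0(x_1)$ by (F1) applied to $x_1\in\argmin_{\cX}q_0$, which supplies the $t=0$ term of the $q$-sum. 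Collecting everything gives \eqref{eq:md-forward-regret}.

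I expect the main difficulties to be two small but essential structural points rather than any lengthy calculation. In \AdaMD, applying the three-point identity uniformly at every round, including $t=T$, would leave a residual $-\cB_{r_{1:T}+q_T}(x^*,x_{T+1})$ that one cannot discard without assuming $r_{1:T}+q_T$ convex; treating the final round with the plain value-comparison optimality inequality is what keeps the statement free of that assumption, and arranging the bookkeeping so that the telescope still closes (leaving only $\cB_{q_0}(x^*,x_1)$ at the start) is the delicate step. In \AdaFTRL, the non-obvious move is noticing that \eqref{eq:ptcond} forces $x_t$ to be a \emph{simultaneous} minimizer of $F_{t-1}$ and of the proximal part $p_t$, which is precisely what lets $\cB_{r_{1:t}}(x_{t+1},x_t)$ be extracted from a be-the-leader estimate. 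The rest is routine care: checking the directional-differentiability, effective-domain, and $\pm\infty$ conditions of \cref{assum:ftrl,assum:md} so that the identities in (F1)--(F2) and the three-point identity are valid, and tracking the index shifts when re-expanding $r_{1:T}$ and $q_{0:T}$ in terms of $p_t$ and $q_t$.
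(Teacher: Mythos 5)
Your proposal is correct and follows essentially the same route as the paper's proof: for \AdaFTRL, the same observation that condition~\eqref{eq:ptcond} makes $x_t$ a simultaneous minimizer of the previous objective and $p_t$, hence of $\dotx{g_{1:t-1},\cdot}+r_{1:t}$, from which the first-order margin extracts $\cB_{r_{1:t}}(x_{t+1},x_t)$ (the paper sums the resulting per-round margin inequalities directly, you telescope the optimal values $\Theta_t$, but these are the same calculation); and for \AdaMD, the same combination of the three-point identity with first-order optimality for $t\le T-1$ and a plain value comparison at $t=T$, followed by the same $\cB_{r_{1:t}}-\cB_{r_{1:t-1}+q_{t-1}}=\cB_{p_t}$ regrouping (the paper folds the $t=0$ boundary into a dummy round, you handle $\cB_{q_0}(x^*,x_1)$ separately via the optimality of $x_1$; same content). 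The careful handling of domains, $\pm\infty$, and the infimum form of directional derivatives that you flag as ``routine care'' is exactly what the paper's auxiliary Lemmas~\ref{lem:optimization-margin} and~\ref{lem:adamd-margin} package.
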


\begin{remark}
  \cref{thm:cheating-regret} does not require the regularizers to be  non-negative or (even non-strongly) convex.\footnote{Nevertheless, such assumptions are useful when combining the theorem with Lemma~\ref{lem:regret-decomposition}}
  Thus, \AdaFTRL and \AdaMD capture algorithmic ideas like a non-monotone regularization sequence as in AdaDelay \citep{sra2016adadelay},
  and \cref{thm:cheating-regret} allows us to extend these techniques to other settings; see also Section~\ref{sec:related-work}.
\end{remark}

\begin{remark}
  In practice, \AdaFTRL and \AdaMD need to pick a specific $x_{t+1}$ from the multiple possible optimal points in \eqref{eq:ftrl-update} and \eqref{eq:md-update}. The bounds of \cref{thm:cheating-regret} apply irrespective of the tie-breaking scheme.
\end{remark}

In subsequent sections, we show that the generality of \AdaFTRL and \AdaMD, together with the flexibility of \cref{assum:ftrl,assum:md}, considerably facilitates the handling of various algorithmic ideas and problem settings, and allows us to combine them without requiring a new analysis for each new combination.

\section{Recoveries and extensions}\label{sec:applications}

Lemma~\ref{lem:regret-decomposition} and Theorem~\ref{thm:cheating-regret} together immediately result in generic upper bounds on the regret, given in \eqref{eq:put-together-ftrl} and \eqref{eq:put-together-md} in Appendix~\ref{sec:standard_proofs}. Under different assumptions on the losses and regularizers,
these generic bounds directly translate into concrete bounds for specific learning settings. We explore these concrete bounds in the rest of this section.

First, we provide a list of the assumptions on the losses and the regularizers for different learning settings.\footnote{In fact, compared to previous work (e.g., the references listed in \cref{sec:intro} and \cref{sec:algorithms}), these are typically relaxed versions of the usual assumptions.}
We consider two special cases of the setting of Section~\ref{sec:online-optimization}: Online optimization and stochastic optimization.
In online optimization, we make the following assumption:
\begin{assumption}[Online optimization setting] \label{assum:online-opt}
  For $t=1,2,\dots,T$, $f_t$ is locally sub-differentiable, and $g_t$ is a local sub-gradient of $f_t$ at $x_t$.
\end{assumption}
Note that $f_t$ may be non-convex, and $g_t$ does not need to define a global lower-bound (i.e., be a sub-gradient) of $f_t$; see Section~\ref{sec:notation-defs} for the formal definition of local sub-gradients.

The stochastic optimization setting is concerned with minimizing a function $f$, defined by $f(x) := \EE_{\xi \sim D}{F(x, \xi)}$.
In this case the performance metric is redefined to be the expected stochastic regret, $\E{R_T(x^*)} = \E{\sum_{t=1}^{T} \left(f(x_t) - f(x^*)\right)}$.\footnote{Indeed, in stochastic optimization the goal is to find an estimate $\hat{x}_{T}$ such that $\E{f(\hat{x}_{T})}-f(x^*)$ is small. It is well-known (e.g., \citealt[Theorem 5.1]{shalev2011online}) that for any $f$, this equals $\E{R_T(x^*)/T}$ if $\hat{x}_T$ is selected uniformly from $x_1,\ldots,x_T$. Also, if $f$ is convex, $\E{f(\hat{x}_{T})}-f(x^*) \le \E{R_T(x^*)/T}$ if
$\hat{x}_T$ is the average of $x_1,\ldots,x_T$ (such averaging can also be used with $\tau$-star convex functions, cf. Section~\ref{sec:tau-star-convex}).  Thus, analyzing the regret is satisfactory.}
Typically, if $F$ is differentiable in $x$, then $g_t = \nabla F(x_t,\xi_t)$, where $\xi_t$ is a random variable, e.g., sampled independently from $D$. In parallel to \cref{assum:online-opt}, we summarize our assumptions for this setting is as follows:
\begin{assumption}[Stochastic optimization setting] \label{assum:stoch-opt}
  The function $f$ (defined above) is locally sub-differentiable,  $f_t=f$ for all $t=1,2,\dots,T$,
  and
  $g_t$ is, in expectation, a local sub-gradient of $f$ at $x_t$: $\E{g_t | x_t} \in \delta f(x_t)$.
\end{assumption}
Again, it is enough for $g_t$ to be an unbiased estimate of a local sub-gradient (Section~\ref{sec:notation-defs}).

In both settings we will rely on the non-negativity of the loss divergences at $x^*$:
\begin{assumption}[Nonnegative loss-divergence]\label{assum:positive-bregman-ft}
  For all $t=1,2,\dots,T$, $\cB_{f_t}(x^*, x_t) \ge 0$.
\end{assumption}
It is well known that this assumption is satisfied when each $f_t$ is convex. However, as we shall see in \cref{sec:non-convex}, this condition also holds for certain classes of non-convex functions (e.g., star-convex functions and more). In the stochastic optimization setting, since $f_t = f$, this condition boils down to $\cB_f(x^*,x_t)\ge 0$, $t=1,2,\dots,T$.

\begin{figure}[t]
  \centering
  \includegraphics[width=0.9\textwidth]{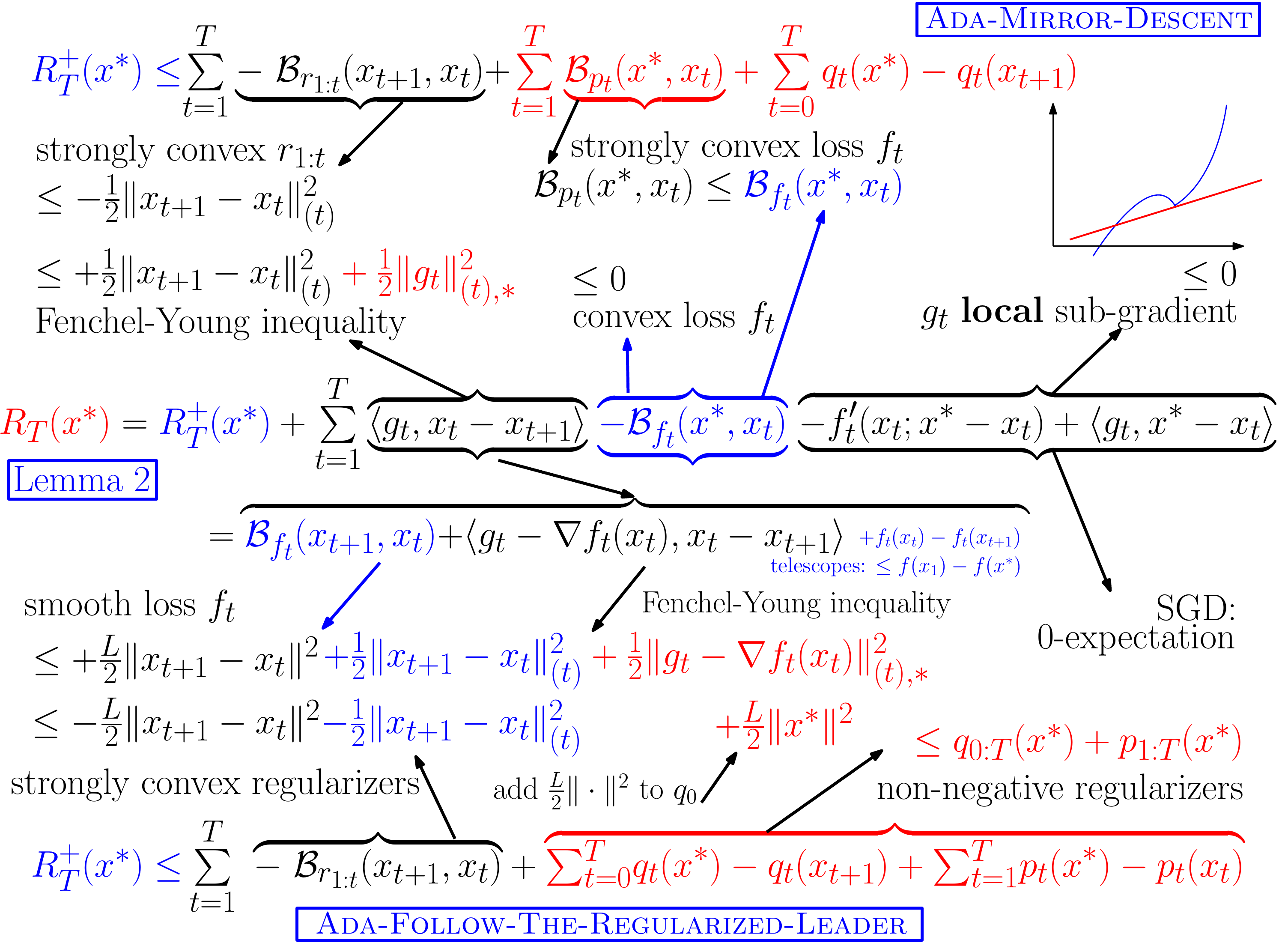}
  \caption{A summary of the proof techniques that incorporate each of the assumptions into regret bounds; see Corollaries~\ref{cor:recoveries}--\ref{cor:recoveries-stoch-smooth} and \cref{tbl:summary-of-results}.
  The identity in the middle is from \cref{lem:regret-decomposition}, whereas the top and bottom bounds on $R_T^+$ are due to \cref{thm:cheating-regret}. Each arrow shows the transformation of one of the terms, using the stated assumption or technique. The matching terms cancel, and the terms shown in red appear in the final bounds.}\label{fig:proofs}
\end{figure}

In both settings, the regret can be reduced when the losses are strongly convex. Furthermore, in the stochastic optimization setting, the smoothness of the loss is also helpful in decreasing the regret. The next two assumptions capture these conditions.
\begin{assumption}[Loss smoothness]\label{assum:smooth-f}
  The function $f$ is differentiable and $1$-smooth w.r.t. some norm $\|\cdot\|$.
\end{assumption}
\begin{assumption}[Loss strong convexity]\label{assum:strong-cvx-loss}
  The losses are 1-strongly convex w.r.t.\ the regularizers, that is, $\cB_{f_t}(x^*,x_t) \ge \cB_{p_t}(x^*,x_t)$ for $t=1,2,\dots,T$.
\end{assumption}
Note that if $p_t,q_{t-1}$ are convex, then it suffices to have $\cB_{r_t}$ in the condition (rather than $\cB_{p_t}$). Typically, if $f_t$ is strongly convex w.r.t. a norm $\|\cdot\|_{(t)}$, then $p_t$ (or $r_t$) is set to $\eta \|\cdot\|_{(t)}$ for some $\eta > 0$. Again, in stochastic optimization, \cref{assum:strong-cvx-loss} simplifies to $\cB_{f}(x^*,x_t)\ge \cB_{p_t}(x^*,x_t)$, $t=1,2,\dots,T$. Furthermore, if $p_t$ is convex, then \cref{assum:strong-cvx-loss} implies that $f_t$ is convex.

Finally, the results that we recover depend on the assumption that the total regularization, in both \AdaFTRL and \AdaMD, is strongly convex:
\begin{assumption}[Strong convexity of regularizers]\label{assum:strong-cvx-regularizer}
  For all $t=1,2,\dots,T$, $r_{1:t}$ is $1$-strongly convex w.r.t.\ some norm $\|\cdot\|_{(t)}$.
\end{assumption}

\begin{table}[t]
\centering
\scalebox{1}{\small
  \begin{tabular*}{\textwidth}{p{0.325\textwidth} p{0.13\textwidth} p{0.49\textwidth}}
    \toprule
    Setting / Algorithms & Assumptions & Regret / Expected Stochastic Regret Bound\\
    \midrule
    OO/SO\newline \AdaFTRL &
    \ref{assum:ftrl}, \ref{assum:online-opt}/\ref{assum:stoch-opt},
    \newline
    \ref{assum:positive-bregman-ft}, \ref{assum:strong-cvx-regularizer}
    &
    {$\sum_{t=0}^{T} \left( q_t(x^*) - q_{t}(x_{t+1}) \right)$}
    \newline
    {$ + \sum_{t=1}^{T} \left( p_t(x^*) - p_t(x_t) \right) + \sum_{t=1}^{T} \frac{1}{2} \|g_t\|_{(t),*}^2$}
    \\
    \midrule
    OO/SO\newline \AdaMD &
    \ref{assum:md}, \ref{assum:online-opt}/\ref{assum:stoch-opt},
    \newline
    \ref{assum:positive-bregman-ft}, \ref{assum:strong-cvx-regularizer}
    &
    {$\sum_{t=0}^{T} \left( q_t(x^*) - q_{t}(x_{t+1}) \right)$}
    \newline
    {$ + \sum_{t=1}^{T} \cB_{p_t}(x^*,x_t) + \frac{1}{2} \|g_t\|_{(t),*}^2$}
     \\
    \midrule
    Strongly-convex OO/SO\newline \AdaMD &
    \ref{assum:md}, \ref{assum:online-opt}/\ref{assum:stoch-opt},
    \newline
    (\ref{assum:positive-bregman-ft}), \ref{assum:strong-cvx-regularizer}, \ref{assum:strong-cvx-loss}
    &
    {$\sum_{t=0}^{T} \left( q_t(x^*) - q_{t}(x_{t+1}) \right)$}
    \newline
    {$ + \sum_{t=1}^{T} \frac{1}{2} \|g_t\|_{(t),*}^2$}
     \\
    \midrule
     Smooth SO\newline \AdaFTRL &
    \ref{assum:ftrl},  \ref{assum:stoch-opt}, \ref{assum:smooth-f},
    \newline
    \ref{assum:positive-bregman-ft}, \ref{assum:strong-cvx-regularizer}'
    &
    {$\frac{1}{2} \|x^*\|^2 + D + \sum_{t=0}^{T} \left( q_t(x^*) - q_{t}(x_{t+1}) \right)$}
    \newline
    {$ + \sum_{t=1}^{T} \left( p_t(x^*) - p_t(x_t) \right) + \sum_{t=1}^{T} \frac{1}{2} \|\sigma_t\|_{(t),*}^2$}
     \\
    \midrule
     Smooth SO\newline \AdaMD &
     \ref{assum:md}, \ref{assum:stoch-opt}, \ref{assum:smooth-f},
     \newline
     \ref{assum:positive-bregman-ft}, \ref{assum:strong-cvx-regularizer}'
     &
     {$\frac{1}{2} \|x^*-x_1\|^2 + D + \sum_{t=0}^{T} \left( q_t(x^*) - q_{t}(x_{t+1}) \right)$}
     \newline
     {$ + \sum_{t=1}^{T} \cB_{p_t}(x^*,x_t) + \frac{1}{2}  \|\sigma_t\|_{(t),*}^2$}
    \\
    \midrule
    Smooth \& strongly-convex SO\newline \AdaMD &
    \ref{assum:md}, \ref{assum:stoch-opt}, \ref{assum:smooth-f},
    \newline
    (\ref{assum:positive-bregman-ft}), \ref{assum:strong-cvx-regularizer}', \ref{assum:strong-cvx-loss}
    &
    {$\frac{1}{2} \|x^* - x_1\|^2 + D + \sum_{t=0}^{T} \left( q_t(x^*) - q_{t}(x_{t+1}) \right)$}
    \newline
    {$ + \sum_{t=1}^{T} \frac{1}{2} \|\sigma_t\|_{(t),*}^2$}\\
    \bottomrule
  \end{tabular*}}
  \caption{Recovered and generalized standard results for online optimization (OO) and stochastic optimization (SO); see Corollaries~\ref{cor:recoveries}--\ref{cor:recoveries-stoch-smooth}. A number in parentheses indicates that the assumption is not directly required, but is implied by the other assumptions.
  In the bounds above, $\sigma_t := g_t-\nabla f(x_t)$, $D:=f(x_1) - \inf_{\cX} f(x)$, and \ref{assum:strong-cvx-regularizer}' refers to a slightly modified version of Assumption~\ref{assum:strong-cvx-regularizer}, as described in \cref{cor:recoveries-stoch-smooth}.
  Note that setting $q_T=0$ recovers the \emph{off-by-one} property \citep{mcmahan2014survey} in \FTCentered vs. \FTProx; \AdaMD exhibits a similar property.
  For composite-objective \AdaFTRL and \AdaMD, these same bounds apply with $\tl{q}_t$ in place of $q_t$ in the summation; see \cref{tbl:ftrl-special-cases} and \cref{sec:composite-objective}.}
  \label{tbl:summary-of-results}
\end{table}

Table~\ref{tbl:summary-of-results} provides a summary of the standard results, under different sub-sets of the assumptions above, that are recovered and generalized using our framework. The derivations of these results are provided in the form of three corollaries in Appendix~\ref{sec:standard_proofs}. Note that the analysis is absolutely modular: each assumption is simply plugged into \eqref{eq:put-together-ftrl} or \eqref{eq:put-together-md} to obtain the final bounds, without the need for a separate analysis of \AdaFTRL and \AdaMD for each individual setting. A schematic view of the (standard) proof ideas is given in Figure~\ref{fig:proofs}.

\section{Composite-objective learning and optimization}
\label{sec:composite-objective}
Next, we consider the composite-objective online learning setting. In this setting, the functions $f_t$, from which the (local sub-)gradients $g_t$ are generated and fed to the algorithm, comprise only part of the loss. Instead of $R_T(x^*)$, we are interested in minimizing the regret
\begin{align*}
  R_T^{(\ell)}(x^*) := \sum_{t=1}^{T} f_t(x_t) + \psi_t(x_t) - f_t(x^*) - \psi_t(x^*) = R_T(x^*) + \sum_{t=1}^{T} \psi_t(x_t) - \psi_t(x^*) \,,
\end{align*}
using the feedback $g_t \in \delta f_t(x_t)$, where $\psi_t:\cH \to \ExReals$ are proper functions. The functions $\psi_t$ are not linearized, but are passed directly to the algorithm.

\newcounter{para}
Naturally, one can use the $q_t$ regularizers to pass the $\psi_t$ functions to \AdaFTRL and \AdaMD. Then, we can obtain the exact same bounds as in \cref{tbl:summary-of-results} on the composite regret $R^{(\ell)}_T(x^*)$; this recovers and extends the corresponding bounds by \citet{xiao2009dual,duchi2011adaptive,duchi2010composite,mcmahan2014survey}. In particular, consider the following two scenarios:
\refstepcounter{para}
\paragraph{Setting \thepara: $\psi_{t}$ is known before predicting $x_{t}$.}
\label{senarios:psi-t-known}
In this case, we run \AdaFTRL or \AdaMD with $q_t = \psi_{t+1} + \tl{q}_t, t=0,1,2,\dots,T$ (where $\psi_{T+1}:=0$). Thus, we have the update
\begin{align}
   x_{t+1} \in \argmin_{x \in \cX}\ \dotx{g_{1:t}, x} + \psi_{1:t+1}(x) + \tl{q}_{0:t}(x) + p_{1:t}(x)\,,
   \label{eq:composite-ftrl-update-known-psi}
\end{align}
for \AdaFTRL, and
\begin{align}
	x_{t+1} \in \argmin_{x \in \cX}\ \dotx{g_{t}, x} + \psi_{t+1}(x) + \tl{q}_t(x) + \cB_{r_{1:t}}(x, x_t)\,,
	\label{eq:composite-md-update-known-psi}
\end{align}
for \AdaMD. Then, we have the following result.
\begin{corollary}\label{cor:composite-obj-known-psi}
  Suppose that the iterates $x_1, x_2, \dots, x_{T+1}$ are given by the \AdaFTRL update \eqref{eq:composite-ftrl-update-known-psi} or the \AdaMD update \eqref{eq:composite-md-update-known-psi},
  and $q_t, p_t,$ and $r_t$  satisfy \cref{assum:ftrl} for \AdaFTRL, or \cref{assum:md} for \AdaMD.
  Then, under the conditions of each section of \cref{cor:recoveries,cor:recoveries-stoch,cor:recoveries-stoch-smooth}, the composite regret $R^{(\ell)}_T(x^*)$ enjoys the same bound as $R_T(x^*)$,
  but with $\sum_{t=0}^{T} \tl{q}_t(x^*) - \tl{q}_t(x_{t+1})$ in place of $\sum_{t=0}^{T} q_t(x^*) - q_t(x_{t+1})$.
\end{corollary}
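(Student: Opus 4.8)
The plan is to derive the bound for $R^{(\ell)}_T(x^*)$ directly from the already-proven bounds of \cref{cor:recoveries,cor:recoveries-stoch,cor:recoveries-stoch-smooth}, with no new analysis of \AdaFTRL or \AdaMD, by combining two observations. The first is the elementary identity
\[
R^{(\ell)}_T(x^*) = R_T(x^*) + \sum_{t=1}^{T}\bigl(\psi_t(x_t) - \psi_t(x^*)\bigr),
\]
which is just the definition of the composite regret. The second is that the composite updates \eqref{eq:composite-ftrl-update-known-psi} and \eqref{eq:composite-md-update-known-psi} are \emph{exactly} the ordinary \AdaFTRL update \eqref{eq:ftrl-update} and \AdaMD update \eqref{eq:md-update} run with the $q$-regularizers $q_t := \psi_{t+1} + \tl q_t$, $t=0,1,\dots,T$ (with $\psi_{T+1}:=0$), keeping the remaining regularizers --- $p_t$ for \AdaFTRL, $r_t$ for \AdaMD --- and the norms $\|\cdot\|_{(t)}$ unchanged: indeed $q_{0:t} = \psi_{1:t+1} + \tl q_{0:t}$ turns \eqref{eq:ftrl-update} into \eqref{eq:composite-ftrl-update-known-psi}, and $q_t = \psi_{t+1} + \tl q_t$ turns \eqref{eq:md-update} into \eqref{eq:composite-md-update-known-psi}. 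This identification is legitimate (causal) precisely because $\psi_{t+1}$ is revealed before $x_{t+1}$ is chosen in the setting under consideration.

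Given these observations, I would proceed as follows. By the hypothesis that $q_t,p_t,r_t$ satisfy \cref{assum:ftrl} (resp.\ \cref{assum:md}) and the conditions of the relevant section of \cref{cor:recoveries,cor:recoveries-stoch,cor:recoveries-stoch-smooth}, that corollary applies to the iterates of \eqref{eq:composite-ftrl-update-known-psi}/\eqref{eq:composite-md-update-known-psi} and bounds $R_T(x^*)$; in every such bound the $q$-regularizers enter only through the single term $\sum_{t=0}^{T}\bigl(q_t(x^*) - q_t(x_{t+1})\bigr)$. Expanding this term using $q_t = \psi_{t+1} + \tl q_t$ and re-indexing the $\psi$-part by $s=t+1$, with $\psi_{T+1}\equiv 0$, gives
\[
\sum_{t=0}^{T}\bigl(q_t(x^*) - q_t(x_{t+1})\bigr) = \sum_{t=0}^{T}\bigl(\tl q_t(x^*) - \tl q_t(x_{t+1})\bigr) + \sum_{s=1}^{T}\bigl(\psi_s(x^*) - \psi_s(x_s)\bigr).
\]
Substituting into the bound on $R_T(x^*)$ and then adding $\sum_{t=1}^{T}\bigl(\psi_t(x_t) - \psi_t(x^*)\bigr)$ as prescribed by the composite-regret identity, the two $\psi$-sums cancel term by term, leaving exactly the stated bound --- namely the bound of \cref{tbl:summary-of-results} with $\sum_{t=0}^{T}\bigl(\tl q_t(x^*) - \tl q_t(x_{t+1})\bigr)$ in place of $\sum_{t=0}^{T}\bigl(q_t(x^*) - q_t(x_{t+1})\bigr)$. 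In the stochastic settings the same computation is carried out under the expectation; the $\psi$-terms are (measurable) functions of the iterates, so the cancellation is unaffected.

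The step that needs the most care is the bookkeeping that guarantees the substitution $q_t \mapsto \psi_{t+1} + \tl q_t$ disturbs nothing in the base bounds except the $q$-term. For \AdaFTRL this is immediate: $p_t$ is a free regularizer that we leave untouched, so the contribution $\sum_t\bigl(p_t(x^*) - p_t(x_t)\bigr)$ is literally unchanged; one only needs the modified total regularizer $r_{1:t} = p_{1:t} + q_{0:t-1}$ to still satisfy \cref{assum:strong-cvx-regularizer} (or its primed variant), which holds because adding the convex $\psi_{1:t}$ preserves $1$-strong convexity w.r.t.\ the same norm $\|\cdot\|_{(t)}$. For \AdaMD, $r_t$ is the free regularizer and is left untouched, so $\cB_{r_{1:t}}$ and the norms are unchanged; the derived quantity $p_t := r_t - q_{t-1}$ does change (by $-\psi_t$), but the term $\cB_{p_t}(x^*,x_t)$ appearing in \eqref{eq:md-forward-regret} and in the \AdaMD rows of \cref{tbl:summary-of-results} is stated in terms of exactly this $p_t$, so the bound transfers verbatim (with $\tl q_t$ in the $q$-term). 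Finally, I would note that $x_t\in\dom(\psi_t)$, needed for $R^{(\ell)}_T(x^*)$ to be well-defined, follows from \cref{assum:ftrl}/\cref{assum:md} since the $\argmin$ defining $x_t$ has finite optimal value.
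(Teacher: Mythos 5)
Your proof is correct and takes essentially the same route as the paper: both rest on the identity $R^{(\ell)}_T(x^*) = R_T(x^*) + \sum_{t=1}^{T}\bigl(\psi_t(x_t)-\psi_t(x^*)\bigr)$ together with the observation that, under $q_t = \psi_{t+1}+\tl q_t$ (with $\psi_{T+1}=0$), the $\psi$-sums cancel once the base bound on $R_T(x^*)$ is substituted. The paper's proof is terser, stating the cancellation identity directly rather than via re-indexing, but the content is identical; your extra bookkeeping about the roles of $p_t$, $r_t$, the norms, and domain feasibility is accurate though not strictly required for the argument.
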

\begin{proof}
  By definition, $\sum_{t=0}^{T} \tl{q}_t(x^*) - \tl{q}_t(x_{t+1}) -q_t(x^*) + q_t(x_{t+1}) = \sum_{t=1}^{T} \psi_t(x_t) - \psi_t(x^*)$.
  Thus, $R^{(\ell)}_T(x^*) = R_T(x^*) + \sum_{t=0}^{T} \tl{q}_t(x^*) - \tl{q}_t(x_{t+1}) -q_t(x^*) + q_t(x_{t+1})$.
  Upper-bounding $R_T(x^*)$ by the aforementioned corollaries completes the proof.
\end{proof}
\refstepcounter{para}
\paragraph{Setting \thepara: $\psi_t$ is revealed after predicting $x_t$, together with $g_t$.}
\label{senarios:psi-t-revealed}
In this case, we run \AdaFTRL and \AdaMD with functions $q_0 = \tl{q}_0$, $q_t = \psi_{t} + \tl{q}_t, t=1,2,\dots,T$, so that
\begin{align}
   x_{t+1} \in \argmin_{x \in \cX}\ \dotx{g_{1:t}, x} + \psi_{1:t}(x) + \tl{q}_{0:t}(x) + p_{1:t}(x)\,,
   \label{eq:composite-ftrl-update-hidden-psi}
\end{align}
for \AdaFTRL, and
\begin{align}
	x_{t+1} \in \argmin_{x \in \cX}\ \dotx{g_{t}, x} + \psi_{t}(x) + \tl{q}_t(x) + \cB_{r_{1:t}}(x, x_t)\,,
	\label{eq:composite-md-update-hidden-psi}
\end{align}
for \AdaMD. Then, we have the following result, proved in \cref{apx:composite-proofs}.
\begin{corollary}\label{cor:composite-obj-hidden-psi}
  Suppose that the iterates $x_1, x_2, \dots, x_{T+1}$ are given by the \AdaFTRL update \eqref{eq:composite-ftrl-update-hidden-psi} or the \AdaMD update \eqref{eq:composite-md-update-hidden-psi}, and $q_t, p_t,$ and $r_t$  satisfy \cref{assum:ftrl} for \AdaFTRL, or \cref{assum:md} for \AdaMD.
  Also, assume that $\psi_1(x_1) = 0$ and the $\psi_t$ are non-negative and non-increasing, i.e., that $\psi_1 \ge \psi_2 \ge \dots \ge \psi_{T+1} := 0$.\footnote{This relaxes the assumption in the literature, e.g., by \citet{mcmahan2014survey}, that $\psi_t = \alpha_t \psi$ for some fixed, non-negative $\psi$ minimized at $x_1$, and a non-increasing sequence $\alpha_t > 0$ (e.g., $\alpha_t = 1$); see also Setting~\ref{senarios:psi-t-known}.}
  Then, under the conditions of each section of \cref{cor:recoveries,cor:recoveries-stoch,cor:recoveries-stoch-smooth}, the composite regret $R^{(\ell)}_T(x^*)$ enjoys the same bound as $R_T(x^*)$, but with $\sum_{t=0}^{T} \tl{q}_t(x^*) - \tl{q}_t(x_{t+1})$ in place of $\sum_{t=0}^{T} q_t(x^*) - q_t(x_{t+1})$.
\end{corollary}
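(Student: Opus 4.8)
The plan is to reduce this to Corollaries~\ref{cor:recoveries}–\ref{cor:recoveries-stoch-smooth} together with one extra telescoping argument exploiting the monotonicity of the $\psi_t$; this telescoping is the only part that genuinely differs from the trivial bookkeeping behind \cref{cor:composite-obj-known-psi} (there the off-by-one is absent and the analogous leftover vanishes identically). First I would observe that, with $q_0 = \tl q_0$ and $q_t = \psi_t + \tl q_t$ for $t\ge1$, the composite updates \eqref{eq:composite-ftrl-update-hidden-psi} and \eqref{eq:composite-md-update-hidden-psi} are \emph{exactly} the \AdaFTRL update \eqref{eq:ftrl-update} and the \AdaMD update \eqref{eq:md-update} run with these regularizers: the proximal parts $p_t = \tl p_t$ and the $r_t$ are unchanged, and since $\psi_t$ is revealed by the end of round $t$, choosing $q_t = \psi_t+\tl q_t$ is an admissible adaptive choice. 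The hypotheses of \cref{assum:ftrl}/\cref{assum:md} and of the relevant base corollary are assumed to hold for these $q_t,p_t,r_t$, so that corollary applies verbatim and yields
\[
  R_T(x^*) \;\le\; A \;+\; \sum_{t=0}^{T}\bigl(q_t(x^*) - q_t(x_{t+1})\bigr),
\]
where $A$ collects the remaining terms of the corresponding bound in \cref{tbl:summary-of-results} (the $p_t$-terms, the $\tfrac12\|g_t\|_{(t),*}^2$ or $\tfrac12\|\sigma_t\|_{(t),*}^2$ terms, and, in the smooth cases, $D$ and $\tfrac12\|x^*-x_1\|^2$), none of which involves the $q_t$ explicitly.

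Next I would unfold $R_T^{(\ell)}(x^*) = R_T(x^*) + \sum_{t=1}^{T}\bigl(\psi_t(x_t)-\psi_t(x^*)\bigr)$, substitute the displayed bound, and use $q_t-\tl q_t = \psi_t$ ($t\ge1$) and $q_0=\tl q_0$ to rewrite $\sum_{t=0}^{T}\bigl(q_t(x^*)-q_t(x_{t+1})\bigr)$ as $\sum_{t=0}^{T}\bigl(\tl q_t(x^*)-\tl q_t(x_{t+1})\bigr) + \sum_{t=1}^{T}\bigl(\psi_t(x^*)-\psi_t(x_{t+1})\bigr)$. The $\psi_t(x^*)$ contributions then cancel, reducing the claim to the inequality $\sum_{t=1}^{T}\bigl(\psi_t(x_t)-\psi_t(x_{t+1})\bigr)\le 0$.

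This last inequality is where the extra hypotheses on the $\psi_t$ enter, and it is the main (indeed only nontrivial) step. I would reindex to telescope,
\[
  \sum_{t=1}^{T}\bigl(\psi_t(x_t)-\psi_t(x_{t+1})\bigr)
  = \psi_1(x_1) - \psi_T(x_{T+1}) + \sum_{t=2}^{T}\bigl(\psi_t(x_t)-\psi_{t-1}(x_t)\bigr),
\]
and then bound each piece: $\psi_1(x_1)=0$ by hypothesis, $\psi_T(x_{T+1})\ge0$ by nonnegativity of $\psi_T$, and $\psi_t(x_t)-\psi_{t-1}(x_t)\le0$ for every $t\ge2$ since $\psi_{t-1}\ge\psi_t$ pointwise; hence the sum is $\le0$. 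Combining with the previous paragraph gives $R_T^{(\ell)}(x^*)\le A + \sum_{t=0}^{T}\bigl(\tl q_t(x^*)-\tl q_t(x_{t+1})\bigr)$, which is precisely the asserted bound. I do not anticipate a substantive obstacle: the care points are (i) checking the update-identification and the inheritance of all required assumptions in the first step uniformly for both \AdaFTRL and \AdaMD, and (ii) the index arithmetic in the telescope together with the harmless proviso that the $\psi$-terms are read as genuine reals (e.g.\ because $\cX\subseteq\dom\psi_t$), so all the cancellations above are legitimate.
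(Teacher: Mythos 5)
Your proof is correct and follows essentially the same route as the paper's: both rest on the same telescoping reindexing of $\sum_{t=1}^T(\psi_t(x_t)-\psi_t(x_{t+1}))$ together with nonnegativity, monotonicity of $(\psi_t)$, and $\psi_1(x_1)=0$; the only difference is the (immaterial) order in which you invoke the base corollary versus perform the $q_t=\psi_t+\tl{q}_t$ substitution and the telescope.
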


\begin{remark}
In both settings, the functions  $\psi_t$ are passed as part of the  regularizers $q_t$. Thus, if the $\psi_t$ are strongly convex, less additional regularization is needed in \AdaFTRL to ensure the strong convexity of $r_{1:t}$ because $q_{0:t-1}$ will already have some strongly convex components. In addition, in \AdaMD, when the $\psi_t$ are convex, the $\cB_{p_t}$ terms in \eqref{eq:md-forward-regret} will be smaller than the $\cB_{r_t}$ terms found in previous analyses of MD. This is especially useful for implicit updates, as shown in the next section. This also demonstrates another benefit of the generalized Bregman divergence: the $\psi_t$, and hence the $p_t$, may be non-smooth in general.
\end{remark}

\section{Implicit-update \AdaMD and non-linearized \AdaFTRL}
\label{sec:implicit-update}
Other learning settings can be captured using the idea of passing information to the algorithm using the $q_t$ functions. This information could include, for example, the curvature of the loss. In particular, consider the composite-objective \AdaFTRL and \AdaMD, and for $t=1,2,\dots,T$, let $\ell_t$ be a differentiable loss, $f_t = \dotx{\nabla \ell_t(x_t), x - x_t}$, and $\psi_t = \cB_{\ell_t}(\cdot, x_t)$.\footnote{
For non-differentiable $\ell_t$, let $f_t = \dotx{g_t, \cdot}$ and $\psi_t = \cB_{\ell_t}(\cdot, x_t) + \ell'(x_t; \cdot - x_t) - \dotx{g_t, \cdot}$ to get the same effect.
}
Then, $\ell_t = f_t + \psi_t$, $g_t = \nabla f_t(x_t) = \nabla \ell_t(x_t)$, and the composite-objective \AdaFTRL update \eqref{eq:composite-ftrl-update-hidden-psi} is equivalent to
\begin{align}
   x_{t+1} \in \argmin_{x \in \cX}\ \ell_{1:t}(x) + \tl{q}_{0:t}(x) + p_{1:t}(x)\,.
   \label{eq:implicit-ftrl-update}
\end{align}
Thus, non-linearized FTRL, studied by \citet{mcmahan2014survey}, is a special case of \AdaFTRL. With the same $f_t, \psi_t$, the composite-objective \AdaMD update~\eqref{eq:composite-md-update-hidden-psi} is equivalent to
\begin{align}
	x_{t+1} \in \argmin_{x \in \cX}\ \ell_{t}(x) + \tl{q}_t(x) + \cB_{r_{1:t}}(x, x_t)\,,
	\label{eq:implicit-md-update}
\end{align}
so the implicit-update MD is also a special case of \AdaMD.

Again, combining \cref{lem:regret-decomposition,thm:cheating-regret} results in a compact analysis of these algorithmic ideas.
In particular, for both updates \eqref{eq:implicit-ftrl-update} and \eqref{eq:implicit-md-update}, the bounds of \eqref{eq:put-together-ftrl} and \eqref{eq:put-together-md} apply on the regret in $f_t$.
Then, moving the terms $\psi_t(x^*)$ to the left turns each bound to a bound on the regret in $\ell_t$.
Furthermore, the terms $ - \psi_t(x_{t+1}) = - \cB_{\ell_t}(x_{t+1}, x_t)$ that remained on the right-hand side can be merged with the $-\cB_{r_{1:t}}(x_{t+1}, x_t)$ terms.
Thus, instead of $r_{0:t}$, it is enough for $r_{1:t} + \ell_t$ to be strongly convex w.r.t. the norm $\| \cdot \|_{(t)}$ (see the proofs of \cref{cor:recoveries,cor:recoveries-stoch,cor:recoveries-stoch-smooth}).
This means that if $\ell_t$ are strongly convex, then no further regularization is required: $\ell_{1:t}$ is strongly convex, and we get back the well-known logarithmic bounds for strongly-convex FTRL (Follow-The-Leader) and implicit-update MD
\citep{kivinen1997exponentiated,kulis2010implicit,shalev2009mind,hazan2007logarithmic,orabona2015generalized,mcmahan2014survey}.
In addition, as mentioned before, convexity of $\ell_t$ further reduces the term $\cB_{p_t}$ in implicit-update MD.

Finally, note that multiple pieces of information can be passed to the algorithm through $q_t$. In particular, none of the above interfere with further use of another composite term $\phi_t$ and obtaining regret bounds on $\ell_t + \phi_t$, as discussed in \cref{sec:composite-objective}.

\section{Adaptive optimistic learning and variational bounds}
\label{sec:optimistic-learning}
The goal of optimistic online learning algorithms \citep{rakhlin2013online,rakhlin2013optimization} is to obtain improved regret bounds when playing against ``easy'' (i.e., predictable) sequences of losses. This includes algorithms with regret rates that grow with the total ``variation'', i.e., the sum of the norms of the differences between pairs of consecutive losses $f_t$ and $f_{t+1}$ observed in the loss sequence: the regret will be small if the loss sequence changes slowly \citep{chiang2012online}.

Recently, \citet{mohri2016accelerating} proposed an interesting comprehensive framework for analyzing adaptive FTRL algorithms for predictable sequences. The framework has also been extended to MD by \citet{kamalaruban2016improved}.
However, despite their generality, the regret analyses of \citet{mohri2016accelerating} and \citet{kamalaruban2016improved} can be strengthened.
Specifically, the two analyses do not recover the variation-based results of \citet{chiang2012online} for smooth losses. In addition, their treatment of composite objectives introduces complications, e.g., only applies to Setting~\ref{senarios:psi-t-known} of \cref{sec:composite-objective} where $\psi_t$ is known before selecting $x_t$.

The flexibility of the framework introduced in this paper allows us to alleviate these and other limitations. In particular, we cast the Adaptive Optimistic FTRL (AO-FTRL) algorithm of \citet{mohri2016accelerating} as a special case of \AdaFTRL, and obtain a much simpler form of Adaptive Optimistic MD (AO-MD) as a special case of \AdaMD. Then, we strengthen and simplify the corresponding analyses, and recover and extend the results of \citet{chiang2012online}.
Finally, building on the modularity of our framework, we obtain an adaptive composite-objective algorithm with variational bounds that improves upon the results of \citet{mohri2016accelerating,kamalaruban2016improved,chiang2012online,rakhlin2013online,rakhlin2013optimization}.

\subsection{Adaptive optimistic FTRL}\label{sec:ao-ftrl}

Consider the online optimization setting of \cref{sec:applications} (\cref{assum:online-opt}). Suppose that the losses $f_1, f_2,\dots, f_T$ satisfy \cref{assum:positive-bregman-ft} (e.g., they are convex), and the sequence of points $x_{t+1}, t=0,1,2,\dots,T$ is given by
\begin{align*}
 x_{t+1} \in \argmin_{x \in \cX}\ \dotx{g_{1:t}+\tl{g}_{t+1}, x} + p_{1:t}(x) + \tl{q}_{0:t}(x)\,,
\end{align*}
where $\tl{g}_{t}, t=1,2,\dots, T+1,$ is any sequence of vectors in $\cH$. That is, we run \AdaFTRL, but we also incorporate $\tl{g}_{t+1}$ as a ``guess'' of the future loss $g_{t+1}$ that the algorithm will suffer. \citet{mohri2016accelerating} refer to this algorithm as AO-FTRL.

It is easy to see that AO-FTRL is a special case of \AdaFTRL: Define $\tl{g}_0 := 0$,\footnote{
This is different from the restriction in \citet{mohri2016accelerating} that $\tl{g}_1$ be $0$; we do not require that restriction. In particular, we allow $x_1$ to depend on $\tl{g}_1$, which can be arbitrary.
}
and for $t=0,1,\dots,T$, let $q_t = \tl{q}_t + \dotx{\tl{g}_{t+1}-\tl{g}_t, \cdot}$. Then, $q_{0:t} = \tl{q}_{0:t} + \dotx{\tl{g}_{t+1}, \cdot}$, so we have
\begin{align*}
  x_{t+1} \in \argmin_{x \in \cX}\ \dotx{g_{1:t}, x} + p_{1:t}(x) + q_{0:t}(x)\,,
\end{align*}
which is the \AdaFTRL update with this specific choice of $q_t$. Thus, the exact same manipulations as in \cref{cor:recoveries} give the following theorem, proved in \cref{apx:optimistic-learning-proofs}:
\begin{theorem}\label{thm:ao-ftrl-bound}
  If the losses satisfy \cref{assum:positive-bregman-ft}, and the regularizers $q_0$ and $p_t, q_t, t=1,2,\dots,T$, satisfy \cref{assum:ftrl,assum:strong-cvx-regularizer}, then the regret of AO-FTRL is bounded as
  \begin{align}
  R_T(x^*) \le &\ \sum_{t=0}^{T-1} \left( \tl{q}_t(x^*) - \tl{q}_{t}(x_{t+1}) \right) + \sum_{t=1}^T \left( p_t(x^*) - p_t(x_t) \right) + \sum_{t=1}^{T} \frac{1}{2} \|g_t - \tl{g}_{t} \|_{(t),*}^2\,.
  \label{eq:ao-ftrl-bound}
  \end{align}
\end{theorem}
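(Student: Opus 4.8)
The plan is to instantiate the generic \AdaFTRL regret bound obtained by combining \cref{lem:regret-decomposition} with \eqref{eq:ftrl-forward-regret} of \cref{thm:cheating-regret}, using the regularizer choice that realizes AO-FTRL, namely $q_t = \tl{q}_t + \dotx{\tl{g}_{t+1}-\tl{g}_t,\cdot}$ with $\tl{g}_0 = 0$ (so that \cref{assum:ftrl,assum:strong-cvx-regularizer}, stated for these $q_t$, are in force), and then to repeat the manipulations behind \cref{cor:recoveries} while carefully tracking the linear ``guess'' terms. First, \cref{lem:regret-decomposition} gives $R_T(x^*) = R^{+}_T(x^*) + \sum_{t=1}^T \dotx{g_t, x_t - x_{t+1}} - \sum_{t=1}^T \cB_{f_t}(x^*,x_t) + \sum_{t=1}^T \delta_t$. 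Under \cref{assum:online-opt}, $g_t$ is a local sub-gradient of $f_t$ at $x_t$, so $\dotx{g_t, x^*-x_t} \le f_t'(x_t; x^*-x_t)$ and hence $\delta_t \le 0$; under \cref{assum:positive-bregman-ft}, $-\cB_{f_t}(x^*,x_t) \le 0$. Bounding $R^+_T(x^*)$ by \eqref{eq:ftrl-forward-regret} then yields
\begin{align*}
  R_T(x^*) \le \sum_{t=0}^{T} \big( q_t(x^*) - q_t(x_{t+1}) \big) + \sum_{t=1}^{T} \big( p_t(x^*) - p_t(x_t) \big) + \sum_{t=1}^{T} \big( \dotx{g_t, x_t - x_{t+1}} - \cB_{r_{1:t}}(x_{t+1}, x_t) \big)\,.
\end{align*}

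Next I would substitute $q_t = \tl{q}_t + \dotx{\tl{g}_{t+1}-\tl{g}_t,\cdot}$. Since adding an affine term leaves both the induced Bregman divergence and the strong-convexity modulus (and directional differentiability, properness) unchanged, $\cB_{r_{1:t}}$ and the hypotheses \cref{assum:ftrl,assum:strong-cvx-regularizer} are insensitive to this modification. Expanding $\sum_{t=0}^{T}(q_t(x^*)-q_t(x_{t+1}))$, the part $\sum_{t=0}^T\dotx{\tl{g}_{t+1}-\tl{g}_t, x^*}$ telescopes (to $\dotx{\tl{g}_{T+1},x^*}$, using $\tl{g}_0=0$), and reindexing the part involving $x_{t+1}$ reorganizes the remaining linear terms into $\sum_{t=1}^{T}\dotx{\tl{g}_t, x_{t+1}-x_t}$ plus an endpoint term in $\tl{g}_{T+1}$. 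Adding the $\sum_{t=1}^T \dotx{g_t, x_t-x_{t+1}}$ already present then gives exactly $\sum_{t=1}^{T}\dotx{g_t-\tl{g}_t, x_t-x_{t+1}}$. Finally, \cref{assum:strong-cvx-regularizer} gives $\cB_{r_{1:t}}(x_{t+1},x_t) \ge \tfrac12\|x_{t+1}-x_t\|_{(t)}^2$, and the Fenchel--Young inequality $\dotx{a,b} - \tfrac12\|b\|_{(t)}^2 \le \tfrac12\|a\|_{(t),*}^2$ converts $\dotx{g_t-\tl{g}_t, x_t-x_{t+1}} - \cB_{r_{1:t}}(x_{t+1},x_t)$ into $\tfrac12\|g_t-\tl{g}_t\|_{(t),*}^2$, producing the third sum in \eqref{eq:ao-ftrl-bound}.

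The main bookkeeping obstacle is the endpoint term and the off-by-one in the first sum (\eqref{eq:ao-ftrl-bound} has $\sum_{t=0}^{T-1}$, not $\sum_{t=0}^{T}$). The resolution is that $x_{T+1}$ — and hence $\tl{q}_T$ and $\tl{g}_{T+1}$, which enter the algorithm only through $x_{T+1}$ — do not affect $R_T(x^*)$, so one may take $\tl{q}_T \equiv 0$ and $\tl{g}_{T+1} = 0$ without loss of generality; this simultaneously removes the $t=T$ summand of $\sum_{t=0}^{T}(\tl{q}_t(x^*)-\tl{q}_t(x_{t+1}))$ and annihilates the $\tl{g}_{T+1}$ endpoint term (alternatively, these $t=T$ contributions can be absorbed using the optimality of $x_{T+1}$ in \eqref{eq:ftrl-update}). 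The other point requiring care is purely one of ordering: the telescoping of the guess terms must be carried out \emph{before} invoking Fenchel--Young, since applying it to $\dotx{g_t, x_t-x_{t+1}}$ directly, as in \cref{cor:recoveries}, would yield $\tfrac12\|g_t\|_{(t),*}^2$ rather than the desired variational quantity $\tfrac12\|g_t-\tl{g}_t\|_{(t),*}^2$.
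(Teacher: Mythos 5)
Your proposal is correct and follows essentially the same route as the paper: combine \cref{lem:regret-decomposition} with the \AdaFTRL forward-regret bound, drop the $\delta_t$ and $-\cB_{f_t}$ terms, substitute and telescope the linear guess terms so that Fenchel--Young acts on $\dotx{g_t - \tl{g}_t, x_t - x_{t+1}}$, and finally observe that $\tl{q}_T$ and $\tl{g}_{T+1}$ can be set to zero without affecting $R_T(x^*)$. Both of the subtleties you flag explicitly --- the need to telescope before invoking Fenchel--Young, and the elimination of the $t=T$ and $\tl{g}_{T+1}$ endpoint terms by the WLOG argument --- are exactly the points the paper's proof handles in the same way.
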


This bound recovers Theorems 1 and 2 of \citet{mohri2016accelerating}. Similarly, one could prove parallels of \cref{cor:recoveries-stoch,cor:recoveries-stoch-smooth} for AO-FTRL. Then, the modularity property allows us (as we do in \cref{sec:scale-free}) to apply the composite-objective technique of \cref{sec:composite-objective} and recover Theorems 3-7 of \citet{mohri2016accelerating} (and hence their corollaries).  Indeed, the resulting analysis simplifies and improves on the analysis of \citet{mohri2016accelerating} in several aspects: we do not need to separate the cases for \FTProx and FTRL-General, we naturally handle the composite objective case for Settings \ref{senarios:psi-t-known} and \ref{senarios:psi-t-revealed} while avoiding any complications with proximal regularizers, and do not lose the constant $1/2$ factor. Finally, \cref{thm:cheating-regret} allows us to improve on the results of \citet{chiang2012online}, as we show next.

\subsection{Adaptive optimistic MD}
\label{sec:ao-md}
Interestingly, we can use the exact same assignment $q_t = \tl{q}_t + \dotx{\tl{g}_{t+1}-\tl{g}_t, \cdot}$ in \AdaMD. This results in the update
\begin{align*}
  x_{t+1} \in \argmin_{x \in \cX}\ \dotx{g_{t}+\tl{g}_{t+1}-\tl{g}_t, x} + \tl{q}_t(x) + \cB_{r_{1:t}}(x, x_t)\,.
\end{align*}
Applying the same argument as in \cref{thm:ao-ftrl-bound}, one can show that this optimistic MD algorithm enjoys the regret bound of \eqref{eq:ao-ftrl-bound} with the $p_t(x^*) - p_t(x_t)$ terms replaced by $\cB_{p_t}(x^*, x_t)$. This gives an optimistic MD algorithm with only one projection in each round; all other formulations \citep{kamalaruban2016improved,rakhlin2013optimization,rakhlin2013online,chiang2012online} require two MD steps in each round. This new formulation has the potential to greatly simplify the previous analyses of variants of optimistic MD. In particular, handling implicit updates or composite terms is a matter of including them in $\tl{q}_t$. Especially, unlike \citet{kamalaruban2016improved}, we can handle Setting~\ref{senarios:psi-t-revealed} in the exact same way as we do in the AO-FTRL case (see \cref{sec:scale-free}). Further exploration of the properties of this new class of algorithms is left for for future work.

\subsection{Variation-based bounds for online learning}
Suppose that the losses $f_t$ are differentiable and convex, and define $f_0:=0$. For any norm $\| \cdot \|$, we define the total variation of the losses in $\| \cdot \|_{*}^2$ as
\begin{align}
  D_{\| \cdot \|} := \sum_{t=1}^{T} \sup_{x \in \cX} \| \nabla f_t(x) - \nabla f_{t-1}(x) \|_{*}^2\,.
  \label{eq:d2-variation}
\end{align}
\citet{chiang2012online} use an optimistic MD algorithm to obtain regret bounds of order $O(\sqrt{D_2})$, where $D_2 = D_{\| \cdot \|_{2}}$, for linear as well as smooth losses.

If the losses are linear, i.e., $f_t = \dotx{g_t, \cdot}$, then \cref{thm:ao-ftrl-bound} immediately recovers the result of \citet[Theorem~8]{chiang2012online}.
In particular, let $\tl{q}_0 = (1/2\eta) \|.\|_2^2$, and for $t=1,2,\dots,T$, let $p_t=\tl{q}_t=0$, $\|.\|_{(t)}^2 = \eta \|.\|_2^2$, and $\tilde{g}_t =  g_{t-1}$.
Then \eqref{eq:ao-ftrl-bound} gives the regret bound $(\eta/2) \|x^*\|_2^2 + (1/(2\eta)) D_2$.
If $\|x\|_2 \le 1$ and we set $\eta$ based on $D_2$ (as Chiang et al. assume), we obtain their $O(\sqrt{D_2})$ bound.

If the losses are not linear but are $L$-smooth, then by the combination of \cref{lem:regret-decomposition} and \cref{thm:cheating-regret}, we still obtain $\sqrt{D_{\|\cdot\|}}$-bounds, as \citet[Theorem~10]{chiang2012online} also obtain for $D_2$.
This is because, unlike the analysis of \citet{mohri2016accelerating}, we retain the negative terms $-B_{r_{1:t}}(x_{t+1}, x_t)$ (essentially having the same role as the $B_t$ terms of \citealp{chiang2012online}) in the regret bound. Combined with ideas from Lemma 13 of \cite{chiang2012online}, this gives the desired bounds in terms of $D_{\|\cdot\|}$, proved in \cref{apx:optimistic-learning-proofs}:
\begin{theorem}
  Consider the conditions of \cref{thm:ao-ftrl-bound}, and further suppose that the losses $f_t$ are convex and $L$-smooth w.r.t. a norm $\| \cdot \|$.
  For $t=1,2,\dots,T+1$, let $\eta_t > 0$, and suppose that \cref{assum:strong-cvx-regularizer} holds with $\| \cdot \|_{(t)}^2 = \eta_t \| \cdot \|^2$.
  Further assume that $q_0 \ge 0$, $p_t,q_t \ge 0, t \ge 1$, and $\eta_t \eta_{t+1} \ge 8 L^2, t=1,2,\dots,T$.
  Then, AO-FTRL with $\tl{g}_t = g_{t-1}$ satisfies
  \begin{align}
    R_T(x^*) \le &\ \tl{q}_{0:T}(x^*) + p_{1:T}(x^*) + 2 \sum_{t=1}^{T} \frac{1}{\eta_t} \max_{x \in \cX} \|\nabla f_t(x) - \nabla f_{t-1}(x)\|_{*}^2\,.
    \label{eq:chiang-smooth-bound}
  \end{align}
  \label{thm:chiang-smooth-bound}
\end{theorem}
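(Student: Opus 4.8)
The plan is to rerun the combination of \cref{lem:regret-decomposition} with the \AdaFTRL bound of \cref{thm:cheating-regret} exactly as in the proof of \cref{thm:ao-ftrl-bound}, but to stop \emph{before} the negative divergence terms $-\cB_{r_{1:t}}(x_{t+1},x_t)$ are discarded by a crude Young's inequality, keeping them for later use. Recall that AO-FTRL is \AdaFTRL with $\tl{g}_0:=0$ and $q_t=\tl{q}_t+\dotx{\tl{g}_{t+1}-\tl{g}_t,\cdot}$, so the linear guess terms leave $\cB_{r_{1:t}}$ unchanged and $r_{1:t}$ is $1$-strongly convex w.r.t.\ $\|\cdot\|_{(t)}$. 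In \eqref{eq:regret-decomposition} the terms $-\cB_{f_t}(x^*,x_t)\le0$ (the $f_t$ are convex, \cref{assum:positive-bregman-ft}) and $\delta_t\le0$ (each $g_t=\nabla f_t(x_t)$ is a subgradient) are dropped; telescoping the guess terms against $\sum_t\dotx{g_t,x_t-x_{t+1}}$ and bounding the remaining regularizer terms by non-negativity (precisely as in \cref{thm:ao-ftrl-bound}, where this step produces the $\sum_{t=0}^{T-1}$) yields
\[
  R_T(x^*)\;\le\;\tl{q}_{0:T}(x^*)+p_{1:T}(x^*)+\sum_{t=1}^{T}\Bigl(\dotx{g_t-\tl{g}_t,\;x_t-x_{t+1}}-\cB_{r_{1:t}}(x_{t+1},x_t)\Bigr)\,.
\]

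Next I control the cross-terms using $\tl{g}_t=g_{t-1}=\nabla f_{t-1}(x_{t-1})$ (and $f_0\equiv0$, so $\tl{g}_1=0$). Write $g_t-\tl{g}_t=u_t+v_t$ with $u_t:=\nabla f_t(x_t)-\nabla f_{t-1}(x_t)$ and $v_t:=\nabla f_{t-1}(x_t)-\nabla f_{t-1}(x_{t-1})$. For $u_t$, duality of the norms and Young's inequality give $\dotx{u_t,x_t-x_{t+1}}\le\|u_t\|_{*}\|x_t-x_{t+1}\|\le\tfrac{2}{\eta_t}\|u_t\|_{*}^2+\tfrac{\eta_t}{8}\|x_t-x_{t+1}\|^2$, with $\|u_t\|_{*}\le\max_{x\in\cX}\|\nabla f_t(x)-\nabla f_{t-1}(x)\|_{*}$, the variation term in \eqref{eq:chiang-smooth-bound}. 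For $v_t$, the standard co-coercivity fact (a convex $L$-smooth function has $L$-Lipschitz gradient; cf.\ Appendix~\ref{apx:tech-results}) gives $\|v_t\|_{*}\le L\|x_t-x_{t-1}\|$ (and $v_1=0$), so with $c_t:=\sqrt{\eta_t/\eta_{t-1}}$ we get $\dotx{v_t,x_t-x_{t+1}}\le L\|x_t-x_{t-1}\|\,\|x_t-x_{t+1}\|\le\tfrac{L}{2c_t}\|x_t-x_{t-1}\|^2+\tfrac{Lc_t}{2}\|x_t-x_{t+1}\|^2$.

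It remains to absorb the squared distances into the negative divergences. \cref{assum:strong-cvx-regularizer} with $\|\cdot\|_{(t)}^2=\eta_t\|\cdot\|^2$ gives $\cB_{r_{1:t}}(x_{t+1},x_t)\ge\tfrac{\eta_t}{2}\|x_{t+1}-x_t\|^2$. Summing the bounds above, the coefficient on each $\|x_{t+1}-x_t\|^2$ is $\tfrac{\eta_t}{8}$ (from $u_t$), $\tfrac{Lc_t}{2}$ (the ``$c_t$'' part of $v_t$), and $\tfrac{L}{2c_{t+1}}$ (the ``$1/c_{t+1}$'' part of $v_{t+1}$, absent when $t=T$). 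The hypothesis $\eta_t\eta_{t+1}\ge8L^2$, i.e.\ $L\le\tfrac{1}{2\sqrt2}\sqrt{\eta_t\eta_{t+1}}$, with $c_t=\sqrt{\eta_t/\eta_{t-1}}$ gives $\tfrac{Lc_t}{2}\le\tfrac{1}{4\sqrt2}\eta_t$ and $\tfrac{L}{2c_{t+1}}\le\tfrac{1}{4\sqrt2}\eta_t$, so the total coefficient is at most $\bigl(\tfrac{1}{8}+\tfrac{1}{2\sqrt2}\bigr)\eta_t<\tfrac{\eta_t}{2}$ and is dominated by $\cB_{r_{1:t}}(x_{t+1},x_t)$. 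Hence $\sum_t\bigl(\dotx{g_t-\tl{g}_t,x_t-x_{t+1}}-\cB_{r_{1:t}}(x_{t+1},x_t)\bigr)\le2\sum_t\tfrac{1}{\eta_t}\max_{x\in\cX}\|\nabla f_t(x)-\nabla f_{t-1}(x)\|_{*}^2$, and substituting into the first display gives \eqref{eq:chiang-smooth-bound}.

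I expect the main obstacle to be this last accounting: each $\|x_{t+1}-x_t\|^2$ is charged by three separate Young's inequalities (from $u_t$, from $v_t$, and from $v_{t+1}$), and the split parameters $c_t$ must be chosen so that the three charges together fit under the $\tfrac{\eta_t}{2}$ budget supplied by $\cB_{r_{1:t}}$; the constant $8$ in $\eta_t\eta_{t+1}\ge8L^2$ and the choice $c_t=\sqrt{\eta_t/\eta_{t-1}}$ are calibrated exactly for this (this is essentially Lemma~13 of \citealt{chiang2012online}). A secondary point is reproducing the regularizer/boundary bookkeeping hidden in ``as in the proof of \cref{thm:ao-ftrl-bound}'' --- in particular the boundary term $\dotx{\tl{g}_{T+1},x^*-x_{T+1}}$ arising from the guess telescoping, and the degenerate $t=1$ case where $v_1$ vanishes since $f_0\equiv0$.
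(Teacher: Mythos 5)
Your proof is correct and follows essentially the same route as the paper's: retain the negative divergence terms $-\cB_{r_{1:t}}(x_{t+1},x_t)$, split $g_t-\tl{g}_t$ into the same-point gradient difference $u_t$ and the smoothness-controlled residual $v_t$ with $\|v_t\|_*\le L\|x_t-x_{t-1}\|$, and use the budget $\eta_t\eta_{t+1}\ge 8L^2$ to absorb the squared-distance charges. The only difference is the order of operations in the bookkeeping: the paper applies a single Fenchel--Young step to the whole cross-term $\dotx{g_t-\tl{g}_t,x_t-x_{t+1}}$ with $\lambda_t=\eta_t/2$, then splits $G_t=\|g_t-\tl{g}_t\|_*^2\le 2\|u_t\|_*^2+2L^2\|x_t-x_{t-1}\|^2$ and re-indexes the second sum so that the $\frac{2L^2}{\eta_{t+1}}\|x_{t+1}-x_t\|^2$ terms cancel the $-\frac{\eta_t}{4}\|x_{t+1}-x_t\|^2$ headroom; you instead split the inner product first and run two separate Young's inequalities with the asymmetric parameter $c_t=\sqrt{\eta_t/\eta_{t-1}}$, charging each $\|x_{t+1}-x_t\|^2$ from three sources and checking the total fits under $\eta_t/2$. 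Both reorderings give exactly the constant $2/\eta_t$ on the variation term, and both rely on the non-negativity of $q_0,p_t,q_t$ to drop the $-\tl{q}_t(x_{t+1})$ and $-p_t(x_t)$ pieces, so the proof is sound.
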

Letting $\eta_t = \eta = \sqrt{D_{\| \cdot \|}}$, and $\tl{q}_0 = \eta \| \cdot \|^2, \tl{q}_t, p_t = 0, t \ge 1,$ generalizes the $O(\sqrt{D_2})$ bound of \citet{chiang2012online} to any norm (under the same assumption they make, that $D_{\| \cdot \|} \ge 8L^2$). In the next section, we provide an algorithm that does not need prior knowledge of $D_{\|\cdot\|}$.

\subsection{Adaptive optimistic composite-objective learning with variational bounds}
\label{sec:scale-free}
Next, we provide a simple analysis of the composite-objective version of AO-FTRL, and obtain variational bounds in terms of $D_{\|\cdot\|}$ for composite objectives with smooth $f_t$. We focus on Setting~\ref{senarios:psi-t-revealed}; similar results are immediate for Setting~\ref{senarios:psi-t-known}. Consider the update
\begin{align}
  x_{t+1} \in \argmin_{x \in \cX}\ \dotx{g_{1:t}+\tl{g}_{t+1}, x} + \psi_{1:t}(x) + p_{1:t}(x) + \tl{q}_{0:t}(x)\,,
  \label{eq:composite-ao-ftrl-update}
\end{align}
that is, the composite-objective AO-FTRL algorithm. Then we have the following corollary of \cref{thm:ao-ftrl-bound}.

\begin{corollary}\label{cor:composite-ao-ftrl}
  Suppose that $\psi_t, t=1,2,\dots,T$, satisfy the conditions of \cref{cor:composite-obj-hidden-psi}, and $\tl{q}_0$ and $p_t, \tl{q}_t, t \ge 1,$ are non-negative.
  Let $q_0 = \tl{q}_0 + \dotx{\tl{g}_1, \cdot}$ and $q_t = \tl{q}_t + \psi_t + \dotx{\tl{g}_{t+1} - \tl{g}_t, \cdot}$.
  Suppose that $q_0$, $p_t, q_t, t \ge 1$ satisfy \cref{assum:ftrl,assum:strong-cvx-regularizer}, and $f_t, t \ge 1$ satisfy \cref{assum:positive-bregman-ft}.
  Then, composite-objective AO-FTRL (update \eqref{eq:composite-ao-ftrl-update}) satisfies
  \begin{align*}
    R^{(\ell)}_T(x^*) \le &\ \tl{q}_{0:T-1}(x^*) + p_{1:T}(x^*) + \sum_{t=1}^{T} \frac{1}{2} \|g_t - \tl{g}_{t} \|_{(t),*}^2\,.
  \end{align*}
\end{corollary}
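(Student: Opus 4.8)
The plan is to recognize composite-objective AO-FTRL as a plain \AdaFTRL instance and then chain the two reductions already established: the composite-objective reduction of \cref{cor:composite-obj-hidden-psi} (for Setting~\ref{senarios:psi-t-revealed}) and the AO-FTRL bound of \cref{thm:ao-ftrl-bound}.

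First I would make the identification explicit. Set $\tl{g}_0 := 0$. With $q_0 = \tl{q}_0 + \dotx{\tl{g}_1, \cdot}$ and, for $t \ge 1$, $q_t = \hat{q}_t + \psi_t$ where $\hat{q}_t := \tl{q}_t + \dotx{\tl{g}_{t+1} - \tl{g}_t, \cdot}$ is the auxiliary regularizer of AO-FTRL (cf.\ \cref{sec:ao-ftrl}), one has $q_{0:t} = \tl{q}_{0:t} + \psi_{1:t} + \dotx{\tl{g}_{t+1}, \cdot}$, so the \AdaFTRL update \eqref{eq:ftrl-update} with regularizers $(p_t)$ and $(q_t)$ is exactly the update \eqref{eq:composite-ao-ftrl-update}. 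Equivalently, it is the composite-objective \AdaFTRL update \eqref{eq:composite-ftrl-update-hidden-psi} built on the AO-FTRL instance with regularizers $(p_t)$, $(\hat{q}_t)$ and composite terms $(\psi_t)$.

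Next I would collect the downstream hypotheses: \cref{assum:ftrl,assum:strong-cvx-regularizer} for $(q_0, p_t, q_t)$ and $r_{1:t}$, and \cref{assum:positive-bregman-ft} for the $f_t$, are all assumed, and the $\psi_t$ satisfy the conditions of \cref{cor:composite-obj-hidden-psi} ($\psi_1(x_1) = 0$, non-negative and non-increasing with $\psi_{T+1} := 0$) by hypothesis. Then \cref{cor:composite-obj-hidden-psi} applies and reduces bounding $R^{(\ell)}_T(x^*)$ to bounding the $f$-regret $R_T(x^*)$ of the underlying AO-FTRL instance, but with $\sum_{t=0}^{T} (\tl{q}_t(x^*) - \tl{q}_t(x_{t+1}))$ in place of $\sum_{t=0}^{T} (\hat{q}_t(x^*) - \hat{q}_t(x_{t+1}))$; and that $f$-regret bound is precisely the right-hand side of \eqref{eq:ao-ftrl-bound}, which \cref{thm:ao-ftrl-bound} supplies under exactly the hypotheses just listed. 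Combining, one obtains
\[
  R^{(\ell)}_T(x^*) \le \sum_{t=0}^{T-1} \left( \tl{q}_t(x^*) - \tl{q}_t(x_{t+1}) \right) + \sum_{t=1}^{T} \left( p_t(x^*) - p_t(x_t) \right) + \sum_{t=1}^{T} \frac{1}{2} \|g_t - \tl{g}_t\|_{(t),*}^2 .
\]
Since $\tl{q}_t \ge 0$ for $t = 0, \dots, T-1$ and $p_t \ge 0$ for $t \ge 1$ (both assumed), dropping the non-positive terms $-\tl{q}_t(x_{t+1})$ and $-p_t(x_t)$ gives the claimed bound.

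The step I expect to need the most care is the bookkeeping of this composition: one must check that the ``$\tl{q}$'' of \cref{cor:composite-obj-hidden-psi} here corresponds to the AO-FTRL auxiliary $\hat{q}_t$ (so that stacking \cref{thm:ao-ftrl-bound} on top genuinely yields \eqref{eq:ao-ftrl-bound} with the present statement's $\tl{q}_t$), and that \cref{assum:ftrl,assum:strong-cvx-regularizer} are inherited by the combined-regularizer instance. If a self-contained derivation is preferred, the same bound follows directly by: applying \eqref{eq:put-together-ftrl} (i.e.\ \cref{lem:regret-decomposition} with \cref{thm:cheating-regret}, discarding $\delta_t \le 0$ via \cref{assum:online-opt} and $-\cB_{f_t}(x^*, x_t) \le 0$ via \cref{assum:positive-bregman-ft}); expanding $q_t$; telescoping the $\psi$-terms via Abel summation and dropping them using $\psi_1(x_1) = 0$, monotonicity and non-negativity; rearranging $\sum_{t=0}^{T} \dotx{\tl{g}_{t+1} - \tl{g}_t, x^* - x_{t+1}}$ by Abel summation and merging it with $\sum_{t=1}^{T} \dotx{g_t, x_t - x_{t+1}}$ into $\sum_{t=1}^{T} \dotx{g_t - \tl{g}_t, x_t - x_{t+1}}$; bounding each $\dotx{g_t - \tl{g}_t, x_t - x_{t+1}} - \cB_{r_{1:t}}(x_{t+1}, x_t) \le \frac{1}{2}\|g_t - \tl{g}_t\|_{(t),*}^2$ by Fenchel--Young and \cref{assum:strong-cvx-regularizer}; and discarding the remaining non-negative regularizer terms evaluated at the iterates.
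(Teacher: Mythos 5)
Your self-contained derivation is correct and is essentially the paper's own argument: combine the composite-regret relation from \cref{cor:composite-obj-hidden-psi} with the raw bound \eqref{eq:put-together-ftrl}, expand $q_t$, Abel-sum the optimistic linear terms (which is ``proceeding as in \cref{thm:ao-ftrl-bound}''), apply Fenchel--Young against $-\cB_{r_{1:t}}(x_{t+1},x_t)$, and drop the remaining non-negative regularizer values at the iterates. One caveat on the ``chaining'' you lead with: \cref{cor:composite-obj-hidden-psi} only substitutes cleanly into a bound that still exposes $\sum_{t}\bigl(q_t(x^*)-q_t(x_{t+1})\bigr)$; the conclusion of \cref{thm:ao-ftrl-bound} no longer has that form (the optimistic linear parts have already been absorbed via Fenchel--Young into $\tfrac12\norm{g_t-\tl{g}_t}_{(t),*}^2$), so plugging the two statements together as black boxes does not give \eqref{eq:the-final-attack}---you would get $\norm{g_t}$ terms, not $\norm{g_t-\tl{g}_t}$, if you went through \cref{cor:recoveries}(ii). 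You anticipate exactly this (``the step I expect to need the most care is the bookkeeping''), and your fallback is the right resolution; I would simply present the fallback as the proof rather than as an alternative.
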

\begin{proof}
  Starting as in \cref{cor:composite-obj-hidden-psi}, defining $\tl{g}_0 = 0$, and noting that $0 = q_0 - \tl{q}_0 - \dotx{\tl{g}_1, \cdot}$,
  \begin{align*}
    R^{(\ell)}_T(x^*)
    \le
    &\
    R_T(x^*) + \sum_{t=0}^{T} q_t(x_{t+1}) - \tl{q}_t(x_{t+1}) + \tl{q}_t(x^*) - q_t(x^*) - \dotx{\tl{g}_{t+1} - \tl{g}_t , x_{t+1} - x^*}\,.
  \end{align*}
  Proceeding as in \cref{thm:ao-ftrl-bound} completes the proof.
\end{proof}

The bounds of \citet{mohri2016accelerating} for Setting~\ref{senarios:psi-t-revealed} correspond to the non-proximal FTRL case.
As such, one has to set the step-size sequences according to the Dual-Averaging AdaGrad recipe (c.f. \cref{tbl:ftrl-special-cases}), which requires an additional regularization of $\tl{q}_0 = \sqrt{\delta} \|\cdot\|_2^2$.
In contrast, in \FTProx, $\tl{q}_0=0$.
This $\delta$ value makes Dual-Averaging AdaGrad non-scale-free, while \FTProx is scale-free (i.e., the $x_t$ are independent of the scaling of $f_t$).
Our analysis avoids this problem by the early separation of the proximal ($p_t$) and non-proximal  regularizers ($q_t$) in \AdaFTRL.
In particular, $p_t, \tl{q}_t$ in \cref{cor:composite-ao-ftrl} can be set as
$\tl{q}_t=0$ and $p_t = \frac{\eta_t - \eta_{t-1}}{2} \| x-x_t \|^2$ with $\eta_t = \eta \sqrt{\sum_{s=1}^{t} \|g_s-\tl{g}_s\|_*^2},
\eta > 0$ for $ t=1,2,\dots,T$.
This gives composite-objective AO-FTRL-Prox, a scale-free adaptive optimistic algorithm for Setting~\ref{senarios:psi-t-revealed}.

In addition, using \cref{thm:chiang-smooth-bound}, we can obtain a variational bound for composite-objective optimistic \FTProx (proved in  \cref{apx:optimistic-learning-proofs}), which was not available through the analysis of \citet{mohri2016accelerating} even under Setting~\ref{senarios:psi-t-known}:
\begin{corollary}
  \label{cor:the-final-attack}
  Let $\psi_t, t=1,2,\dots,T$, be convex and satisfy the conditions of \cref{cor:composite-obj-hidden-psi}. Further assume that $f_t$ are convex and $L$-smooth w.r.t. some norm $\|\cdot\|$.
  Suppose that $\cX$ is closed, and let $R^2 = \sup_{x,y \in \cX} \|x-y\|^2 < + \infty$ be the diameter of $\cX$ measured in $\| \cdot \|$. Define $\eta = 2/R$.
  Suppose we run composite-objective AO-FTRL (update \eqref{eq:composite-ao-ftrl-update}) with the following parameters: $\tl{q}_0 = 0$, and for $t=1,2,\dots,T$, $\tl{g}_{t} = g_{t-1}$, $\tl{q}_t=0$, and $p_t = \frac{\eta_t - \eta_{t-1}}{2} \| x-x_t \|^2$, where $\eta_0 = 0$ and $\eta_t = 4 R L^2 + \eta \sqrt{\sum_{s=1}^{t} \|g_s - \tl{g}_s\|_{*}^2}$ for $t \ge 1$.
  Then,
  \begin{align}
    R^{(\ell)}_T(x^*) \le &\ 2 R^3 L^2 + R + 2 R \sqrt{2 D_{\| \cdot \|}} = O \left( R \sqrt{D_{\|\cdot\|}} \right)\,.
    \label{eq:the-final-attack}
  \end{align}
\end{corollary}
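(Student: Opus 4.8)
The plan is to mirror the proof of \cref{thm:chiang-smooth-bound}---whose essential feature is that it \emph{keeps} the negative divergences $-\cB_{r_{1:t}}(x_{t+1},x_t)$ instead of spending all of them to cancel the ``prediction drift''---while threading the composite regularizers through exactly as in \cref{cor:composite-obj-hidden-psi,cor:composite-ao-ftrl}. Before starting, one verifies the bookkeeping: with $f_0\equiv 0$ one has $g_0=0$, so $\tl{g}_t=g_{t-1}$ gives $g_t-\tl{g}_t=\nabla f_t(x_t)-\nabla f_{t-1}(x_{t-1})$; the sequence $\eta_t=4RL^2+\tfrac{2}{R}\sqrt{\sum_{s\le t}\|g_s-\tl{g}_s\|_*^2}$ is non-decreasing, so $p_t\ge 0$ and $p_t$ is minimized at $x_t$; a telescoping computation gives $\cB_{p_{1:t}}(x,y)=\tfrac{\eta_t}{2}\|x-y\|^2$, while $q_{0:t-1}=\psi_{1:t-1}+\langle g_{t-1},\cdot\rangle$ is convex, so $\cB_{r_{1:t}}(x,y)\ge\tfrac{\eta_t}{2}\|x-y\|^2$, i.e.\ \cref{assum:strong-cvx-regularizer} holds with $\|\cdot\|_{(t)}^2=\eta_t\|\cdot\|^2$. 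Convexity of $f_t$ gives \cref{assum:positive-bregman-ft}, so all hypotheses of \cref{cor:composite-ao-ftrl} are in force.

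First I would re-run the derivation behind \cref{cor:composite-ao-ftrl} (combining \cref{lem:regret-decomposition} with the \AdaFTRL bound of \cref{thm:cheating-regret}), but at the point where the optimistic pairing is bounded against $\cB_{r_{1:t}}(x_{t+1},x_t)$ I would apply Young's inequality in the form $ab-\tfrac12 b^2\le a^2-\tfrac14 b^2$ so as to retain a quarter of the divergence. With $\tl{q}_0=\tl{q}_t=0$ this yields
\[
  R^{(\ell)}_T(x^*)\ \le\ p_{1:T}(x^*)+\sum_{t=1}^{T}\Big(\tfrac{1}{\eta_t}\|\nabla f_t(x_t)-\nabla f_{t-1}(x_{t-1})\|_*^2-\tfrac{\eta_t}{4}\|x_{t+1}-x_t\|^2\Big)\,.
\]
Next I would estimate the two remaining ingredients. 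For the regularizer term, $p_{1:T}(x^*)=\sum_t\tfrac{\eta_t-\eta_{t-1}}{2}\|x^*-x_t\|^2\le\tfrac{R^2}{2}\eta_T$ by the diameter bound and telescoping; substituting $\eta_T=4RL^2+\tfrac{2}{R}\sqrt{A_T}$ with $A_T:=\sum_{t=1}^{T}\|g_t-\tl{g}_t\|_*^2$ gives $p_{1:T}(x^*)\le 2R^3L^2+R\sqrt{A_T}$. For the gradient term, $\eta_t\ge\tfrac{2}{R}\sqrt{A_t}$ and the standard inequality $\sum_t a_t/\sqrt{\textstyle\sum_{s\le t}a_s}\le 2\sqrt{\textstyle\sum_t a_t}$ give $\sum_t\tfrac{1}{\eta_t}\|g_t-\tl{g}_t\|_*^2\le R\sqrt{A_T}$.

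Finally, the triangle inequality and $L$-smoothness give $\|g_t-\tl{g}_t\|_*^2\le 2\max_{x\in\cX}\|\nabla f_t(x)-\nabla f_{t-1}(x)\|_*^2+2L^2\|x_t-x_{t-1}\|^2$ (the $t=1$ term being simply $\|\nabla f_1(x_1)\|_*^2\le\max_x\|\nabla f_1(x)-\nabla f_0(x)\|_*^2$ since $f_0\equiv 0$), so $A_T\le 2D_{\|\cdot\|}+2L^2 S$ with $S:=\sum_{t\ge 2}\|x_t-x_{t-1}\|^2$. Plugging everything together,
\[
  R^{(\ell)}_T(x^*)\ \le\ 2R^3L^2+2R\sqrt{A_T}-\tfrac14\textstyle\sum_t\eta_t\|x_{t+1}-x_t\|^2\ \le\ 2R^3L^2+2R\sqrt{2D_{\|\cdot\|}+2L^2 S}-RL^2 S\,,
\]
using $\eta_t\ge 4RL^2$ and $S\le\sum_t\|x_{t+1}-x_t\|^2$ in the last step. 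The right-hand side is concave in $S\ge 0$, so its maximum is attained either at $S=0$---yielding $2R^3L^2+2R\sqrt{2D_{\|\cdot\|}}$---or at an interior point contributing only an additive $O(R)$; either way $R^{(\ell)}_T(x^*)=O(R\sqrt{D_{\|\cdot\|}})$, matching \eqref{eq:the-final-attack}. The main obstacle is precisely this last calibration: the amount of $\cB_{r_{1:t}}$ retained in the first display, and the floor $4RL^2$ together with the choice $\eta=2/R$, must be tuned so that the smoothness-induced drift $L^2\|x_t-x_{t-1}\|^2$---incurred when the gradient difference evaluated at the moving iterates is replaced by the supremal variation---is fully absorbed by the retained divergences, leaving the clean $R\sqrt{D_{\|\cdot\|}}$ rate. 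This is the composite, adaptive-step-size analogue of Lemma~13 of \citet{chiang2012online}, and the remaining work is routine.
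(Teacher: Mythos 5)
Your proposal is correct and follows the same high-level route as the paper: start from the composite AO-FTRL regret bound of \cref{cor:composite-ao-ftrl}, but as in \cref{thm:chiang-smooth-bound} apply Fenchel--Young with $\lambda_t=\eta_t/2$ so that a quarter of each $\cB_{r_{1:t}}(x_{t+1},x_t)\ge\frac{\eta_t}{2}\|x_{t+1}-x_t\|^2$ survives; telescope $p_{1:T}(x^*)\le\frac{R^2}{2}\eta_T$; apply \cref{lem:adaptive-bound-lemma} to $\sum_t G_t/\eta_t$; and bound $G_{1:T}$ by $2D_{\|\cdot\|}+2L^2\sum_t\|x_t-x_{t-1}\|^2$ via the triangle inequality and $L$-smoothness. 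All of this matches the paper.

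The one genuine difference is in the last step, where the smoothness-induced drift $2L^2 S$ (with $S:=\sum_{t\ge2}\|x_t-x_{t-1}\|^2$) must be absorbed by the retained $-\frac{\eta_t}{4}\|x_{t+1}-x_t\|^2$ terms. The paper linearizes $\sqrt{G_{1:T}}\le\sqrt{2D_{\|\cdot\|}}+\frac12+2L^2 S'$ and then attempts a term-by-term cancellation with $-\frac{C}{4}\|x_{t+1}-x_t\|^2$; you instead keep $2R\sqrt{2D_{\|\cdot\|}+2L^2 S}-RL^2 S$ and maximize over $S\ge0$. Your route is actually cleaner: the paper's bookkeeping at this point contains a small arithmetic slip (with $C=4RL^2$ the drift coefficient $2R\cdot 2L^2=4RL^2$ exceeds $C/4=RL^2$, so the displayed ``$=$'' in the last two lines of the paper's proof does not hold exactly), whereas your concavity argument yields a valid bound $2R^3L^2+\max\left(2R\sqrt{2D_{\|\cdot\|}},\,2R+RD_{\|\cdot\|}\cdot\one{D_{\|\cdot\|}<2}\right)$, which is indeed $O(R\sqrt{D_{\|\cdot\|}})$. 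The only thing your write-up does not fully deliver is the precise constant $2R^3L^2+R+2R\sqrt{2D_{\|\cdot\|}}$ stated in the corollary; you establish the asymptotic rate and an explicit bound with slightly different (and in some regimes better) constants, but you should carry the optimization over $S$ through explicitly if you want a closed-form constant rather than the phrase ``contributing only an additive $O(R)$.''
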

Note that the learning rate $\eta_t$ is bounded from below (by $4RL^2$), which is essential in the algorithm to achieve a combination of the best properties of \citet{mohri2016accelerating}, \citet{chiang2012online}, and \citet{rakhlin2013online,rakhlin2013optimization}:
First, like \citet{mohri2016accelerating}, we allow the use of composite-objectives.
Second, similarly to \citet{chiang2012online} (but unlike \citealt{mohri2016accelerating,rakhlin2013online,rakhlin2013optimization}) our bound applies to the variation of general convex smooth functions, and is still optimal when $L=0$ (e.g., Corollary 2 of \citealt{rakhlin2013optimization}).
Third, we do not need the knowledge of $D_{\|\cdot\|}$ (required by \citealt{chiang2012online}) to set the step-sizes, and avoid the regret penalty of using a doubling trick (as done by \citet{rakhlin2013online}).
Fourth, in the practically interesting case of a composite L1 penalty ($\psi_t = \alpha_t \|\cdot\|_{1}$), \FTProx, which is the basis of our algorithm, gives sparser solutions \citep{mcmahan2014survey} than MD, which is the basis of the algorithms of \citet{chiang2012online} and \citet{rakhlin2013optimization}.
Fifth, when $L=0$, the algorithm is scale-free (unlike \citealt{mohri2016accelerating} and \citealt{rakhlin2013online}).
Finally, the results apply to the variation measured in any norm.

 \section{Application to non-convex optimization}\label{sec:non-convex}
In this section, we collect some results related to applying online optimization methods to non-convex optimization problems.
This is another setting where the strength of our derivations is apparent: As we shall see, without any extra work, the results imply and extend previous results.

Central to this extension is the decomposition of assumptions in our analysis: we are not using the convexity of $f_t$ in \cref{lem:regret-decomposition} or \cref{thm:cheating-regret}, but only at the very last stage of the analysis, where convexity can ensure that \cref{assum:positive-bregman-ft} holds.
Thus, the analysis easily extends to non-convex optimization problems where \cref{assum:positive-bregman-ft} either holds or could be replaced by another technique at the final stage of the analysis. In the rest of this section, we explore such classes of non-convex problems,
which are also related to the Polyak-\L{}ojasiewicz (PL) condition used in the non-convex optimization and learning community.
For background and a summary of related work, consult \citet{KaNoSch16}.

\subsection{Stochastic optimization of star-convex functions}
\label{sec:star-convex}
First, we explore the class of non-convex functions for which \cref{assum:positive-bregman-ft} directly holds.
As it turns out, this is a much larger class of functions than convex functions.
In particular, consider the so-called ``star-convex'' functions \citep{nesterov2006cubic}:\footnote{We modify the definition so that it is relative to a given fixed global minimizer as this way we capture a larger class of functions and this is all we need.}
 \begin{definition}[Star-convex function]
   A function $f$ is \emph{star-convex} at a point $x^*$ if and only if $x^*$ is a global minimizer of $f$, and for all $\alpha \in [0,1]$ and all $x \in \dom(f)$:
   \begin{align}
     f(\alpha x^* + (1-\alpha) x) \le \alpha f(x^*) + (1-\alpha) f(x)\,.
     \label{eq:star-convex-def}
   \end{align}
   A function is said to be star-convex when it is star-convex at some of its global minimizers.
 \end{definition}
The name ``star-convex'' comes from the fact that the sub-level sets $L_\beta = \{ x \,:\, f(x) \le \beta \}$ of a function $f$ that is star-convex at $x^*$ are star-shaped with center $x^*$ (recall that a set $U$ is star-convex with center $x$ if for any $y\in U$, the segment between $x$ and $y$ is included in $U$). However, note that there are functions whose sub-level sets are star-convex that are themselves not star-convex. In particular, functions that are increasing along the rays (IAR) started from their global minima have star-shaped sub-level sets and vice versa, but some of these functions (e.g., $f(x) = \sqrt{|x|}$, $x\in \R$) is clearly not star-convex.
Recall that quasi-convex functions are those whose sub-level sets are convex. In one dimension a star-convex function
is thus also necessarily quasi-convex. However, clearly, there are star-convex functions (such as $x\mapsto |x|\one{|x|\le 1} + 2|x| \one{|x|>1}$, $x\in \R$) that are not convex and in multiple dimensions there are star-convex functions that are not quasi-convex (e.g., $x \mapsto \norm{x}^2 g(\frac{x}{\norm{x}_2^2})$ where $g(u)$ is, say, the sine of the angle of $u$ with the unit vector $e_1$).

Star-convex functions arise in various optimization settings, often related to
sums of squares \citep{nesterov2006cubic,LeeValiant2016:FOCS}.
It is easy to see from the definitions that the set of star-convex functions is closed under nonsingular
affine domain transformations, addition (of functions having the same center)
and multiplication by nonnegative constants. Further, for $x\in \R^d$,
$x \mapsto \prod_i |x_i|^{p_i}$ is star-convex (at zero) whenever $\sum_i p_i\ge 1$.
For further properties and examples see \citet{LeeValiant2016:FOCS}.

We can immediately see that Assumption~\ref{assum:positive-bregman-ft} holds for star-convex functions:
\begin{lemma}[Non-negative Bregman divergence for star-convex functions]
\label{lem:bregstar}
Let $f$ be a directionally differentiable function with global optimum $x^*$.
Then, $f$ is star-convex at $x^*$ if and only if for all $x \in \cH$,
\begin{align*}
  \cB_{f}(x^*, x) \ge 0\,.
\end{align*}
\end{lemma}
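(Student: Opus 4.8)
The plan is to unwind both directions directly from the definitions, using the fact that $x^*$ is a global minimizer so that $f(x^*)$ is finite and $f(x^*) \le f(x)$ for all $x$. Recall that by Definition~\ref{def:bregman-div}, for $x \in \dom(f)$ with $f(x^*)$ finite,
\[
  \cB_f(x^*, x) = f(x^*) - f(x) - f'(x; x^* - x)\,,
\]
and it is $+\infty$ when $x \notin \dom(f)$ (since $f(x^*)$ is finite, that is the only way the first case can fail). So the claim $\cB_f(x^*, x) \ge 0$ for all $x \in \cH$ only has content on $\dom(f)$, where it reads $f'(x; x^* - x) \le f(x^*) - f(x)$.

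First I would prove the ``only if'' direction. Fix $x \in \dom(f)$ and $\alpha \in (0,1)$, and apply the star-convexity inequality \eqref{eq:star-convex-def} at the point $\alpha x^* + (1-\alpha)x = x + \alpha(x^* - x)$:
\[
  f(x + \alpha(x^* - x)) \le \alpha f(x^*) + (1-\alpha) f(x)\,.
\]
Subtract $f(x)$ from both sides and divide by $\alpha > 0$:
\[
  \frac{f(x + \alpha(x^* - x)) - f(x)}{\alpha} \le f(x^*) - f(x)\,.
\]
Now take $\alpha \downarrow 0$; the left-hand side converges to $f'(x; x^* - x)$ (which exists in $[-\infty, +\infty]$ because $f$ is directionally differentiable at $x \in \dom(f)$, and $f(x) > -\infty$ since $f$ is assumed directionally differentiable hence proper-like at its domain points — more precisely $f(x^*)$ finite and $x^*$ a minimizer forces $f(x) \ge f(x^*) > -\infty$). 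Hence $f'(x; x^* - x) \le f(x^*) - f(x)$, i.e.\ $\cB_f(x^*, x) \ge 0$.

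For the ``if'' direction, suppose $\cB_f(x^*, x) \ge 0$ for all $x$, and fix $x \in \dom(f)$ and $\alpha \in [0,1]$; the cases $\alpha \in \{0,1\}$ are trivial, so take $\alpha \in (0,1)$. The natural route is via convexity of the one-variable function $\varphi(\alpha) := f(x + \alpha(x^* - x))$ along the segment — but since we have not assumed $f$ convex, I would instead argue directly using the divergence inequality at the interior point $y_\alpha := x + (1-\alpha)(x^* - x) = \alpha x^* + (1-\alpha)x$ wait, let me set $y_\alpha := \alpha x^* + (1-\alpha) x$, so that $y_\alpha - x^* = (1-\alpha)(x - x^*)$ and $y_\alpha - x = \alpha(x^* - x)$. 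Applying the hypothesis $\cB_f(x^*, y_\alpha) \ge 0$ gives $f'(y_\alpha; x^* - y_\alpha) \le f(x^*) - f(y_\alpha)$, and since $x^* - y_\alpha = (1-\alpha)(x^* - x)$, positive homogeneity of the directional derivative in its direction argument yields $(1-\alpha) f'(y_\alpha; x^* - x) \le f(x^*) - f(y_\alpha)$. This controls the behavior looking from $y_\alpha$ toward $x^*$; to close the loop I also need the matching inequality looking toward $x$, and here the cleanest device is to observe that the hypothesis applied along the whole segment, combined with a standard lemma (monotonicity of difference quotients, or integrating the directional-derivative inequality) forces $\varphi$ to lie below its chord. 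Concretely, define $h(\alpha) = \varphi(\alpha) - [\alpha f(x^*) + (1-\alpha) f(x)]$; it vanishes at $\alpha = 0$ and $\alpha = 1$, and the divergence hypothesis translates into a one-sided differential inequality on $h$ that, by a barrier/maximum argument on $[0,1]$, prevents $h$ from being positive anywhere in between — which is exactly \eqref{eq:star-convex-def}.

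The main obstacle, and the step I would spend the most care on, is precisely this last maximum-principle argument in the ``if'' direction: without convexity of $f$ we cannot invoke the usual ``derivative from both endpoints'' trick, so I would either (a) reduce to the one-dimensional restriction $\varphi$ and quote the elementary fact that a directionally-differentiable function on $[0,1]$ with $\varphi'(\alpha; +1) \le \big(\varphi(1) - \varphi(\alpha)\big)/(1-\alpha)$ for all $\alpha$ lies below its chord, or (b) give a short self-contained proof of that fact via $\sup\{\alpha : h(\alpha) > 0\}$ and a contradiction at the supremum. Everything else is just rescaling of directional derivatives and passing to the limit in a difference quotient, both of which are routine given that $f$ is directionally differentiable on $\dom(f)$ and $x^*$ is a global minimizer with finite value.
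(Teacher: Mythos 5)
Your forward (``only if'') direction is correct and matches the paper's proof in substance: the paper normalizes $x^*=0$, $f(x^*)=0$ to shorten the algebra, but the argument---apply \eqref{eq:star-convex-def} at $x+\alpha(x^*-x)$, rearrange into a difference quotient, send $\alpha\downarrow 0$---is the same. The paper declares the reverse direction ``routine'' and does not prove it, so you are going beyond the written proof there.

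The gap you flag in the ``if'' direction is real, and I do not think the barrier argument you sketch can be closed from the stated hypotheses. The condition $\cB_f(x^*,y)\ge 0$ is purely local: it bounds the one-sided derivative of $\varphi(\alpha):=f\bigl((1-\alpha)x+\alpha x^*\bigr)$ at a single point, looking back toward $x^*$, and directional differentiability does not give you continuity of $\varphi$ along the segment. Once jumps are allowed, a one-sided derivative bound no longer confines $\varphi$ under its chord, so the maximum-principle fact you want to quote in option (a)/(b) is false without an added continuity or semicontinuity hypothesis. Concretely, in $\cH=\R$ with $x^*=0$, take $f(\beta)=\beta$ on $[0,1/2]$, $f(\beta)=0$ on $(1/2,1]$, and $f=+\infty$ elsewhere. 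Then $x^*=0$ is a global minimizer, $f$ is directionally differentiable on $\dom f$ (all one-sided limits exist in $[-\infty,+\infty]$), and a direct check gives $\cB_f(0,\beta)=0$ for every $\beta\in[0,1]$---for $\beta>1/2$ the directional derivative toward $0$ is $0$ because $f$ is locally constant, so the divergence never ``sees'' the bump near $1/4$. Yet $f(1/4)=1/4>(1/4)f(1)=0$, so $f$ is not star-convex at $0$. Since the paper only ever invokes the forward implication, this does not affect the downstream results, but you should not present the reverse implication as provable from the stated definitions alone; it needs (e.g.) continuity of $f$ on segments from $x^*$, and with that extra hypothesis your plan (a) would go through.
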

\begin{proof}
Both directions are routine. For illustration we provide the proof of the forward direction.
Assume without loss of generality that $x^*=0$ and $f(x^*)=0$.
Then star-convexity at $x^*$ is equivalent to having $f(\alpha x) \le \alpha f(x)$ for any $x$ and $\alpha \in [0,1]$.
Further, $\cB_{f}(x^*, x) \ge 0$ is equivalent to
$-f(x) - f'(x;-x)\ge 0$. Now, $f'(x;-x) = \lim_{\alpha\downarrow 0} \frac{f( x+ \alpha(-x) ) - f(x)}{\alpha}$.
Under star-convexity, $f(x+\alpha(-x)) = f((1-\alpha)x) \le (1-\alpha) f(x)$. Hence,
$f'(x;-x) \le \lim_{\alpha\downarrow 0} \frac{ (1-\alpha) f(x) - f(x)}{\alpha} = - f(x)$.
\end{proof}
Thus, \cref{cor:recoveries-stoch,cor:recoveries-stoch-smooth} apply to star-convex functions.
In other words:
\begin{itemize}
  \item For stochastic optimization of directionally-differentiable star-convex functions in Hilbert spaces, \AdaFTRL and \AdaMD converge to the global optimum \emph{with the same rate as they converge for convex functions} (including fast rates due to other assumptions, e.g., smoothness).
\end{itemize}
Of course, a similar result holds for the online setting, too, but in this case the assumption that each $f_t$ is star-convex w.r.t. the same center $x^*$ becomes restrictive.
\begin{remark}
Since the rate of regret depends on the norm of the gradients $g_t$, to get fast rates one needs to control these norms.
This is trivial if $f$ is Lipschitz-continuous. However, some star-convex functions are not Lipschitz, even arbitrarily close
to the optima (e.g., $f(x,y) = (\sqrt{|x|} + \sqrt{|y|})^2$). For such functions, \citet{LeeValiant2016:FOCS} propose alternative methods to gradient descent. However, it seems possible to control the norms in these settings
using additional regularization (as in the normalized gradient descent method); see, e.g., the work of \citealt{HaLeSS15}, and the recent work of \citet{levy2017online}.
Exploring this idea is left for future work.
\end{remark}

\subsection{Beyond star-convex functions}
\label{sec:tau-star-convex}
Inspecting our proofs we may notice that \cref{assum:positive-bregman-ft} is unnecessarily restrictive: to maintain the same rate of growth for regret, it suffices for the sum of Bregman divergences to grow with the same rate as the rest of the bound, rather than being negative and hence dropped. This extends all of our results to another interesting class of non-convex functions which generalize star-convexity:
\begin{definition}[$\tau$-star-convexity, \citet{hardt2016gradient}]
\label{def:taustarconv}
Let $f$ be
  a directionally differentiable function $f$ with global optimum $x^*$.
Then $f$ is \emph{$\tau$-star-convex}\footnote{
\citet{hardt2016gradient} define the same concept under $\tau$-weakly-quasi-convexity.
However, per our previous discussion, it appears more appropriate to call this property $\tau$-star-convexity.
Especially since when $\tau=1$ we get back star-convexity, which, as we have seen is not a weakening of quasi-convexity.
}
 on a set $\cX$ at
  $x^* \in \cH$ if there is $\tau > 0$ such that for all $x \in \cX \cap \dom(f)$,
  \begin{align}
     \tau (f(x)-f(x^*))\le -f'(x; x^* - x)\,.
    \label{eq:tau-quasi-convex-def}
  \end{align}
\end{definition}
Note that by \cref{lem:bregstar}, star-convexity corresponds to the case when $\tau = 1$.
\citet{hardt2016gradient} demonstrated that an objective function that arises naturally in the identification of certain
class of linear systems is $\tau$-star-convex with some $\tau>0$. For differentiable functions, \eqref{eq:tau-quasi-convex-def}
is equivalent to $f(x)-f(x^*) \le \frac{1}{\tau} \dotx{\nabla f(x),x-x^*}$, so it is a simple generalization of the linear upper bound one
typically uses to reduce online convex optimization to online linear optimization. Therefore, any regret bound that is proved via upper bounding
linearized losses automatically extends to $\tau$-star-convex functions. However, in general, it may require substantial work to identify what assumptions are used exactly in
proving an upper bound on the linearized loss (e.g., \citealp{hardt2016gradient} reproved the convergence guarantees for smooth SGD). The next lemma
shows that our techniques can automatically separate the effects of different assumptions and provide fast regret rates under appropriate circumstances.

\if0
Note that \eqref{eq:tau-quasi-convex-def} is equivalent to the requirement that $- \cB_{f}(x^*, x) \le \left( \frac{1}{\tau} -1\right) (-f'(x; x^* - x))$.
Since $-f'(x;x^*-x)\ge 0$ for $\tau \le 1$ this means that the contribution from the Bregman terms in our regret bound
grows at the same rate as the linear regret.
In fact, \eqref{eq:tau-quasi-convex-def} is a simple generalization of the linear lower bound that one typically use to reduce online convex optimization to online linear optimization, leading to the following lemma:
\fi
\begin{lemma}[Basic regret bound under $\tau$-star-convexity]
\label{lem:rboundtaustar}
  Let $f$ be locally directionally differentiable and $\tau$-star-convex on a set $\cX$ at $x^*$, $f_1\dots=f_T = f$.
  Then, for all $x_t \in \cX \cap \dom(f_t)$ and $g_t \in \cH$ ($t=1,2, \dots, T$),
  \begin{align*}
    R_T(x^*) \le \frac{1}{\tau} \left(R_T^{+} (x^*) + \sum_{t=1}^{T} \dotx{g_t, x_{t} - x_{t+1}} + \delta_t \right)\,.
  \end{align*}
\end{lemma}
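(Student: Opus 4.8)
The plan is to obtain the bound by feeding the defining inequality of $\tau$-star-convexity into the regret decomposition of \cref{lem:regret-decomposition}. Since $f_1=\dots=f_T=f$, that decomposition reads
\[
 R_T(x^*) = R_T^{+}(x^*) + \sum_{t=1}^{T} \dotx{g_t, x_t - x_{t+1}} - \sum_{t=1}^{T} \cB_{f}(x^*, x_t) + \sum_{t=1}^{T} \delta_t\,,
\]
with $\delta_t = -f'(x_t; x^* - x_t) + \dotx{g_t, x^* - x_t}$; the required directional derivatives exist because $\tau$-star-convexity (\cref{def:taustarconv}) presupposes directional differentiability, and the points $x_t$ lie in $\cX \cap \dom(f)$ by hypothesis.

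First I would rewrite the loss-divergence term. Since $x^*$ is a global minimizer, $f(x^*)$ is finite, so $\cB_f(x^*,x_t) = f(x^*)-f(x_t)-f'(x_t;x^*-x_t)$ and hence $-\cB_f(x^*,x_t) = \bigl(f(x_t)-f(x^*)\bigr) + f'(x_t;x^*-x_t)$. Next I would apply \eqref{eq:tau-quasi-convex-def} at $x = x_t \in \cX \cap \dom(f)$, i.e.\ $f'(x_t;x^*-x_t) \le -\tau\bigl(f(x_t)-f(x^*)\bigr)$, to get $-\cB_f(x^*,x_t) \le (1-\tau)\bigl(f(x_t)-f(x^*)\bigr)$. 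Summing over $t$, substituting into the decomposition, and using $R_T(x^*) = \sum_{t=1}^T\bigl(f(x_t)-f(x^*)\bigr)$ gives
\[
 R_T(x^*) \le R_T^{+}(x^*) + \sum_{t=1}^{T}\dotx{g_t,x_t-x_{t+1}} + (1-\tau)R_T(x^*) + \sum_{t=1}^{T}\delta_t\,,
\]
and moving $(1-\tau)R_T(x^*)$ to the left-hand side and dividing by $\tau>0$ yields the claim. (Equivalently one can bypass \cref{lem:regret-decomposition}: $\tau$-star-convexity directly gives $R_T(x^*) \le \tfrac1\tau\sum_t\bigl(-f'(x_t;x^*-x_t)\bigr)$, and expanding $-f'(x_t;x^*-x_t) = \delta_t + \dotx{g_t,x_t-x_{t+1}} + \dotx{g_t,x_{t+1}-x^*}$ together with $\sum_t\dotx{g_t,x_{t+1}-x^*} = R_T^{+}(x^*)$ finishes.)

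I do not expect a genuine obstacle: the statement is essentially a one-line rearrangement. The only things needing care are the extended-real bookkeeping — noting that $f(x^*)$ is finite, and that if $f'(x_t;x^*-x_t) = -\infty$ for some $t$ then $\delta_t = +\infty$ and the inequality is vacuously true, so we may assume every quantity is finite — and confirming that the hypotheses of \cref{lem:regret-decomposition} (directional differentiability along the relevant directions, $x_t\in\dom(f)$) hold, which follow from $\tau$-star-convexity and the assumption $x_t\in\cX\cap\dom(f_t)$.
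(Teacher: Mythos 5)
Your proof is correct and takes essentially the paper's approach: the parenthetical alternative at the end is precisely the paper's own proof (add and subtract $\dotx{g_t,x^*-x_t}$ in \eqref{eq:tau-quasi-convex-def}, sum, and recognize $R_T^+$). Your main argument---substituting $-\cB_f(x^*,x_t) \le (1-\tau)\bigl(f(x_t)-f(x^*)\bigr)$ into \cref{lem:regret-decomposition} and absorbing the $(1-\tau)R_T(x^*)$ term---is the slightly longer route that the paper's proof explicitly acknowledges (``can be derived from the right-hand side of \eqref{eq:regret-decomposition}'') but does not spell out.
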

\begin{proof}
The proof can be derived from the right-hand side of \eqref{eq:regret-decomposition}, but a shorter direct proof is also available:
Add and subtract $\dotx{g_t,x^*-x_t}$ to the right-hand side of \eqref{eq:tau-quasi-convex-def}.
Noticing that $-f'(x_t;x^*-x_t) + \dotx{g_t,x^*-x_t} = \delta_t$, summing up and using the definition $R_T^+(x^*) = \sum_t \dotx{g_t,x_{t+1}-x^*}$ gives the result.
\end{proof}
Now since the regret was bounded through the expression in the parentheses of the previous display,
\cref{cor:recoveries-stoch,cor:recoveries-stoch-smooth} apply. In other words:
\begin{itemize}
  \item For stochastic optimization of directionally-differentiable $\tau$-star-convex functions in Hilbert spaces, \AdaFTRL and \AdaMD enjoy \emph{$1/\tau$-times the same regret as when they are applied to linearized loss functions} (including fast rates due to other assumptions, e.g., smoothness).
\end{itemize}

In the convex case the strong convexity of the losses (\cref{assum:strong-cvx-loss}) implied that their Bregman divergences are nonnegative (\cref{assum:positive-bregman-ft}).
The natural generalization of this leads to the following definition:
\begin{definition}[$\tau$-star-strong-convexity]
Let $f$, $r$ be directionally differentiable and let $x^*$ be a global minimum of $f$.
Then, $f$ is $\tau$-star-strongly-convex w.r.t. $r$ if $S:=\dom(f) \cap \dom(r)$ is non-empty
and there exists $\tau > 0$ such that for all $x \in S$ and some minimizer $x^*$ of $f$,
\begin{align}
\tau(f(x) - f(x^*)) \le -f'(x;x^*-x)- \cB_{r}(x^*, x)\,.
  \label{eq:taussc}
\end{align}
\end{definition}

Replacing $\tau$-star-convexity with $\tau$-star-strong-convexity gives the following analogue of
\cref{lem:rboundtaustar}:
\begin{lemma}[Basic regret bound under $\tau$-star-strong-convexity]
\label{lem:rboundtaustarstrong}
  Let $f$, $r$ be locally directionally differentiable.
  Assume that $f$ is $\tau$-star-strongly-convex w.r.t. $r$ at $x^*$ on a set $\cX$.
  Then, for all $x_t \in \cX \cap \dom(f_t)\cap \dom(r)$ and $g_t \in \cH$ ($t=1,2, \dots, T$),
  \begin{align*}
    R_T(x^*) \le \frac{1}{\tau} \left(R_T^{+} (x^*) + \sum_{t=1}^{T} \dotx{g_t, x_{t} - x_{t+1}} + \delta_t - \cB_{r}(x^*, x_t) \right)\,.
  \end{align*}
\end{lemma}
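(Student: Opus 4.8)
The plan is to mirror the short, direct argument given for \cref{lem:rboundtaustar}, carrying along the one extra term that distinguishes the defining inequality \eqref{eq:taussc} of $\tau$-star-strong-convexity from \eqref{eq:tau-quasi-convex-def}. As in that lemma I take $f_1 = \dots = f_T = f$, so that $R_T(x^*) = \sum_{t=1}^{T}\bigl(f(x_t) - f(x^*)\bigr)$. For each $t$, the hypothesis ensures $x_t \in \cX \cap \dom(f_t) \cap \dom(r)$, which is exactly the set on which \eqref{eq:taussc} is assumed to hold, so I may evaluate it at $x = x_t$ to get $\tau\bigl(f(x_t) - f(x^*)\bigr) \le -f'(x_t; x^* - x_t) - \cB_r(x^*, x_t)$. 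Summing over $t = 1, \dots, T$ yields $\tau R_T(x^*) \le \sum_{t=1}^{T}\bigl(-f'(x_t; x^* - x_t)\bigr) - \sum_{t=1}^{T}\cB_r(x^*, x_t)$.

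Next I reproduce the manipulation of the $-f'(x_t; x^* - x_t)$ sum used for \cref{lem:rboundtaustar}: add and subtract $\dotx{g_t, x^* - x_t}$ inside each summand, so that $-f'(x_t; x^* - x_t) = \delta_t + \dotx{g_t, x_t - x^*}$ by the definition of $\delta_t$; then split $x_t - x^* = (x_{t+1} - x^*) + (x_t - x_{t+1})$ and use $R_T^{+}(x^*) = \sum_{t=1}^{T}\dotx{g_t, x_{t+1} - x^*}$ to rewrite $\sum_{t=1}^{T}\dotx{g_t, x_t - x^*}$ as $R_T^{+}(x^*) + \sum_{t=1}^{T}\dotx{g_t, x_t - x_{t+1}}$. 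The Bregman terms $-\sum_{t=1}^{T}\cB_r(x^*, x_t)$ are carried along untouched. Collecting everything gives $\tau R_T(x^*) \le R_T^{+}(x^*) + \sum_{t=1}^{T}\dotx{g_t, x_t - x_{t+1}} + \sum_{t=1}^{T}\delta_t - \sum_{t=1}^{T}\cB_r(x^*, x_t)$, and dividing by $\tau > 0$ is the claimed bound. A slightly longer alternative is to substitute $-f'(x_t;x^*-x_t) = \cB_f(x^*,x_t) + \bigl(f(x_t)-f(x^*)\bigr)$ into \eqref{eq:taussc}, obtain $(\tau-1)\bigl(f(x_t)-f(x^*)\bigr) \le \cB_f(x^*,x_t) - \cB_r(x^*,x_t)$, sum over $t$, and feed the result into the regret decomposition \eqref{eq:regret-decomposition}; this arrives at the same inequality.

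The main (and essentially only) point to watch is well-definedness: that $f(x_t)$ and $f(x^*)$ are finite and that $f'(x_t; x^* - x_t)$ and $\cB_r(x^*, x_t)$ exist, which is precisely why the statement restricts to $x_t \in \cX \cap \dom(f_t) \cap \dom(r)$ and assumes $f$ and $r$ are (locally) directionally differentiable. As in \cref{lem:rboundtaustar}, one should also state explicitly that $f_t = f$ for all $t$, since otherwise $R_T(x^*)$ need not equal $\sum_{t=1}^{T}\bigl(f(x_t) - f(x^*)\bigr)$. Finally, as a sanity check, setting $\tau = 1$ makes \eqref{eq:taussc} read $\cB_f(x^*, x_t) \ge \cB_r(x^*, x_t)$, so the bound collapses to what one gets from \eqref{eq:regret-decomposition} in the strongly-convex-loss case, consistent with \cref{assum:strong-cvx-loss}.
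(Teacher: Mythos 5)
Your proof is correct and matches the paper's approach exactly: the paper's proof simply says to repeat the short direct argument from Lemma~\ref{lem:rboundtaustar} using \eqref{eq:taussc} in place of \eqref{eq:tau-quasi-convex-def}, which is precisely what you do, carrying the additional $-\cB_r(x^*,x_t)$ term through the add-and-subtract manipulation and the summation. Your remarks on well-definedness and the implicit $f_t = f$ convention are reasonable clarifications but do not change the substance.
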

\begin{proof}
The proof follows the same step as that of \cref{lem:rboundtaustar}, except that we need to use \eqref{eq:taussc} instead of \eqref{eq:tau-quasi-convex-def}.
\end{proof}
It follows that the same manipulations as in  \cref{cor:recoveries-stoch,cor:recoveries-stoch-smooth}  imply:
\begin{itemize}
  \item For stochastic optimization of directionally-differentiable $\tau$-star-strongly-convex functions in Hilbert spaces, \AdaFTRL and \AdaMD converge to the global optimum \emph{with $1/\tau$-times the same rate as they converge for strongly convex functions}.
\end{itemize}

It appears that $\tau$-star-strong-convexity is related to the Polyak-\L{}ojasiewicz (PL) inequality. Recall that a differentiable function $f$ satisfies the PL inequality with constant $\mu>0$ if
\begin{align*}
 \mu (f(x) - f(x^*)) \le \frac12 \norm{\nabla f(x)}_2^2\,,
\end{align*}
where $x^*$ is a global minimizer of $f$. Proposed independently and simultaneously by \cite{Poly63} and \cite{Loj63}, the PL inequality appears to play a fundamental role in the study of incremental gradient algorithms (see \citealt{KaNoSch16} and the references therein). As star-convexity, the PL inequality can also be satisfied by non-convex functions, partly explaining the prominent role it plays in the analysis of gradient methods. We can see that $\tau$-star-strong-convexity implies the PL inequality when $r$ is the squared Euclidean norm:
\begin{lemma}[PL is implied by star-strong-convexity]
Let $r(x) = \frac12 \norm{x}_2^2$ and let $f$ be differentiable.
If $f$ is $\tau$-star-strongly-convex w.r.t. $r$, then $f$ also satisfies the PL inequality with $\mu = \tau$.
\end{lemma}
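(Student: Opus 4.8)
The plan is to unwind the definition of $\tau$-star-strong-convexity for the specific choice $r(x) = \frac12\norm{x}_2^2$, compute the Bregman divergence $\cB_r$ in closed form, and then bound the resulting linear-minus-quadratic expression by the squared gradient norm via a completion-of-squares (Fenchel--Young) argument.

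First I would record that, since $f$ is differentiable, $f'(x; x^* - x) = \dotx{\nabla f(x), x^* - x}$, so the defining inequality \eqref{eq:taussc} states that there is a global minimizer $x^*$ of $f$ and a $\tau>0$ such that, for all $x$,
\[
\tau\bigl(f(x) - f(x^*)\bigr) \;\le\; \dotx{\nabla f(x), x^* - x} - \cB_{r}(x^*, x)\,.
\]
Next I would compute $\cB_r(x^*,x)$. Since $r$ is differentiable on all of $\cH$ with $\nabla r(x) = x$, Definition~\ref{def:bregman-div} gives $\cB_r(x^*,x) = \tfrac12\norm{x^*}_2^2 - \tfrac12\norm{x}_2^2 - \dotx{x,\, x^* - x}$, and expanding the inner product shows that this equals $\tfrac12\norm{x^* - x}_2^2$. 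Substituting, the displayed inequality becomes
\[
\tau\bigl(f(x) - f(x^*)\bigr) \;\le\; \dotx{\nabla f(x),\, x^* - x} - \tfrac12\norm{x^* - x}_2^2\,.
\]

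The final step is the identity $\dotx{a,v} - \tfrac12\norm{v}_2^2 = \tfrac12\norm{a}_2^2 - \tfrac12\norm{a-v}_2^2 \le \tfrac12\norm{a}_2^2$, valid for any $a,v \in \cH$; applying it with $a = \nabla f(x)$ and $v = x^* - x$ yields
\[
\tau\bigl(f(x) - f(x^*)\bigr) \;\le\; \tfrac12\norm{\nabla f(x)}_2^2\,,
\]
which is exactly the PL inequality with $\mu = \tau$. I expect no real obstacle here: the only points requiring a word of care are that the quantifier structure of \eqref{eq:taussc} already fixes one global minimizer $x^*$ and asserts the bound for every $x$, which is precisely the form the PL inequality demands, and that $\dom(f)\cap\dom(r) = \dom(f)$ since $r$ is finite everywhere, so the computation of $\cB_r$ is unconditionally valid. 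The ``hard part,'' such as it is, is merely recognizing the completion-of-squares identity that converts the linear-minus-quadratic upper bound into the gradient-norm bound.
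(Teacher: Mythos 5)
Your argument is essentially the same as the paper's: compute $\cB_r(x^*,x)=\tfrac12\norm{x^*-x}_2^2$ and then bound the linear-minus-quadratic term by $\tfrac12\norm{\nabla f(x)}_2^2$ via completion of squares (equivalently, Fenchel--Young), which is exactly the paper's route. One small slip: after you correctly note $f'(x;x^*-x)=\dotx{\nabla f(x),x^*-x}$, the right-hand side of \eqref{eq:taussc} should become $\dotx{\nabla f(x),x-x^*}-\cB_r(x^*,x)$ (note the sign), not $\dotx{\nabla f(x),x^*-x}-\cB_r(x^*,x)$; this is harmless here because the completion-of-squares bound $\dotx{a,v}-\tfrac12\norm{v}^2\le\tfrac12\norm{a}^2$ is invariant under $v\mapsto -v$, so the final inequality is unaffected.
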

\begin{proof}
Assume that $f$ satisfies \eqref{eq:taussc}.
We have
\[
-f'(x;x-x^*) = \dotx{ \nabla f(x), x^*-x} \le \frac12 \left( \norm{\nabla f(x)}^2 + \norm{x^*-x}^2 \right)\,,
\]
where the second step follows from the Fenchel-Young inequality. As it is well known, $B_r(x,y) = \frac12 \norm{x-y}^2$.
Thus,  \eqref{eq:taussc} implies that $\tau(f(x) - f(x^*)) \le  \frac12 \norm{\nabla f(x)}_2^2 $.
\end{proof}

Finally, note that the results above do not preclude the use of other algorithmic ideas, such as implicit-update, non-linearized, or composite-objective learning; the same extensions of \cref{cor:recoveries-stoch,cor:recoveries-stoch-smooth}, as discussed in \cref{sec:composite-objective,sec:implicit-update}, apply here as well. In addition, there are interesting classes of non-convex problems other than the PL class; see, e.g., \citet{KaNoSch16}.
A direction for future work is to explore whether these classes relate to specific conditions on Bregman divergences, and whether similar convergence results for general adaptive optimization are also possible under these function classes.

\section{Discussion}
\label{sec:related-work}
In this section we compare the results obtained in this paper to the previous attempts at unified analysis of adaptive FTRL and MD. A starting point of our work was the unifying treatment of online learning algorithms by \citet{mcmahan2014survey}, as well as the generalized adaptive FTRL analysis of \citet{orabona2015generalized}.

\subsection{Comparison to the analysis of \citet{mcmahan2014survey}}
\label{sec:mcmahan}
\citet{mcmahan2014survey} also studied a unified, modular analysis of MD and FTRL algorithms (albeit with different modules), assuming that the regularizers $p_t,q_t,r_t$ are convex, non-negative, and satisfy \cref{assum:strong-cvx-regularizer}. \AdaFTRL and \AdaMD encompass all of the algorithms they considered. In particular, their Theorems 1 and 2 are special cases of Corollary~\ref{cor:recoveries} (recall that non-linearized FTRL, and in particular strongly-convex FTRL, are also special cases of \AdaFTRL; see \cref{sec:implicit-update}). In addition, our analysis applies more generally to infinite-dimensional Hilbert spaces, our presentation of \AdaFTRL encompasses a larger set of algorithms, the relaxed assumptions under which we analyzed \AdaFTRL and \AdaMD remove certain practical limitations that existed in the work of \citet{mcmahan2014survey}, and our analysis captures a wider range of learning settings. We discuss these improvements below.

Importantly, \citet{mcmahan2014survey} also provides a reduction from MD to a version of \FTProx. This, in particular, illuminates important differences between MD and FTRL in composite-objective learning. We refer the reader to Section~6 of their paper. We decided to keep the presentation of the two algorithms separate to facilitate the relaxation of the assumptions on the regularizers; see \cref{assum:ftrl,assum:md} and the discussion below.

\subsubsection{Relaxing the assumptions on the regularizers}
A central part of the modularity of our analysis comes from the flexibility of \cref{assum:ftrl,assum:md} on the regularizers of \AdaFTRL and \AdaMD.
In particular, unlike \citet{mcmahan2014survey}, we do not assume that the individual regularizers $p_t,q_t,r_t$ are non-negative or convex.

This relaxation provides two benefits. First, with the non-negativity restriction removed, we can add arbitrary, possibly linear, components to the regularizers. As we showed above, this resulted in a simple recovery and analysis of optimistic FTRL and a new class of optimistic MD algorithms (\cref{sec:optimistic-learning}), as well as a straightforward recovery of implicit and non-linearized updates, even for non-convex functions (\cref{sec:implicit-update}).

Second, with the convexity assumption removed, \AdaFTRL and \AdaMD can accommodate algorithmic ideas such as non-decreasing regularization.
For example, AdaDelay \citep{sra2016adadelay}, an instance of \AdaMD, uses $r_{1:t} = \eta_t \|\cdot\|^2$, but $\eta_t$ is not guaranteed to be non-decreasing, i.e., $r_t$ could be negative and non-convex (while $r_{0:t}$ still remains convex for all $t$). Now, note that MD and \FTProx are closely related. Particularly, if $p_t$ are themselves Bregman divergences (as in proximal \AdaGrad), then \FTProx and MD have identical regret bounds. Therefore, the techniques of \citet{sra2016adadelay} for controlling the regularizer terms in the bound could be naturally applied, almost with no modification, to an \FTProx version of AdaDelay. This extension to \FTProx is interesting since, as mentioned before, composite \FTProx with an L1 penalty tends to produce sparser results compared to \AdaMD \citep[Section 6]{mcmahan2014survey}. Thus, while this variant of \FTProx is a special case of \AdaFTRL (e.g., \cref{cor:recoveries} applies), it was not clear how to analyze this algorithm under the assumptions made by \citet{mcmahan2014survey}.

Finally, the choice to separate the proximal and non-proximal regularizers in \AdaFTRL provides certain conveniences. In particular, the $q_t$ terms can take the role of incorporating information (such as composite terms) into \AdaFTRL, while the proximal part $p_t$ remains intact. This precludes the need to provide a separate analysis for \FTProx every time the structure of information changes (e.g., when implicit updates are added). Thus, unlike Section 5 of \citet{mcmahan2014survey}, we did not need to provide a separate analysis (their Theorem 10) for composite-objective \FTProx. We also avoided the complications with composite optimistic \FTProx as in \citet{mohri2016accelerating}; see Section~\ref{sec:optimistic-learning}.

\subsubsection{The regret decomposition and analysis of new learning settings}
\label{sec:flexible-settings}
In comparison to \citet{mcmahan2014survey}, the analysis we provided exhibits much flexibility across learning settings.
In particular, the regret decomposition given by Lemma~\ref{lem:regret-decomposition} enabled us to accommodate a wide range of learning settings, and separate the effect of the learning setting from the forward regret of the algorithm. Building on this, for example, we provided a clean analysis of variational and variance-dependent bounds for smooth losses (and generalized them to adaptive algorithms). In addition, by encapsulating the effect of loss convexity into \cref{assum:positive-bregman-ft}, we could generalize the analysis to certain non-convex classes.

\subsection{Comparison to the analysis of \citet{orabona2015generalized}}
\label{sec:orabona}
\citet{orabona2015generalized} study a special case of \AdaFTRL, where $p_t \equiv 0$ and Assumption~\ref{assum:strong-cvx-regularizer} holds. The main result of \citet{orabona2015generalized}, i.e., their Lemma~1, can be though of as playing the same role as \eqref{eq:put-together-ftrl}. We emphasize, however, that their Lemma~1 is a quite general result. For example, with a few algebraic operations we could recover a special case of \cref{thm:cheating-regret} from their Lemma~1, by setting $z_t=0$ and moving the linear components to their $f_t$ functions. Nevertheless, our analysis extends the work of \citet{orabona2015generalized} to infinite-dimensional Hilbert-spaces, to \FTProx algorithms, and to \AdaMD. Furthermore, we demonstrated a principled way of mixing algorithmic ideas and incorporating information from the learning setting into FTRL and MD using the $q_t$ functions. Finally, the comments of Section~\ref{sec:flexible-settings} apply.

Importantly, the authors also provide a compact analysis of the Vovk-Azoury-Warmuth algorithm, as well as online binary classification algorithms. These results are essentially obtained from combining their Lemma~1 with interesting regret decompositions other than the one we presented in Lemma~\ref{lem:regret-decomposition}. It seems possible to combine their regret decompositions with our analysis to extend their result to \AdaMD algorithms, and to obtain refined bounds for smooth losses. We leave this direction for future work.

\section{Conclusion and future work}
We provided a generalized, unified and modular framework for analyzing online and stochastic optimization algorithms, and demonstrated its flexibility on several existing, as well as new, algorithms and learning settings. Our framework can be used together with other algorithmic ideas and learning settings, e.g., adaptive delayed-feedback algorithms like AdaDelay~\citep{sra2016adadelay}, but results related to this where out of the scope of this work.
There are many interesting questions related to non-convex optimization; while we showed that our results extend to the so-called $\tau$-star(-strongly)-convex functions, which have already found some applications, it remains to be seen whether they also extend to other settings, such as optimization of quasi-convex functions, or functions that satisfy the Polyak-\L{}ojasiewicz inequality.
Exploring these and other applications of this framework is left for future work.

\acks{
We would like to thank the anonymous reviewers for several insightful comments that helped improve the quality of the paper.
This work was supported by Amii (formerly AICML) and NSERC.
}

\bibliography{refs}

\newpage
\appendix

\section{Proof of the regret decomposition Lemma~\ref{lem:regret-decomposition}}
\label{app:prL1}
\begin{proof}
By definition,
\begin{align*}
  f_t(x_t) - f_t(x^*) & = -\cB_{f_t}(x^*, x_t) - f'(x_t; x^*-x_t) \\
  & = \dotx{g_t, x_t - x^*} - \cB_{f_t}(x^*, x_t) + \delta_t \\
  & =  \dotx{g_t, x_{t+1} - x^*} + \dotx{g_t, x_t - x_{t+1}} - \cB_{f_t}(x^*, x_t) + \delta_t\,.
\end{align*}
Summing over $t$ completes the proof.
\end{proof}

\section{Formal statements and proofs for the standard results described in Table~\ref{tbl:summary-of-results}}
\label{sec:standard_proofs}

Putting Lemma~\ref{lem:regret-decomposition} and Theorem~\ref{thm:cheating-regret} together, for \AdaFTRL we obtain
\begin{align}
  R_T(x^*) \le &\ - \sum_{t=1}^{T} \cB_{f_t}(x^*, x_{t}) + \sum_{t=0}^{T} \left( q_t(x^*) - q_{t}(x_{t+1}) \right) + \sum_{t=1}^T \left( p_t(x^*) - p_t(x_t) \right) \nonumber \\
  &\ - \sum_{t=1}^{T} \cB_{r_{1:t}}(x_{t+1}, x_t) + \sum_{t=1}^{T} \dotx{g_t, x_{t} - x_{t+1}} + \sum_{t=1}^{T} \delta_t \,, \label{eq:put-together-ftrl}
\intertext{whereas for \AdaMD,}
  R_T(x^*) \le &\ - \sum_{t=1}^{T} \cB_{f_t}(x^*, x_{t}) + \sum_{t=0}^{T} \left( q_t(x^*) - q_{t}(x_{t+1}) \right) + \sum_{t=1}^{T} \cB_{p_t}(x^*, x_t) \nonumber \\
  &\
  - \sum_{t=1}^{T} \cB_{r_{1:t}}(x_{t+1}, x_t) + \sum_{t=1}^{T} \dotx{g_t, x_{t} - x_{t+1}}  + \sum_{t=1}^{T} \delta_t \,. \label{eq:put-together-md}
\end{align}

Next we prove the concrete regret bounds, given in Table~\ref{tbl:summary-of-results}, based on the above. A schematic view of the proof ideas is given in Figure~\ref{fig:proofs}.

\begin{corollary}\label{cor:recoveries}
  Consider the ``Online Optimization'' setting (Assumption \ref{assum:online-opt}), using \AdaFTRL (under Assumption~\ref{assum:ftrl}) or \AdaMD (under Assumption~\ref{assum:md}). Suppose that Assumptions~\ref{assum:positive-bregman-ft} and ~\ref{assum:strong-cvx-regularizer} hold. Then,
  \begin{enumerate}[(i)]
    \item \label{itm:simple-online-md}
    the regret of \AdaMD is bounded as
    \begin{align*}
      R_T(x^*) \le &\ \sum_{t=0}^{T} \left( q_t(x^*) - q_{t}(x_{t+1}) \right) + \sum_{t=1}^{T} \cB_{p_t}(x^*, x_t)  + \sum_{t=1}^{T} \frac{1}{2} \|g_t\|_{(t),*}^2\,;
    \end{align*}
    \item \label{itm:simple-online} the regret of \AdaFTRL is bounded as
    \begin{align*}
      R_T(x^*) \le &\ \sum_{t=0}^{T} \left( q_t(x^*) - q_{t}(x_{t+1}) \right) + \sum_{t=1}^{T} \left( p_t(x^*) - p_t(x_t) \right) + \sum_{t=1}^{T} \frac{1}{2} \|g_t\|_{(t),*}^2\,;
    \end{align*}
    \item \label{itm:stcvx-online-md}
    under Assumption~\ref{assum:strong-cvx-loss}, the regret of \AdaMD is bounded as
    \begin{align*}
      R_T(x^*) \le &\ \sum_{t=0}^{T} \left( q_t(x^*) - q_{t}(x_{t+1}) \right) + \sum_{t=1}^{T} \frac{1}{2} \|g_t\|_{(t),*}^2\,.
    \end{align*}
  \end{enumerate}
 \end{corollary}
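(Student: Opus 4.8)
The plan is to obtain all three bounds by specializing the two master inequalities \eqref{eq:put-together-ftrl} and \eqref{eq:put-together-md} --- which are themselves just \cref{lem:regret-decomposition} fed into \cref{thm:cheating-regret} --- and then discarding or combining the leftover ``error'' terms, one assumption at a time, as in \cref{fig:proofs}. First I would note that under \cref{assum:online-opt} the feedback $g_t$ is a local sub-gradient of $f_t$ at $x_t$, so $\dotx{g_t,x^*-x_t}\le f_t'(x_t;x^*-x_t)$ and hence $\delta_t\le 0$; this lets the whole $\sum_t\delta_t$ term be dropped. Similarly, \cref{assum:positive-bregman-ft} gives $\cB_{f_t}(x^*,x_t)\ge 0$, so $-\sum_t\cB_{f_t}(x^*,x_t)$ is non-positive and may be dropped as well (this is the step I will replace in the strongly-convex case below).

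The single quantitative step is to bound, for each $t$, the quantity $\dotx{g_t,x_t-x_{t+1}}-\cB_{r_{1:t}}(x_{t+1},x_t)$. I would use \cref{assum:strong-cvx-regularizer} to get $\cB_{r_{1:t}}(x_{t+1},x_t)\ge\tfrac12\|x_{t+1}-x_t\|_{(t)}^2$, then Cauchy--Schwarz for the dual norm together with the elementary inequality $as-\tfrac12 s^2\le\tfrac12 a^2$ (Fenchel--Young for the squared norm, applied with $a=\|g_t\|_{(t),*}$ and $s=\|x_t-x_{t+1}\|_{(t)}$), which yields $\dotx{g_t,x_t-x_{t+1}}-\cB_{r_{1:t}}(x_{t+1},x_t)\le\tfrac12\|g_t\|_{(t),*}^2$. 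Summing over $t$ and adding back the surviving regularizer terms turns \eqref{eq:put-together-md} into the \AdaMD bound and \eqref{eq:put-together-ftrl} into the \AdaFTRL bound; the only difference between the two is whether the proximal contribution appears as $\sum_t\cB_{p_t}(x^*,x_t)$ or as $\sum_t\bigl(p_t(x^*)-p_t(x_t)\bigr)$, which is already how it comes out of \cref{thm:cheating-regret}.

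For the strongly-convex case I would not discard $-\cB_{f_t}(x^*,x_t)$ outright but instead pair it with the $+\cB_{p_t}(x^*,x_t)$ term already present in \eqref{eq:put-together-md}: \cref{assum:strong-cvx-loss} says precisely that $\cB_{f_t}(x^*,x_t)\ge\cB_{p_t}(x^*,x_t)$, so the two combine into something non-positive, and the remainder of the argument is verbatim the one above. I do not anticipate a genuine obstacle: once the decomposition is in place everything collapses to the scalar inequality $as-\tfrac12 s^2\le\tfrac12 a^2$ plus sign bookkeeping. The only spots needing a moment's care are verifying $\delta_t\le 0$ from the local-sub-gradient definition in \cref{sec:notation-defs} (in particular that $f_t'(x_t;x^*-x_t)$ is well defined, which follows from directional differentiability), and checking that the Fenchel--Young step is legitimate for the possibly time-varying norm $\|\cdot\|_{(t)}$ and its dual.
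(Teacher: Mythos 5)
Your proposal is correct and follows essentially the same route as the paper's own proof: drop $\delta_t$ via \cref{assum:online-opt}, drop $-\cB_{f_t}(x^*,x_t)$ via \cref{assum:positive-bregman-ft}, apply Fenchel--Young (your Cauchy--Schwarz plus $as-\tfrac12 s^2\le\tfrac12 a^2$ is exactly that) combined with \cref{assum:strong-cvx-regularizer} to absorb $\dotx{g_t,x_t-x_{t+1}}-\cB_{r_{1:t}}(x_{t+1},x_t)$, and in part (iii) use \cref{assum:strong-cvx-loss} to cancel $\cB_{f_t}(x^*,x_t)$ against $\cB_{p_t}(x^*,x_t)$ rather than discarding it. No gaps.
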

 \begin{proof}
   Note that by Assumption~\ref{assum:online-opt}, we have
   \begin{align}
     \delta_t \le 0\,, \label{eq:delta-online}
   \end{align}
   for all $t=1,2,\dots,T$. In addition, by the Fenchel-Young inequality and \cref{assum:strong-cvx-regularizer},
   \begin{align}
     \dotx{g_t, x_t - x_{t+1}} \le
     &\
     \frac{1}{2}\|x_{t} - x_{t+1}\|_{(t)}^2 + \frac{1}{2} \| g_t \|_{(t),*}^2
     \nonumber
     \\
     \le &\
     \cB_{r_{1:t}}(x_{t+1}, x_t) + \frac{1}{2} \| g_t \|_{(t),*}^2 \,.\label{eq:fenchel-young}
   \end{align}
   Putting \eqref{eq:delta-online}, \eqref{eq:fenchel-young}, and Assumption~\ref{assum:positive-bregman-ft} into \eqref{eq:put-together-ftrl} and \eqref{eq:put-together-md} and cancelling out the matching terms proves \eqref{itm:simple-online-md} and \eqref{itm:simple-online}.
   Finally, to prove \eqref{itm:stcvx-smooth-stoch-md}, we use Assumption~\ref{assum:strong-cvx-loss} to cancel the $\cB_{f_t}(x^*,x_t)$ terms with the $\cB_{p_t}(x^*,x_t)$ terms (rather than dropping them by \cref{assum:positive-bregman-ft}).
 \end{proof}

 \begin{corollary}\label{cor:recoveries-stoch}
  Consider the ``Stochastic Optimization'' setting (Assumption~\ref{assum:stoch-opt}), using \AdaFTRL (under Assumption~\ref{assum:ftrl}) or \AdaMD (under Assumption~\ref{assum:md}). Suppose that Assumptions~\ref{assum:positive-bregman-ft} and ~\ref{assum:strong-cvx-regularizer} hold. Then,
  \begin{enumerate}[(i)]
    \item \label{itm:simple-stoch-md}
    the regret of \AdaMD is bounded as
    \begin{align*}
      \E{R_T(x^*)} \le &\ \E{\sum_{t=0}^{T} \left( q_t(x^*) - q_{t}(x_{t+1}) \right) + \sum_{t=1}^{T} \cB_{p_t}(x^*, x_t)  + \sum_{t=1}^{T} \frac{1}{2} \|g_t\|_{(t),*}^2}\,;
    \end{align*}
    \item \label{itm:simple-stoch}
    the regret of \AdaFTRL is bounded as
    \begin{align*}
      \E{R_T(x^*)} \le &\ \E{\sum_{t=0}^{T} \left( q_t(x^*) - q_{t}(x_{t+1}) \right) + \sum_{t=1}^{T} \left( p_t(x^*) - p_t(x_t) \right) + \sum_{t=1}^{T} \frac{1}{2} \|g_t\|_{(t),*}^2}\,;
    \end{align*}
    \item \label{itm:stcvx-stoch-md}
    under Assumption~\ref{assum:strong-cvx-loss}, the regret of \AdaMD is bounded as
    \begin{align*}
      \E{R_T(x^*)} \le &\ \E{\sum_{t=0}^{T} \left( q_t(x^*) - q_{t}(x_{t+1}) \right) + \sum_{t=1}^{T} \frac{1}{2} \|g_t\|_{(t),*}^2}\,.
    \end{align*}
  \end{enumerate}
\end{corollary}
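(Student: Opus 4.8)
The plan is to mirror the proof of \cref{cor:recoveries}, keeping every deterministic step pathwise and isolating the single term that genuinely requires the stochastic assumption, namely $\sum_{t=1}^{T}\delta_t$. Since \cref{lem:regret-decomposition} and \cref{thm:cheating-regret} are deterministic statements valid for arbitrary sequences of points in $\cX$, the combined bounds \eqref{eq:put-together-ftrl} (for \AdaFTRL under \cref{assum:ftrl}) and \eqref{eq:put-together-md} (for \AdaMD under \cref{assum:md}) hold pathwise for the random iterates produced with feedback $g_1,\dots,g_T$. On each of these I would apply, exactly as in \cref{cor:recoveries}: the Fenchel--Young inequality together with \cref{assum:strong-cvx-regularizer} to get the pathwise estimate $\dotx{g_t,x_t-x_{t+1}} - \cB_{r_{1:t}}(x_{t+1},x_t) \le \tfrac12\|g_t\|_{(t),*}^2$ of \eqref{eq:fenchel-young}; and \cref{assum:positive-bregman-ft} to drop the loss-divergence terms $-\cB_{f}(x^*,x_t)\le 0$ in parts \eqref{itm:simple-stoch-md} and \eqref{itm:simple-stoch}, or \cref{assum:strong-cvx-loss} (which gives $-\cB_f(x^*,x_t)+\cB_{p_t}(x^*,x_t)\le 0$) to cancel them against the $\cB_{p_t}$ terms in the \AdaMD bound for part \eqref{itm:stcvx-stoch-md}. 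After these substitutions, which are all pathwise, the only term left over that does not already appear in the claimed bounds is $\sum_{t=1}^{T}\delta_t$.

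The new ingredient, replacing the pathwise inequality $\delta_t\le 0$ of \cref{cor:recoveries} (which used \cref{assum:online-opt}), is to show that $\EE\!\left[\sum_{t=1}^{T}\delta_t\right]\le 0$. Let $\mathcal{F}_{t-1}$ denote the $\sigma$-algebra generated by all randomness up to and including the selection of $x_t$, so that $x_t$ is $\mathcal{F}_{t-1}$-measurable; recall $\delta_t = -f'(x_t;x^*-x_t) + \dotx{g_t,x^*-x_t}$ since $f_t=f$. Taking the conditional expectation and using \cref{assum:stoch-opt}, $\EE[g_t\mid\mathcal{F}_{t-1}]\in\delta f(x_t)$, so the defining inequality of a local sub-gradient applied in the direction $z=x^*-x_t$ gives
\[
  \EE[\delta_t\mid\mathcal{F}_{t-1}] = -f'(x_t;x^*-x_t) + \dotx{\EE[g_t\mid\mathcal{F}_{t-1}],\,x^*-x_t} \le 0\,.
\]
The tower rule then yields $\EE[\delta_t]\le 0$ for each $t$, and hence, by linearity over the finite sum, $\EE\!\left[\sum_{t=1}^{T}\delta_t\right]\le 0$. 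Taking expectations of the pathwise bounds from the first step — the terms $q_t(x^*)-q_t(x_{t+1})$, $p_t(x^*)-p_t(x_t)$ or $\cB_{p_t}(x^*,x_t)$, and $\tfrac12\|g_t\|_{(t),*}^2$ all remain inside the single outer expectation, matching the right-hand sides of the corollary — and discarding the nonpositive $\EE[\sum_t\delta_t]$ produces \eqref{itm:simple-stoch-md}, \eqref{itm:simple-stoch}, and \eqref{itm:stcvx-stoch-md}.

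The main obstacle is essentially measure-theoretic bookkeeping rather than a genuine mathematical difficulty: one must fix the filtration so that $x_t$ is $\mathcal{F}_{t-1}$-measurable while $\EE[g_t\mid\mathcal{F}_{t-1}]\in\delta f(x_t)$, and note that the (possibly data-dependent) regularizers $p_t,q_t,r_{1:t}$ and the adaptive norms $\|\cdot\|_{(t)}$ are likewise $\mathcal{F}_t$-measurable, so that every pathwise bound survives under expectation and the interchange of sum and expectation above is legitimate. Beyond that, the argument is a verbatim repeat of \cref{cor:recoveries} with the pathwise $\delta_t\le 0$ replaced by $\EE[\delta_t\mid\mathcal{F}_{t-1}]\le 0$.
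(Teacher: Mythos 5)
Your proposal is correct and matches the paper's proof: the paper likewise reuses the pathwise bounds \eqref{eq:put-together-ftrl}, \eqref{eq:put-together-md}, \eqref{eq:fenchel-young}, and Assumption~\ref{assum:positive-bregman-ft} (or Assumption~\ref{assum:strong-cvx-loss}), replacing the pathwise $\delta_t\le 0$ of \cref{cor:recoveries} by $\E{\delta_t}\le 0$ via conditioning on $x_t$ and the local-sub-gradient inequality, exactly as you do. The only difference is cosmetic: you spell out the filtration and the tower rule, while the paper compresses this into a single line (its equation~\eqref{eq:delta-stoch}).
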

\begin{proof}
  Let $f_t = f$ in Lemma~\ref{lem:regret-decomposition} (hence in \eqref{eq:put-together-ftrl} and \eqref{eq:put-together-md}), and note that by Assumption~\ref{assum:stoch-opt}, we have
  \begin{align}
    \E{\delta_t} = \E{ f'(x_t; x^* - x_t) - \dotx{\E{g_t | x_t}, x_t - x^*} } \le 0\,, \label{eq:delta-stoch}
  \end{align}
  for all $t=1,2,\dots,T$.
  Similar to the proof of Corollary~\ref{cor:recoveries}, putting \eqref{eq:delta-stoch}, \eqref{eq:fenchel-young}, and Assumption~\ref{assum:positive-bregman-ft} into \eqref{eq:put-together-ftrl} and \eqref{eq:put-together-md} proves \eqref{itm:simple-stoch-md} and \eqref{itm:simple-stoch}. Finally, to prove \eqref{itm:stcvx-smooth-stoch-md}, one can use Assumption~\ref{assum:strong-cvx-loss} to cancel the $\cB_{f}(x^*,x_t)$ terms with the $\cB_{p_t}(x^*,x_t)$ terms (rather than dropping them by \cref{assum:positive-bregman-ft}).
\end{proof}

\begin{corollary}\label{cor:recoveries-stoch-smooth}
 Consider the ``Stochastic Optimization'' setting (Assumption~\ref{assum:stoch-opt}), using \AdaFTRL (under Assumption~\ref{assum:ftrl}) or \AdaMD (under Assumption~\ref{assum:md}).
 Suppose that Assumptions~\ref{assum:positive-bregman-ft}, ~\ref{assum:smooth-f} hold,
 and Assumption~\ref{assum:strong-cvx-regularizer} holds with $r_{1:t}-\|\cdot\|^2/2$ in place of $r_{1:t}$.\footnote{The modification to Assumption~\ref{assum:strong-cvx-regularizer} is equivalent to adding an extra $\|x\|^2/2$ regularizer to \AdaFTRL and \AdaMD.}
 Let $f^* := \inf_{x \in \cX} f(x)$, and define $D := f(x_1) - f^*$ and $\sigma_t := g_t - \nabla f(x_t)$. Then,
 \begin{enumerate}[(i)]
   \item \label{itm:smooth-stoch-md}
   the regret of \AdaMD is bounded as
   \begin{align*}
     &\ \E{R_T(x^*)} \le
     \\
     &\ \E{\frac{1}{2} \|x^*-x_1\|^2 + \sum_{t=0}^{T} \left( q_t(x^*) - q_{t}(x_{t+1}) \right) + \sum_{t=1}^{T} \cB_{p_t}(x^*, x_t) + \sum_{t=1}^{T} \frac{1}{2} \|\sigma_t\|_{(t),*}^2 + D}\,;
   \end{align*}
   \item \label{itm:smooth-stoch}
   the regret of \AdaFTRL is bounded as
   \begin{align*}
     &\ \E{R_T(x^*)} \le
     \\
     &\ \E{\frac{1}{2} \|x^*\|^2 + \sum_{t=0}^{T} \left( q_t(x^*) - q_{t}(x_{t+1}) \right) + \sum_{t=1}^{T} \left( p_t(x^*) - p_t(x_t) \right) + \sum_{t=1}^{T} \frac{1}{2} \|\sigma_t\|_{(t),*}^2 + D}\,;
   \end{align*}
   \item \label{itm:stcvx-smooth-stoch-md}
   under Assumption~\ref{assum:strong-cvx-loss}, the regret of \AdaMD is bounded as
   \begin{align*}
     \E{R_T(x^*)} \le &\ \E{\frac{1}{2} \|x^* - x_1\|^2 + \sum_{t=0}^{T} \left( q_t(x^*) - q_{t}(x_{t+1}) \right) + \sum_{t=1}^{T} \frac{1}{2} \|\sigma_t\|_{(t),*}^2 + D}\,.
   \end{align*}
 \end{enumerate}
\end{corollary}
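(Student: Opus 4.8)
The plan is to follow the proof of \cref{cor:recoveries-stoch} almost verbatim, feeding \cref{assum:positive-bregman-ft}, \cref{assum:stoch-opt} and the modified \cref{assum:strong-cvx-regularizer}' into the generic bounds \eqref{eq:put-together-ftrl} and \eqref{eq:put-together-md}, with a single change: instead of the crude Fenchel--Young estimate \eqref{eq:fenchel-young} for $\sum_{t=1}^T \dotx{g_t, x_t - x_{t+1}}$, we use a sharper estimate exploiting \cref{assum:smooth-f}. First I would write $g_t = \nabla f(x_t) + \sigma_t$ with $\sigma_t := g_t - \nabla f(x_t)$; since $f$ is differentiable, $\delta f(x_t) = \{\nabla f(x_t)\}$, so \cref{assum:stoch-opt} gives $\E{g_t\mid x_t} = \nabla f(x_t)$, i.e.\ $\E{\sigma_t \mid x_t} = 0$. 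For the same reason $f'(x_t; x^*-x_t) = \dotx{\nabla f(x_t), x^*-x_t}$, hence $\delta_t = \dotx{\sigma_t, x^*-x_t}$ and $\E{\delta_t\mid x_t} = 0$, so the $\sum_t \delta_t$ terms of \eqref{eq:put-together-ftrl}--\eqref{eq:put-together-md} vanish in expectation (cf.\ \eqref{eq:delta-stoch}).

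The core step bounds $\dotx{g_t, x_t - x_{t+1}} - \cB_{r_{1:t}}(x_{t+1}, x_t)$. For the stochastic part I keep Fenchel--Young, $\dotx{\sigma_t, x_t - x_{t+1}} \le \tfrac12\|x_t - x_{t+1}\|_{(t)}^2 + \tfrac12\|\sigma_t\|_{(t),*}^2$; for the deterministic part I use $1$-smoothness of $f$ in the form $\cB_f(x_{t+1}, x_t) \le \tfrac12\|x_{t+1}-x_t\|^2$, which gives $\dotx{\nabla f(x_t), x_t - x_{t+1}} \le f(x_t) - f(x_{t+1}) + \tfrac12\|x_{t+1}-x_t\|^2$. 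The point of working with \cref{assum:strong-cvx-regularizer}' (equivalently, carrying an extra $\tfrac12\|\cdot\|^2$ summand inside $r_{1:t}$) is that it supplies \emph{two} negative quadratics, $\cB_{r_{1:t}}(x_{t+1},x_t) \ge \tfrac12\|x_{t+1}-x_t\|_{(t)}^2 + \tfrac12\|x_{t+1}-x_t\|^2$: the first cancels the Fenchel--Young residual in $\|\cdot\|_{(t)}$, the second cancels the smoothness residual in the smoothness norm $\|\cdot\|$. Adding these, $\dotx{g_t, x_t - x_{t+1}} - \cB_{r_{1:t}}(x_{t+1},x_t) \le f(x_t) - f(x_{t+1}) + \tfrac12\|\sigma_t\|_{(t),*}^2$, and summing telescopes to $f(x_1) - f(x_{T+1}) + \sum_t \tfrac12\|\sigma_t\|_{(t),*}^2 \le D + \sum_t \tfrac12\|\sigma_t\|_{(t),*}^2$, since $x_{T+1}\in\cX$ implies $f(x_{T+1})\ge f^*$.

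It remains to account for the extra $\tfrac12\|\cdot\|^2$ regularizer and the loss-divergence terms. For \AdaFTRL I would fold $\tfrac12\|\cdot\|^2$ into $q_0$, so that $\sum_{t=0}^T(q_t(x^*)-q_t(x_{t+1}))$ picks up $\tfrac12\|x^*\|^2 - \tfrac12\|x_1\|^2 \le \tfrac12\|x^*\|^2$ beyond the contribution of the remaining (``base'') $q_t$; for \AdaMD I would fold it into $r_1$, hence into $p_1 = r_1 - q_0$, so that $\sum_{t=1}^T\cB_{p_t}(x^*,x_t)$ picks up $\cB_{\frac12\|\cdot\|^2}(x^*,x_1) = \tfrac12\|x^*-x_1\|^2$ beyond the base $\cB_{p_t}$ terms. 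Then, exactly as in \cref{cor:recoveries-stoch}: for \eqref{itm:smooth-stoch-md} and \eqref{itm:smooth-stoch} I drop $-\sum_t\cB_f(x^*,x_t)\le0$ using \cref{assum:positive-bregman-ft}, while for \eqref{itm:stcvx-smooth-stoch-md} I cancel $-\sum_t\cB_f(x^*,x_t)$ against the base $\sum_t\cB_{p_t}(x^*,x_t)$ using \cref{assum:strong-cvx-loss}; taking expectations (which annihilate $\sum_t\delta_t$) then yields the three displayed bounds. The only genuinely non-routine point is the bookkeeping that separates the smoothness argument from the noise-splitting argument, each consuming a \emph{different} quadratic lower bound on $\cB_{r_{1:t}}(x_{t+1},x_t)$ — this is precisely why \cref{assum:strong-cvx-regularizer}', and not \cref{assum:strong-cvx-regularizer}, is the right hypothesis — together with keeping the telescoping of $f(x_t)-f(x_{t+1})$ aligned with the definition $D = f(x_1)-f^*$.
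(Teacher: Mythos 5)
Your proposal is correct and follows the same route as the paper's proof: the identity $\langle g_t, x_t-x_{t+1}\rangle = f(x_t)-f(x_{t+1}) + \cB_f(x_{t+1},x_t) + \langle\sigma_t, x_t-x_{t+1}\rangle$, smoothness to bound $\cB_f$, Fenchel--Young on the noise term, the doubled strong-convexity of $r_{1:t}$ to absorb both quadratic residuals, telescoping against $D$, and the usual treatment of $\delta_t$ and $\cB_{f}(x^*,\cdot)$. In fact your bookkeeping for the extra $\tfrac12\|x^*\|^2$ (for \AdaFTRL) and $\tfrac12\|x^*-x_1\|^2$ (for \AdaMD) is more explicit than the paper's, which only says ``canceling out the matching terms'' without spelling out that folding the extra $\tfrac12\|\cdot\|^2$ into $q_0$ resp. $r_1$ (hence $p_1$) is what produces those terms.
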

\begin{proof}
  Note that for all $t=1,2,\dots,T$, by Assumption~\ref{assum:smooth-f} and the Fenchel-Young inequality,
  \begin{align}
    \dotx{g_t, x_t - x_{t+1}} &= f(x_{t}) - f(x_{t+1}) + \cB_{f}(x_{t+1}, x_t) + \dotx{g_t - \nabla f(x_t), x_t - x_{t+1}} \nonumber\\
    & \le f(x_{t}) - f(x_{t+1}) + \frac{1}{2} \|x_t - x_{t+1}\|^2 + \dotx{\sigma_t, x_t - x_{t+1}} \nonumber\\
    & \le f(x_{t}) - f(x_{t+1}) + \frac{1}{2} \|x_t - x_{t+1}\|^2 + \frac{1}{2} \|\sigma_t\|_{(t), *}^2 + \frac{1}{2} \|x_t - x_{t+1}\|_{(t)}^2   \label{eq:smooth-linear-term}
  \end{align}
  Putting \eqref{eq:delta-stoch}, \eqref{eq:smooth-linear-term}, and Assumption~\ref{assum:positive-bregman-ft} into \eqref{eq:put-together-ftrl} and \eqref{eq:put-together-md}, telescoping the $f$ terms, using $f(x_{T+1}) \ge f^*$, and canceling out the matching terms gives \eqref{itm:smooth-stoch-md} and \eqref{itm:smooth-stoch}.
  Finally, to prove \eqref{itm:stcvx-smooth-stoch-md}, one can use Assumption~\ref{assum:strong-cvx-loss} to cancel the $\cB_{f}(x^*,x_t)$ terms with the $\cB_{p_t}(x^*,x_t)$ terms (rather than dropping them by \cref{assum:positive-bregman-ft}).
\end{proof}

\section{Proofs for \cref{sec:composite-objective}}
\label{apx:composite-proofs}

\begin{proof} \textbf{of \cref{cor:composite-obj-hidden-psi}.}
Define $\psi_0:=\psi_1$. Then, using our assumptions on $\psi_t$, we have
  \begin{align*}
    R^{(\ell)}_T(x^*)
    =
    &\
    R_T(x^*) + \sum_{t=1}^{T} (\psi_t(x_t) - \psi_t(x^*))
    \\
    =
    &\
    R_T(x^*) + \sum_{t=1}^{T}( \psi_t(x_{t+1}) - \psi_t(x^*) )+ \sum_{t=1}^{T} (\psi_{t}(x_{t}) - \psi_{t-1}(x_{t}) ) + \psi_1(x_1) - \psi_{T}(x_{T+1})
    \\
    \le
    &\ R_T(x^*) + \sum_{t=1}^{T} (\psi_t(x_{t+1}) - \psi_t(x^*))
    \\
    =
    &\
    R_T(x^*) + \sum_{t=1}^{T} \Big(q_t(x_{t+1}) - \tl{q}_t(x_{t+1}) + \tl{q}_t(x^*) - q_t(x^*) \Big)\,.
  \end{align*}
  The rest of the proof is as in \cref{cor:composite-obj-known-psi}, noting that $\tl{q}_0 = q_0$.
\end{proof}

\section{Proofs for \cref{sec:optimistic-learning}}
\label{apx:optimistic-learning-proofs}

\begin{proof} \textbf{of \cref{thm:ao-ftrl-bound}}.

  Starting from inequality \eqref{eq:put-together-ftrl}, by the exact same manipulations as in \cref{cor:recoveries}:
  \begin{align}
    R_T(x^*) \le &\ \sum_{t=0}^{T} \left( q_t(x^*) - q_{t}(x_{t+1}) \right) + \sum_{t=1}^{T} \dotx{g_t, x_{t} - x_{t+1}}
    \nonumber\\
    &\ + \sum_{t=1}^T \left( p_t(x^*) - p_t(x_t) \right) + \sum_{t=1}^{T} -\cB_{r_{1:t}}(x_{t+1}, x_t)
    \nonumber\\
    = &\ \sum_{t=0}^{T} \dotx{\tl{g}_{t+1} - \tl{g}_t, x^* - x_{t+1}} + \sum_{t=1}^{T} \dotx{g_t, x_{t} - x_{t+1}}
    \nonumber\\
    &\ + \sum_{t=0}^{T} \left( \tl{q}_t(x^*) - \tl{q}_{t}(x_{t+1}) \right) + \sum_{t=1}^T \left( p_t(x^*) - p_t(x_t) \right) + \sum_{t=1}^{T} -\cB_{r_{1:t}}(x_{t+1}, x_t)
    \nonumber\\
    = &\ \dotx{\tl{g}_{T+1}, x^*} + \sum_{t=0}^{T} \dotx{\tl{g}_{t+1}, -x_{t+1}} + \sum_{t=1}^{T} \dotx{\tl{g}_t, x_{t+1}} + \sum_{t=1}^{T} \dotx{g_t, x_{t} - x_{t+1}}
    \nonumber\\
    &\ + \sum_{t=0}^{T} \left( \tl{q}_t(x^*) - \tl{q}_{t}(x_{t+1}) \right) + \sum_{t=1}^T \left( p_t(x^*) - p_t(x_t) \right) + \sum_{t=1}^{T} -\cB_{r_{1:t}}(x_{t+1}, x_t)
    \nonumber\\
    = &\ \dotx{\tl{g}_{T+1}, x^* - x_{T+1}} + \sum_{t=1}^{T} \dotx{g_t - \tl{g}_t, x_{t} - x_{t+1}}
    \nonumber\\
    &\ + \sum_{t=0}^{T} \left( \tl{q}_t(x^*) - \tl{q}_t(x_{t+1}) \right) + \sum_{t=1}^T \left( p_t(x^*) - p_t(x_t) \right) + \sum_{t=1}^{T} -\cB_{r_{1:t}}(x_{t+1}, x_t)
    \label{eq:middle-of-ao-ftrl-proof}\\
    \le &\ \dotx{\tl{g}_{T+1}, x^* - x_{T+1}} + \tl{q}_T(x^*) - \tl{q}_T(x_{T+1})
    \nonumber\\
    &\ + \sum_{t=0}^{T-1} \left( \tl{q}_t(x^*) - \tl{q}_t(x_{t+1}) \right) + \sum_{t=1}^T \left( p_t(x^*) - p_t(x_t) \right) + \sum_{t=1}^{T} \frac{1}{2} \|g_t - \tl{g}_{t} \|_{(t),*}^2\,, \nonumber
  \end{align}
  using the Fenchel-Young for the second term, and \cref{assum:strong-cvx-regularizer} for the last term, in the final step. Finally, note that the left-hand side is independent of $\tl{q}_T$ and $\tl{g}_{T+1}$, and without loss of generality, we can set them to zero, which makes the first two terms of the right-hand side zero, hence finishing the proof.
\end{proof}

\begin{proof}\textbf{of \cref{thm:chiang-smooth-bound}}
  Define $G_t = \| g_t - \tl{g}_t \|_{*}^2$, and let $\lambda_t := \eta_t / 2$. Starting from \eqref{eq:middle-of-ao-ftrl-proof}, and using the fact that setting $\tl{g}_{T+1}=0$ does not affect the value of $R_T(x^*)$, we get
  \begin{align*}
    R_T(x^*) \le
    &\
    \sum_{t=1}^{T} \dotx{g_t - \tl{g}_t, x_{t} - x_{t+1}} + \sum_{t=1}^{T} -\cB_{r_{1:t}}(x_{t+1}, x_t)
    \nonumber\\
    &\ + \sum_{t=0}^{T} \left( \tl{q}_t(x^*) - \tl{q}_{t}(x_{t+1}) \right) + \sum_{t=1}^T \left( p_t(x^*) - p_t(x_t) \right)
    \\
    \le
    &\ \sum_{t=0}^{T} \left( \tl{q}_t(x^*) - \tl{q}_t(x_{t+1}) \right) + \sum_{t=1}^T \left( p_t(x^*) - p_t(x_t) \right)
    \\
    &\
    + \sum_{t=1}^{T} -\frac{\eta_t}{2} \|x_t - x_{t+1}\|^2 + \sum_{t=1}^{T} \frac{\lambda_t}{2} \|x_{t} - x_{t+1}\|^2 + \sum_{t=1}^T \frac{1}{2\lambda_t} \|g_t - \tl{g}_t\|_{*}^2\,,
    \\
    \le
    &\ \sum_{t=0}^{T} \left( \tl{q}_t(x^*) - \tl{q}_t(x_{t+1}) \right) + \sum_{t=1}^T \left( p_t(x^*) - p_t(x_t) \right)
    + \sum_{t=1}^{T} \frac{-\eta_t}{4} \| x_{t} - x_{t+1} \|^2 + \sum_{t=1}^{T} \frac{1}{\eta_t} G_t\,
    \\
    \le
    &\ \sum_{t=0}^{T} \left( \tl{q}_t(x^*) - \tl{q}_t(x_{t+1}) \right) + \sum_{t=1}^T \left( p_t(x^*) - p_t(x_t) \right)
    \\
    &\
    + \sum_{t=1}^{T} \frac{-2L^2}{\eta_{t+1}} \| x_{t} - x_{t+1} \|^2 + \sum_{t=1}^{T} \frac{1}{\eta_t} G_t\,.
  \end{align*}
  In the second inequality, we used \cref{assum:strong-cvx-regularizer} and the Fenchel-Young inequality.
  In the last inequality, we used the assumption $\eta_t \eta_{t+1} \ge 8 L^2$. Now, let $x_0 := x_1$ and $f_0 := 0$, so that $\tl{g}_1 = g_0 = \nabla f_0(x_0)$.
  Then, using ideas from Lemma 12 of \citet{chiang2012online},
  \begin{align*}
    \sum_{t=1}^{T} \frac{1}{\eta_t} G_t
    =
    &\
    \sum_{t=1}^{T} \frac{1}{\eta_t} \|\nabla f_t(x_t) - \nabla f_{t-1}(x_{t-1})\|_{*}^2
    \\
    \le
    &\
    \sum_{t=1}^{T} 2 \frac{1}{\eta_t} \|\nabla f_t(x_t) - \nabla f_{t-1}(x_{t})\|_{*}^2
    +
    \sum_{t=1}^{T} 2 \frac{1}{\eta_t} \|\nabla f_{t-1}(x_t) - \nabla f_{t-1}(x_{t-1})\|_{*}^2
    \\
    \le
    &\
    2 \sum_{t=1}^{T} \frac{1}{\eta_t} \|\nabla f_t(x_t) - \nabla f_{t-1}(x_{t})\|_{*}^2
    +
    \sum_{t=2}^{T}  \frac{2 L^2}{\eta_t} \| x_t - x_{t-1} \|^2
    \\
    \le
    &\
    2 \sum_{t=1}^{T} \frac{1}{\eta_t} \|\nabla f_t(x_t) - \nabla f_{t-1}(x_{t})\|_{*}^2
    +
    \sum_{t=1}^{T}  \frac{2 L^2}{\eta_{t+1}} \| x_{t+1} - x_{t} \|^2\,.
  \end{align*}
  Note that to get the first inequality, we used the fact that $\|\cdot\|^2$ is convex for any norm, together with Jensen's inequality, so that $\|x+y\|^2 = 4 \|x/2 + y/2\|^2 \le 4 (\|x\|^2 / 2 + \|y\|^2 / 2) = 2 \|x\|^2 + 2\|y\|^2$.
  This completes the proof.
\end{proof}

\begin{proof}\textbf{of \cref{cor:the-final-attack}.}

  First, note that since $\psi_t$ is convex, by definition, $r_{0:t} = \sum_{t=1}^{t} \frac{\eta_{t} - \eta_{t-1}}{2} \| \cdot - x_t\|^2$ is $\eta_t$-strongly-convex w.r.t. the norm $\| \cdot \|$, satisfying \cref{assum:strong-cvx-regularizer}. Furthermore, \cref{assum:positive-bregman-ft} is satisfied by the convexity of $f_t$.
  Also, by assumption, $\cX$ is closed and $R < +\infty$, so the objectives are always bounded below and \cref{assum:ftrl} holds.

  Let $G_t = \|g_t - \tl{g}_t\|_{*}^2$, and define $C := \eta_t - \eta \sqrt{G_{1:t}} = 4 R L^2$.
  Starting as in \cref{cor:composite-ao-ftrl}, and following the same steps as in the proof of \cref{thm:chiang-smooth-bound}, we have
  \begin{align*}
    R^{(\ell)}_T(x^*) \le
    &\
    \sum_{t=0}^{T} \left( \tl{q}_t(x^*) - \tl{q}_t(x_{t+1}) \right) + \sum_{t=1}^T \left( p_t(x^*) - p_t(x_t) \right)
    - \sum_{t=1}^{T} \frac{\eta_t}{4} \| x_{t} - x_{t+1} \|^2 + \sum_{t=1}^{T} \frac{1}{\eta_t} G_t\,
    \\
    \le
    &\
    \sum_{t=1}^T \frac{1}{2} (\eta_t - \eta_{t-1}) R^2
    + \sum_{t=1}^{T} \frac{-\eta_t}{4} \| x_{t} - x_{t+1} \|^2 + \sum_{t=1}^{T} \frac{1}{\eta \sqrt{G_{1:t}}} G_t\,
    \\
    \le
    &\
    \frac{1}{2} \eta_T R^2
    + \sum_{t=1}^{T} \frac{-C}{4} \| x_{t} - x_{t+1} \|^2 + \frac{2}{\eta} \sqrt{G_{1:T}}\,
    \\
    \le
    &\
    \frac{1}{2} C R^2 + \left( \frac{2}{\eta} + \frac{\eta}{2} R^2 \right) \sqrt{G_{1:T}} + \sum_{t=1}^{T} \frac{-C}{4} \| x_{t} - x_{t+1} \|^2 \,
    \\
  \end{align*}
  In the first line, we used, as in \cref{thm:chiang-smooth-bound}, the Fenchel-Young inequality with $\lambda_t = \eta_t/2$. In the second line, we dropped the $\tl{q}_t = 0$ terms and used the definition of $p_t$ and $R$, and the fact that $\eta_t \ge \eta_{t-1}$, to get the first term, and obtained the last term using the fact that $\eta_t \ge \eta \sqrt{G_{1:t}}$ by definition.
  In the third inequality, we let the $\eta_t$ in the first term telescope, used the fact that $\eta_t > C$ in the second term, and \cref{lem:adaptive-bound-lemma} to get the last term. In the last line, we used the definition of $\eta_T$ and grouped the $\sqrt{G_{1:T}}$ terms together.

  Next, we use the inequalities $\sqrt{a+b} \le \sqrt{a} + \sqrt{b}$ and $\sqrt{a} \le \frac{1}{2} + a$ (for $a,b \ge 0$), as well as Jensen's inequality on $\|\cdot\|^2$ (as in the proof of \cref{thm:chiang-smooth-bound}) to bound $\sqrt{G_{1:T}}$ with $\sqrt{D_{\| \cdot \|}}\,$:
  \begin{align*}
    \sqrt{\sum_{t=1}^{T} G_t }
    =
    &\
    \sqrt{\sum_{t=1}^{T} \|\nabla f_t(x_t) - \nabla f_{t-1}(x_{t-1})\|_{*}^2}
    \\
    \le
    &\
    \sqrt{\sum_{t=1}^{T} 2 \|\nabla f_t(x_t) - \nabla f_{t-1}(x_{t})\|_{*}^2
    +
    \sum_{t=1}^{T} 2 \|\nabla f_{t-1}(x_t) - \nabla f_{t-1}(x_{t-1})\|_{*}^2}
    \\
    \le
    &\
    \sqrt{2 \sum_{t=1}^{T} \|\nabla f_t(x_t) - \nabla f_{t-1}(x_{t})\|_{*}^2
    +
    \sum_{t=1}^{T} 2 L^2 \| x_t - x_{t-1} \|^2}
    \\
    \le
    &\
    \sqrt{2 \sum_{t=1}^{T} \|\nabla f_t(x_t) - \nabla f_{t-1}(x_{t})\|_{*}^2}
    +
    \sqrt{\sum_{t=1}^{T} 2 L^2 \| x_t - x_{t-1} \|^2}
    \\
    \le
    &\
    \sqrt{2 \sum_{t=1}^{T} \|\nabla f_t(x_t) - \nabla f_{t-1}(x_{t})\|_{*}^2}
    +
    \frac{1}{2} + \sum_{t=1}^{T} 2 L^2 \| x_t - x_{t-1} \|^2\,.
    \\
    =
    &\
    \sqrt{2 \sum_{t=1}^{T} \|\nabla f_t(x_t) - \nabla f_{t-1}(x_{t})\|_{*}^2}
    + \frac{1}{2} + \sum_{t=1}^{T} 2 L^2 \| x_{t+1} - x_{t} \|^2\,,
  \end{align*}
  where the last line follows, as in the proof of \cref{thm:chiang-smooth-bound}, by defining $x_0 = x_1$ and adding the extra positive term $2L^2 \|x_{T+1} - x_T\|^2$. Putting back into the previous inequality,
  \begin{align*}
    R^{(\ell)}_T(x^*) \le
    &\
    \frac{1}{2} C R^2 + \sum_{t=1}^{T} \frac{-C}{4} \| x_{t} - x_{t+1} \|^2
    \\
    &\
    + \left( \frac{2}{\eta} + \frac{\eta}{2} R^2 \right) \left(
    \sqrt{2 D_{\| \cdot \|}}
    + \frac{1}{2} + \sum_{t=1}^{T} 2 L^2 \| x_{t+1} - x_{t} \|^2
     \right) \,
    \\
    =
    &\
    \frac{1}{2} C R^2 + \sum_{t=1}^{T} \frac{-C}{4} \| x_{t} - x_{t+1} \|^2
    \\
    &\
    + 2 R \sqrt{2 D_{\| \cdot \|}}
    + R + \sum_{t=1}^{T} 4 R L^2 \| x_{t+1} - x_{t} \|^2 \,
    \\
    =
    &\
    \frac{1}{2} 4 R^3 L^2 + R + 2 R \sqrt{2 D_{\| \cdot \|}}\,.
  \end{align*}
  In the first equality, we used $\eta = 2/R$ while in the last one we used that $C=4RL^2$ by definition. This completes the proof.
\end{proof}

\begin{lemma}[Lemma 4 of \citet{mcmahan2014survey}]
  \label{lem:adaptive-bound-lemma}
  For any non-negative numbers $a_1, a_2, \dots, a_T$ with $a_1 > 0$,
  \begin{align*}
    \sum_{t=1}^{T} \frac{a_t}{\sqrt{\sum_{s=1}^{t} a_s}} \le 2 \sqrt{\sum_{t=1}^{T} a_t}\,.
  \end{align*}
\end{lemma}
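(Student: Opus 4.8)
Write $S_t := \sum_{s=1}^{t} a_s$ for the partial sums, with the convention $S_0 := 0$. Since $a_1 > 0$ and all the $a_t$ are non-negative, we have $S_t \ge a_1 > 0$ for every $t \ge 1$, so all square roots and denominators appearing below are well-defined and positive. The plan is the standard ``AdaGrad-style'' one: bound each summand by a telescoping difference of square roots, then sum.

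The key elementary inequality is that for $t = 1, 2, \dots, T$,
\[
  \frac{a_t}{\sqrt{S_t}} \le 2\left( \sqrt{S_t} - \sqrt{S_{t-1}} \right).
\]
To see this, write $a_t = S_t - S_{t-1} = \left(\sqrt{S_t} - \sqrt{S_{t-1}}\right)\left(\sqrt{S_t} + \sqrt{S_{t-1}}\right)$, so that $\sqrt{S_t} - \sqrt{S_{t-1}} = a_t / \left(\sqrt{S_t} + \sqrt{S_{t-1}}\right)$; since $0 \le S_{t-1} \le S_t$ we have $\sqrt{S_t} + \sqrt{S_{t-1}} \le 2\sqrt{S_t}$, and substituting this lower bound on the denominator yields the claimed inequality. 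Summing over $t = 1, \dots, T$, the right-hand side telescopes to $2\left(\sqrt{S_T} - \sqrt{S_0}\right) = 2\sqrt{S_T} = 2\sqrt{\sum_{t=1}^{T} a_t}$, which is exactly the asserted bound. (An equivalent route is induction on $T$: the base case $T=1$ reads $\sqrt{a_1} \le 2\sqrt{a_1}$, and the inductive step applies the same per-term inequality to the last summand together with the induction hypothesis for the first $T-1$ terms.)

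\textbf{Main obstacle.} There is essentially none — this is a textbook summation lemma. The only point requiring a moment of attention is ensuring no denominator vanishes, which is precisely what the hypothesis $a_1 > 0$ provides (it forces $S_t > 0$ for all $t \ge 1$); summands with $a_t = 0$ contribute zero to both sides and cause no difficulty.
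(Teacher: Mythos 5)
Your proof is correct, and it is the standard telescoping argument for this lemma (the one in McMahan's survey, which the paper simply cites without reproducing). Since the paper provides no proof of its own here, there is nothing further to compare against.
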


\section{Technical results}\label{apx:tech-results}
In this appendix, we have gathered some technical results required in our proofs. The first lemma states that the Bregman divergence is invariant under addition of affine functions.
\begin{lemma}\label{lem:linear-invariance}
Let $f: \cH \to \ExReals$ be proper, and let $x,y \in \dom(f)$. Suppose that $v \in \cH$, and $w \in \Reals$, and let $g: \cH \to \ExReals$ be given by $g(\cdot) = f(\cdot) + \dotx{v,\cdot} + w$. Then,
\begin{enumerate}[(i)]
\item $g$ is proper, with $\dom(g) = \dom(f)$.
\item For any $z \in \cH$, the derivative $g'(x;z)$ exists in $[-\infty, +\infty]$ if and only if $f'(x;z)$ exists in $[-\infty, +\infty]$, in which case
\begin{align*}
	g'(x;z) = f'(x;z) + \dotx{v, z}\,.
\end{align*}
\item If $f'(x; y-x)$ or $g'(x;y-x)$ exist, then $\cB_{g}(y,x) = \cB_{f}(y,x)$.
\end{enumerate}
\end{lemma}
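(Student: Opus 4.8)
The plan is to dispatch the three parts in sequence, each essentially by unwinding a definition; no part needs more than one line of algebra, and parts (ii) and (iii) are where the (tiny) computations live.

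For (i), I would observe that the affine perturbation $x \mapsto \dotx{v,x} + w$ is real-valued everywhere on $\cH$ (it is a sum of a continuous linear functional and a finite constant). Adding a finite quantity to $f(x)$ changes neither whether $f(x) = +\infty$ nor whether $f(x) = -\infty$. Hence $\dom(g) = \dom(f)$, which is non-empty because $f$ is proper, and $g$ never takes the value $-\infty$ because $f$ does not; therefore $g$ is proper. For (ii), fix $x \in \dom(f) = \dom(g)$ (so $f(x), g(x) \in \Reals$ by properness) and $z \in \cH$. The key computation is that, for $\alpha > 0$,
\[
\frac{g(x+\alpha z) - g(x)}{\alpha} = \frac{f(x+\alpha z) - f(x)}{\alpha} + \dotx{v,z},
\]
since the constant $w$ cancels and $\dotx{v, x+\alpha z} - \dotx{v,x} = \alpha\dotx{v,z}$. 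As $\dotx{v,z}$ is a fixed finite real, the limit of the left-hand side exists in $[-\infty,+\infty]$ as $\alpha \downarrow 0$ if and only if that of the first term on the right does, in which case $g'(x;z) = f'(x;z) + \dotx{v,z}$.

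For (iii), I would first note that, because $x,y \in \dom(f)$ and $f$ (hence $g$, by (i)) is proper, the four values $f(x), f(y), g(x), g(y)$ are all finite, so both Bregman divergences are given by the first branch of \eqref{eq:bregman-div} — no $(+\infty)-(+\infty)$ case arises. Assuming one of $f'(x;y-x)$, $g'(x;y-x)$ exists, part (ii) supplies the other together with $g'(x;y-x) = f'(x;y-x) + \dotx{v,y-x}$. Substituting into $\cB_g(y,x) = g(y) - g(x) - g'(x;y-x)$ and expanding $g = f + \dotx{v,\cdot} + w$, the two copies of $w$ and the combination $\dotx{v,y} - \dotx{v,x} - \dotx{v,y-x}$ both vanish, leaving $\cB_f(y,x)$.

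The only part requiring any care — the ``main obstacle'', such as it is — is the extended-real bookkeeping: I would make sure to justify that adding a fixed finite constant to a difference quotient which may diverge to $\pm\infty$ preserves the existence of the limit in $[-\infty,+\infty]$ (since $(\pm\infty)+c = \pm\infty$ for finite $c$) and merely shifts its value, and that the hypotheses $x,y \in \dom(f)$ plus properness genuinely place us in the finite branch of Definition~\ref{def:bregman-div}, so that the algebraic cancellation in (iii) is legitimate.
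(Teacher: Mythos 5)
Your proposal is correct and follows essentially the same route as the paper's proof: part (i) by noting the affine perturbation is real-valued everywhere, part (ii) by cancelling $w$ and the cross terms in the difference quotient before taking the limit, and part (iii) by specializing $z = y-x$ and plugging into the definition of the Bregman divergence. The extra care you take with the extended-real bookkeeping (why $(\pm\infty)+c$ behaves, why both divergences land in the finite branch of Definition~\ref{def:bregman-div}) is left implicit in the paper but is exactly the right thing to check.
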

\begin{proof}
That $g$ is proper and $\dom(f) = \dom(g)$ is immediate since $\dom(\dotx{v,\cdot}) = \cH$ and $w \in \Reals$. Then $x,y \in \dom(g)$, and for any $z \in \cH$, if either of $f'(x;z)$ or $g'(x;z)$ exist in $[-\infty, +\infty]$,
\begin{align*}
f'(x;z) & + \dotx{v, z} = \lim_{\alpha \downarrow 0} \frac{f(x + \alpha z) + \dotx{v, x + \alpha z} + w - f(x) - \dotx{v, x} - w}{\alpha} = g'(x;v)\,,
\end{align*}
which proves the second part of the lemma. Letting $z = y-x$ and using the definition of $\cB_{g}$ gives $\cB_{f}(y,x) = \cB_{g}(y,x)$.
\end{proof}

The next proposition gathers useful results based on Proposition 17.2 of \citet{bauschke2011convex}.

\begin{proposition}\label{prop:convex-bregman-relation}
Let $f$ be proper and convex, and let $x,y \in \dom(f)$ and $z \in \cH$. Then,
\begin{enumerate}[(i)]
\item\label{itm:derivative-exists} $f'(x;z)$ exists in $[-\infty, +\infty]$  and
\begin{align*}
f'(x;z) = \inf_{\alpha \in (0, +\infty)} \frac{f(x+\alpha z) - f(x)}{\alpha}\,.
\end{align*}
\item\label{itm:derivative-not-infinite} $f'(x; y-x) < +\infty$\,.
\item\label{itm:bregman-nonnegative} $\cB_{f}(y,x) \ge 0$\,.
\end{enumerate}
\end{proposition}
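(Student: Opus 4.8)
All three parts will be derived from a single monotonicity fact about difference quotients of convex functions (this is essentially Proposition~17.2 of \citet{bauschke2011convex}, which I would reproduce for self-containedness). Fix $x \in \dom(f)$ and $z \in \cH$, and for $\alpha > 0$ set $\phi(\alpha) := \big(f(x+\alpha z) - f(x)\big)/\alpha$. Since $f$ is proper and $x\in\dom(f)$, we have $f(x)\in\Reals$ and $f>-\infty$ everywhere, so $\phi(\alpha)$ is a well-defined element of $(-\infty,+\infty]$ for each $\alpha>0$; in particular the indeterminate form $(+\infty)-(+\infty)$ never arises. The first (and essentially only substantive) step is to show $\phi$ is non-decreasing on $(0,\infty)$: for $0<\alpha<\beta$, if $x+\beta z\notin\dom(f)$ then $\phi(\beta)=+\infty$ and there is nothing to prove; otherwise $x+\alpha z = (1-\tfrac{\alpha}{\beta})x + \tfrac{\alpha}{\beta}(x+\beta z)$ is a convex combination of two points of the convex set $\dom(f)$, so convexity of $f$ gives $f(x+\alpha z)\le (1-\tfrac{\alpha}{\beta})f(x)+\tfrac{\alpha}{\beta}f(x+\beta z)$ with all terms finite, and rearranging yields $\phi(\alpha)\le\phi(\beta)$.

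\textbf{Parts (i)--(iii).} A non-decreasing function on $(0,\infty)$ has a limit as $\alpha\downarrow 0$ equal to its infimum over $(0,\infty)$; hence $f'(x;z)=\lim_{\alpha\downarrow 0}\phi(\alpha)=\inf_{\alpha\in(0,+\infty)}\phi(\alpha)$ exists in $[-\infty,+\infty]$, which is part~\eqref{itm:derivative-exists} (and in particular shows $f$ is directionally differentiable at $x$, so $\cB_f(\cdot,x)$ is well-defined). For part~\eqref{itm:derivative-not-infinite}, take $z=y-x$; then $x+1\cdot z=y\in\dom(f)$, so $\phi(1)=f(y)-f(x)\in\Reals$, and by part~\eqref{itm:derivative-exists}, $f'(x;y-x)=\inf_{\alpha>0}\phi(\alpha)\le\phi(1)<+\infty$. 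For part~\eqref{itm:bregman-nonnegative}, since $f(y)$ is finite we are in the first case of Definition~\ref{def:bregman-div}, so $\cB_f(y,x)=f(y)-f(x)-f'(x;y-x)=\phi(1)-f'(x;y-x)$; if $f'(x;y-x)=-\infty$ this is $+\infty\ge 0$, and otherwise it equals $\phi(1)-\inf_{\alpha>0}\phi(\alpha)\ge 0$.

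\textbf{Main obstacle.} There is no deep difficulty here; the only thing requiring care is the bookkeeping with extended-real values — making sure that properness of $f$ rules out $-\infty$ values and the indeterminate difference $(+\infty)-(+\infty)$, and that the convex-combination inequality in the monotonicity step is applied only when the relevant values are finite (with the infinite cases dispatched trivially). Once the monotonicity of $\phi$ is in hand, parts (i)--(iii) are immediate specializations (evaluating the infimum, and comparing it to $\phi(1)$).
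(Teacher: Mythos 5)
Your proof is correct and follows essentially the same route as the paper's, which rests on the monotonicity of the convex difference quotient $\alpha\mapsto\bigl(f(x+\alpha z)-f(x)\bigr)/\alpha$ and then specializes to $\alpha=1$ with $z=y-x$. The only difference is that you prove the monotonicity fact from scratch, whereas the paper simply cites it as Proposition~17.2 of \citet{bauschke2011convex}, so your version is more self-contained at the cost of a few extra lines.
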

\begin{proof}
Part (\ref{itm:derivative-exists}) is proved in Proposition 17.2(ii) of~\citet{bauschke2011convex}. Also, by their Proposition 17.2(iii),
\begin{align*}
	f'(x; y-x) + f(x) \le f(y)\,,
\end{align*}
proving part (\ref{itm:derivative-not-infinite}) since $f(y)$ and $f(x)$ are both real numbers. Part (\ref{itm:bregman-nonnegative}) then simply follows from the same equation, with the Bregman divergence being real and nonnegative when $f'(x; y-x)$ is real-valued, and $+\infty$ when $f'(x; y-x) = -\infty$.
\end{proof}

The next lemma is useful for decomposing Bregman divergences.
\begin{lemma}\label{lem:bregman-decomposition}
	Let $r:\cH \to \ExReals$ and $q:\cH \to \ExReals$ be proper and directionally differentiable. Let $S := \dom(r) \cap \dom(q)$, suppose $S \not= \emptyset$, and let $x,y \in S$. Suppose that at least one of the two limits $q'(x; y-x)$ and $r'(x; y-x)$ is finite. Then,
	\begin{align*}
		\cB_{r}(y,x) - \cB_{q}(y,x) = \cB_{p}(y,x)\,,
	\end{align*}
  where $p:\cH \to \ExReals$ is given by
  \begin{align*}
		p(x) :=
    \begin{cases}
      r(x) - q(x) & x \in \dom(r) \cup \dom(q)\,,
      \\
      +\infty & \text{otherwise}\,.
    \end{cases}
	\end{align*}
\end{lemma}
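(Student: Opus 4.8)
The plan is to unfold both sides using Definition~\ref{def:bregman-div} and reduce everything to the additivity of directional derivatives, $p'(x;y-x) = r'(x;y-x) - q'(x;y-x)$, while tracking extended-real arithmetic. First I would record the ``finite'' facts: since $x,y \in S = \dom(r)\cap\dom(q)$, the four numbers $r(x),r(y),q(x),q(y)$ are all real, so $p(x) = r(x)-q(x)$ and $p(y) = r(y)-q(y)$ are real; in particular $p(y)$ is finite, so $\cB_p(y,x)$ is computed from the first branch of Definition~\ref{def:bregman-div} and the only thing left to pin down is $p'(x;y-x)$.

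The core step is the directional-derivative identity. Writing $z := y-x$ and $D_f(\alpha) := \big(f(x+\alpha z) - f(x)\big)/\alpha$ for $\alpha > 0$, I would invoke the hypothesis that at least one of $r'(x;z), q'(x;z)$ is finite --- say $r'(x;z) \in \Reals$, the other case being analogous --- and argue that $D_r(\alpha) \to r'(x;z)$ forces $r(x+\alpha z)$ to be finite, hence $x+\alpha z \in \dom(r) \subseteq \dom(r)\cup\dom(q)$, for all sufficiently small $\alpha > 0$. By the definition of $p$ this yields $p(x+\alpha z) = r(x+\alpha z) - q(x+\alpha z)$ with $r(x+\alpha z)$ finite, so $D_p(\alpha) = D_r(\alpha) - D_q(\alpha)$ is a legitimate identity in $\ExReals$ (never of the form $\infty - \infty$). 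Since $r$ and $q$ are directionally differentiable, $D_r(\alpha) \to r'(x;z) \in \Reals$ and $D_q(\alpha) \to q'(x;z) \in [-\infty,+\infty]$, so $D_p(\alpha)$ converges to $r'(x;z) - q'(x;z)$; thus $p'(x;z)$ exists and equals $r'(x;z) - q'(x;z)$.

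Finally I would substitute and regroup. Because $r(x),r(y),q(x),q(y)$ are finite and at least one of $r'(x;z),q'(x;z)$ is finite, every cancellation below is valid in $\ExReals$:
\begin{align*}
  \cB_p(y,x) &= \big(r(y)-q(y)\big) - \big(r(x)-q(x)\big) - \big(r'(x;z)-q'(x;z)\big) \\
  &= \big(r(y)-r(x)-r'(x;z)\big) - \big(q(y)-q(x)-q'(x;z)\big) = \cB_r(y,x) - \cB_q(y,x).
\end{align*}
The only genuinely delicate point is the one flagged above: without the finiteness of one of the two directional derivatives one could not rule out $x+\alpha z \notin \dom(r)\cup\dom(q)$ along a sequence $\alpha\downarrow 0$, which would make both $D_r(\alpha)$ and $D_q(\alpha)$ equal $+\infty$ and their difference meaningless; everything else is routine bookkeeping with $\pm\infty$.
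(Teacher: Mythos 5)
Your proof is correct and follows essentially the same route as the paper's: the key step in both is to use the hypothesis that one of $r'(x;y-x),q'(x;y-x)$ is finite to conclude that for all sufficiently small $\alpha>0$ the point $x+\alpha(y-x)$ lies in $\dom(r)\cup\dom(q)$ (hence the difference quotient for $p$ splits as $D_r(\alpha)-D_q(\alpha)$ without encountering $\infty-\infty$), pass to the limit to get $p'(x;y-x)=r'(x;y-x)-q'(x;y-x)$, and then add the real-valued identity $r(y)-r(x)-q(y)+q(x)=p(y)-p(x)$. Your write-up is a bit more explicit than the paper's about why finiteness of, say, $r'(x;y-x)$ forces $r(x+\alpha(y-x))<+\infty$ for small $\alpha$, but this is exactly the observation the paper compresses into ``at most one of $r(x+\alpha(y-x))$ and $q(x+\alpha(y-x))$ can remain infinite as $\alpha\downarrow 0$.''
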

\begin{proof}
	By the assumption, we can add the two limits, to obtain
	\begin{align}
		- r'(x; y-x) + q'(x; y-x)
    &\ =
    \lim_{\alpha \downarrow 0} \frac{-r(x+ \alpha(y-x)) + r(x)}{\alpha} +
    \lim_{\alpha \downarrow 0} \frac{q(x+\alpha(y-x)) - q(x)}{\alpha}
    \nonumber \\
    &\ =
    \lim_{\alpha \downarrow 0} \frac{-p(x+ \alpha(y-x)) + p(x)}{\alpha}
    =
    - p'(x; y-x)\,.
    \label{eq:bregman1}
	\end{align}
  In the derivation above, we have used that at most one of $r(x+ \alpha(y-x))$ and $q(x+ \alpha(y-x))$ can remain infinite as $\alpha \downarrow 0$. Formally, there exists an $\epsilon > 0$ such that for all $\alpha < \epsilon$, the summation $- r(x+ \alpha(y-x)) + q(x+ \alpha(y-x))$ is well defined, and is equal, by definition, to $-p(x + \alpha(y-x))$.	Adding the real-valued equation $r(y) - r(x) - q(y) + q(x) = p(y) - p(x)$ to \eqref{eq:bregman1} completes the proof.
\end{proof}
In light of Proposition~\ref{prop:convex-bregman-relation} (\ref{itm:derivative-not-infinite}), if $p$ and $q$ are convex, then the limits are always less than $+\infty$, and the condition above is always satisfied.

\section{Proof of Theorem~\ref{thm:cheating-regret}}\label{apx:cheating-regret}
\newcommand{\hft}{h^{\tn{ftrl}}}
\newcommand{\hmd}{h^{\tn{md}}}
In this section, we provide a detailed proof of Theorem~\ref{thm:cheating-regret}.
First, we prove generalized versions of two lemmas that have appeared in several previous work; see, e.g., \citet{dekel2012optimal} and the references therein.

The first lemma is used for \AdaFTRL.
\begin{lemma}\label{lem:optimization-margin}
	Let $g \in \cH$ and consider a proper, directionally differentiable function $r:\cH \to \ExReals$. Define $S = \dom(r)$, and let $\cX \subset \cH$ be a convex set such that $\cX \cap S \not= \emptyset$. Further assume that $\argmin_{x \in \cX} \dotx{g, x} + r(x)$ is non-empty. Then, for any $x^+ \in \argmin_{x \in \cX} \dotx{g, x} + r(x)$ and any $x \in \cX \cap S$,
	\begin{align}
		+\infty > \dotx{g, x-x^+} + r(x) - r(x^+) \ge \cB_{r}(x, x^+)\,.
	\end{align}
\end{lemma}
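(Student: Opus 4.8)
The plan is to establish the two displayed inequalities separately: the left one (finiteness) from properness of $r$ and the defining property of a minimizer, and the right one from a first-order optimality argument along the segment running from $x^+$ towards $x$.

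First I would show that $x^+ \in S$. Fix the given $x \in \cX \cap S$; since $r$ is proper and $x \in \dom(r)$, the quantity $\dotx{g,x} + r(x)$ is a real number. As $x^+$ minimizes $y \mapsto \dotx{g,y} + r(y)$ over $\cX$ and $x \in \cX$, we get $\dotx{g,x^+} + r(x^+) \le \dotx{g,x} + r(x) < +\infty$, hence $r(x^+) < +\infty$; together with $r(x^+) > -\infty$ (properness) this shows $r(x^+)$ is finite and $x^+ \in \dom(r) = S$. Consequently $\dotx{g, x - x^+} + r(x) - r(x^+)$ is a sum of real numbers, which already gives the left inequality $+\infty > \dotx{g, x-x^+} + r(x) - r(x^+)$.

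Next I would derive the first-order optimality inequality $\dotx{g, x - x^+} + r'(x^+; x - x^+) \ge 0$. For $\alpha \in (0,1)$ the point $x_\alpha := x^+ + \alpha(x - x^+) = (1-\alpha)x^+ + \alpha x$ lies in $\cX$ by convexity, so optimality of $x^+$ gives $\dotx{g, x_\alpha} + r(x_\alpha) \ge \dotx{g, x^+} + r(x^+)$, i.e.\ $\alpha \dotx{g, x - x^+} + \bigl(r(x_\alpha) - r(x^+)\bigr) \ge 0$. Dividing by $\alpha > 0$ and letting $\alpha \downarrow 0$, the difference quotient converges in $[-\infty,+\infty]$ to $r'(x^+; x - x^+)$, because $r$ is directionally differentiable at $x^+ \in \dom(r)$; since each term of this sequence is bounded below by the finite constant $-\dotx{g, x - x^+}$, so is the limit, which yields $\dotx{g, x - x^+} + r'(x^+; x - x^+) \ge 0$ (in particular $r'(x^+;x-x^+) > -\infty$).

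Finally, since $r(x)$ is finite, Definition~\ref{def:bregman-div} gives $\cB_{r}(x, x^+) = r(x) - r(x^+) - r'(x^+; x - x^+)$; substituting $- r'(x^+; x - x^+) \le \dotx{g, x - x^+}$ from the previous step yields $\cB_{r}(x, x^+) \le \dotx{g, x - x^+} + r(x) - r(x^+)$, the right inequality. The only delicate point is the extended-real bookkeeping around the directional derivative, notably ruling out $r'(x^+; x-x^+) = -\infty$, but this is handled automatically by the lower bound coming from optimality; the rest is routine.
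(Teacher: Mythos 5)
Your proof is correct and follows essentially the same route as the paper's: convexity of $\cX$ places the segment $x^+ + \alpha(x - x^+)$ inside $\cX$, optimality of $x^+$ gives a non-negative difference quotient, and letting $\alpha \downarrow 0$ yields $\dotx{g, x - x^+} + r'(x^+; x - x^+) \ge 0$, from which the Bregman bound follows by the definition. The only cosmetic difference is that the paper packages $h = \dotx{g,\cdot} + r$ and cites its Lemma~\ref{lem:linear-invariance} for directional differentiability of $h$, whereas you work with $r$ and $\dotx{g,\cdot}$ separately; you also spell out $x^+ \in S$ and the exclusion of $r'(x^+;x-x^+) = -\infty$ slightly more explicitly, which is sound extended-real bookkeeping.
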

\begin{proof}
	Let $h:\cH \to \ExReals$ be given by $h(\cdot) = \dotx{g, \cdot} + r(\cdot)$, so that $x^+ \in \argmin_{x \in \cX} h(x)$.	Note that by Lemma~\ref{lem:linear-invariance}, $\dom(h)=S$ and $h$ is directionally differentiable with $h'(x; z) = \dotx{g,z} + r'(x; z)$ for all $x \in S$ and $z \in \cH$. Also note that $x^+ \in \cX \cap S$ by definition. Since $x, x^+ \in \cX$, and $\cX$ is convex, for all $\alpha \in [0,1]$, we have $x^+ + \alpha (x-x^+) \in \cX$. Therefore, the optimality of $x^+$ over $\cX$ implies that for all $\alpha \in (0,1)$,
	\begin{align*}
		\frac{h(x^+ + \alpha (x-x^+)) - h(x^+)}{\alpha} \ge 0\,.
	\end{align*}
Thus, $0 \le h'(x^+; x-x^+) = \dotx{g,x-x^+} + r'(x^+; x-x^+)$, and therefore $+\infty > \dotx{g, x-x^+} \ge -r'(x^+; x-x^+)$. Adding the real number $r(x) - r(x^+)$ to the sides completes the proof.
\end{proof}

The second lemma is used for \AdaMD.

\begin{lemma}\label{lem:adamd-margin}
	Let $\cX, S, g$ and $r$ be as in Lemma~\ref{lem:optimization-margin}. Let $y \in S \cap \cX$ be such that $r'(y; \cdot - y)$ is real-valued and concave on $S$, i.e., for all $x_1, x_2 \in S$ and all $\alpha \in [0,1]$ for which $x_{\alpha} := x_1+\alpha(x_2-x_1) \in S$,
  \begin{align*}
    +\infty > r'(y; x_\alpha - y) \ge \alpha r'(y;x_2-y) + (1-\alpha) r'(y; x_1 - y) > - \infty\,.
  \end{align*}
 Let $q:\cH \to \ExReals$ be proper and directionally differentiable, with $S_q := S \cap \cX \cap \dom(q) \not= \emptyset$. Assume that $\cX^{+} := \argmin_{x \in \cX} \dotx{g, x} + q(x) + \cB_{r}(x,y)$ is non-empty, and the associated optimal value is finite. Then, for any $x^+ \in \cX^{+}$ and any $x \in S_q$,
	\begin{align}
		+\infty > \dotx{g, x-x^+} + q(x) - q(x^+) + \cB_{r}(x,y) - \cB_{r}(x^+,y) \ge \cB_{r+q} (x, x^+)\,.
	\end{align}
\end{lemma}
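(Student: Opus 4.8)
The plan is to replay the argument proving \cref{lem:optimization-margin}, but with the surrogate objective $h(\cdot):=\dotx{g,\cdot}+q(\cdot)+\cB_{r}(\cdot,y)$ in place of $\dotx{g,\cdot}+r(\cdot)$, letting the concavity of $r'(y;\cdot-y)$ absorb the extra terms that the Bregman regularizer contributes. First I would settle the ``$+\infty>$'' half: since the minimum of $h$ over $\cX$ is finite, $h(x^+)\in\Reals$, so $q(x^+)$ and $\cB_{r}(x^+,y)$ are finite; finiteness of $\cB_{r}(x^+,y)$ forces $x^+\in\dom(r)=S$, hence $x^+\in S_q$. Because $r'(y;\cdot-y)$ is real-valued on $S$ and $x,x^+\in S$, each of $r(x),r(x^+),r'(y;x-y),r'(y;x^+-y),q(x),q(x^+)$ is finite, so $\cB_{r}(x,y)-\cB_{r}(x^+,y)$, and hence the whole middle expression, is finite.

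For the inequality itself I would exploit optimality of $x^+$ over $\cX$ along the segment toward $x$: by convexity of $\cX$, $x_\alpha:=x^+ +\alpha(x-x^+)\in\cX$ for $\alpha\in(0,1)$, so $h(x_\alpha)\ge h(x^+)$, that is, $\alpha\dotx{g,x-x^+}+\big(q(x_\alpha)-q(x^+)\big)+\big(\cB_{r}(x_\alpha,y)-\cB_{r}(x^+,y)\big)\ge0$. The crucial step bounds the Bregman increment: when $x_\alpha\in S$, \eqref{eq:bregman-div} gives $\cB_{r}(x_\alpha,y)=r(x_\alpha)-r(y)-r'(y;x_\alpha-y)$, and the concavity hypothesis with $x_1=x^+$, $x_2=x$ yields $r'(y;x_\alpha-y)\ge(1-\alpha)r'(y;x^+-y)+\alpha r'(y;x-y)$, so
\[
\cB_{r}(x_\alpha,y)-\cB_{r}(x^+,y)\le\big(r(x_\alpha)-r(x^+)\big)+\alpha\big(r'(y;x^+-y)-r'(y;x-y)\big).
\]
If instead $x_\alpha\notin S$ for all small $\alpha$, then $(r+q)(x_\alpha)=+\infty$, whence $(r+q)'(x^+;x-x^+)=+\infty$ and $\cB_{r+q}(x,x^+)=-\infty$, so the claim is trivial; hence I may assume $x_\alpha\in S$ for small $\alpha$. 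Plugging the bound into the previous inequality, dividing by $\alpha>0$, and letting $\alpha\downarrow0$ — at which point $\tfrac{(r+q)(x_\alpha)-(r+q)(x^+)}{\alpha}\to(r+q)'(x^+;x-x^+)$, the directional derivative implicit in writing $\cB_{r+q}(x,x^+)$, which the displayed bound shows is $>-\infty$ — gives
\[
0\le\dotx{g,x-x^+}+(r+q)'(x^+;x-x^+)+r'(y;x^+-y)-r'(y;x-y).
\]

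To finish, I would add $q(x)-q(x^+)+\cB_{r}(x,y)-\cB_{r}(x^+,y)$ to both sides and substitute $\cB_{r}(x,y)-\cB_{r}(x^+,y)=r(x)-r(x^+)-r'(y;x-y)+r'(y;x^+-y)$; the four $r'(y;\cdot-y)$ terms cancel in pairs and the right-hand side collapses to $(r+q)(x)-(r+q)(x^+)-(r+q)'(x^+;x-x^+)=\cB_{r+q}(x,x^+)$, which is exactly the claimed bound. I expect the main obstacle to be the $\pm\infty$ bookkeeping rather than the algebra: justifying that the limit defining $(r+q)'(x^+;x-x^+)$ exists (which is fine, since $\cB_{r+q}(x,x^+)$ is well-formed, i.e.\ $r+q$ is directionally differentiable at $x^+$), that it is finite-or-$+\infty$, and cleanly handling the $x_\alpha\notin S$ degenerate case above. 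The concavity hypothesis is precisely what keeps the Bregman increment controlled, playing here the role that ordinary convexity of $r$ plays in \cref{lem:optimization-margin}.
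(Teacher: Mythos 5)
Your proposal is correct and follows essentially the same route as the paper's proof: both exploit the optimality of $x^+$ along the segment $x_\alpha = x^+ + \alpha(x-x^+)$, use the concavity of $r'(y;\cdot - y)$ to bound $\cB_r(x_\alpha,y)-\cB_r(x^+,y)$ by an $\alpha$-linear expression, and then pass to the directional derivative $(r+q)'(x^+;x-x^+)$ as $\alpha \downarrow 0$ (the paper takes an infimum over $\alpha$ and bounds it by the limit, which is the same thing). The one point worth tightening is your case analysis: the dichotomy ``$x_\alpha \in S$ for small $\alpha$'' versus ``$x_\alpha \notin S$ for all small $\alpha$'' is not exhaustive and also silently ignores whether $x_\alpha \in \dom(q)$; the paper instead argues per-$\alpha$ (for every $\alpha$ with $x_\alpha \notin S_q$ the comparison is trivially true because $(r+q)(x_\alpha)=+\infty$), which covers the oscillating and $q$-infinite cases uniformly. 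Your conclusion is still sound because any sequence $\alpha_n \downarrow 0$ with $x_{\alpha_n} \notin S \cap \dom(q)$ already forces $(r+q)'(x^+;x-x^+)=+\infty$ and hence $\cB_{r+q}(x,x^+)=-\infty$, but the per-$\alpha$ argument is cleaner and avoids the need for that side observation.
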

\begin{proof}
  Let $h:\cH \to \ExReals$ be given by $h(\cdot) = \dotx{g, \cdot} + q + \cB_{r}(\cdot, y)$, so that $x^+ \in \argmin_{x \in \cX} h(x)$. Note that by assumption, $h(x^+) < +\infty$. In addition, $\dom(h) \subset S \cap \dom(q)$. Thus, $x^+ \in S_q$. 

  Now, fix $\alpha \in (0,1)$, and let $x_{\alpha} = x^+ + \alpha(x-x^+)$. If $x_{\alpha} \in S_q$, then $q(x_{\alpha})$ and $r(x_{\alpha})$ are real-valued, and by the optimality of $x^+$ over $\cX$ and the concavity of $r'(y; \cdot - y)$ over $S$,
	\begin{align*}
		0 \le
    &\ h(x_{\alpha}) - h(x^+) = q(x_{\alpha}) - q(x^+) + \dotx{g, x^{+} + \alpha(x-x^{+}) - x^{+}} + \cB_{r}(x_{\alpha},y) - \cB_{r}(x^{+}, y)\\
  = &\ q(x_{\alpha}) - q(x^+) + \alpha \dotx{g, x-x^{+}} + r(x_{\alpha}) - r(x^{+})
  \\ &\  - r'(y; x_{\alpha} - y) + r'(y;x^{+} - y)\\
  \le &\ q(x_{\alpha}) - q(x^+) + \alpha \dotx{g, x-x^{+}} + r(x_{\alpha}) - r(x^{+})
  \\ &\ - \left( (1-\alpha) r'(y; x^{+} - y) + \alpha r'(y;x-y) \right) + r'(y;x^{+} - y)\\
  = &\ q(x_{\alpha}) - q(x^+) + \alpha \dotx{g, x-x^{+}} + r(x_{\alpha}) - r(x^{+}) + \alpha \left( r'(y; x^{+} - y) - r'(y;x-y) \right) \,,
	\end{align*}
  Suppose, on the other hand, that $x_{\alpha} \not\in S_q$. Then, given that by the assumption of convexity of $\cX$, $x_{\alpha} \in \cX$, we must have $x_{\alpha} \not\in S \cap \dom(q)$, so that $(r+q)(x_{\alpha}) = +\infty$. In addition, $r'(y; \cdot - y)$ is real-valued over $S$ and $x, x^+ \in S$, so $r'(y; x^{+} - y) - r'(y;x-y)$ is real-valued. Putting this together, we will again have that for $x_{\alpha} \not\in S_q$,
  \begin{align*}
		0 \le
    &\ q(x_{\alpha}) - q(x^+) + \alpha \dotx{g, x-x^{+}} + r(x_{\alpha}) - r(x^{+}) + \alpha \left( r'(y; x^{+} - y) - r'(y;x-y) \right) \,,
	\end{align*}

  Thus, dividing by the positive $\alpha$, for all $\alpha \in (0,1)$, we have
  \begin{align*}
    0 \le
    &\ \dotx{g, x-x^{+}} + \frac{q(x_{\alpha}) - q(x^+) + r(x_{\alpha}) - r(x^{+})}{\alpha} - r'(y;x-y) + r'(y; x^{+} - y)\,.
  \end{align*}
  Taking infimum over $\alpha$, we obtain
  \begin{align*}
    0 \le
    &\ \dotx{g, x-x^{+}} - r'(y;x-y) + r'(y; x^{+} - y) + \inf_{\alpha \in (0,1)} \frac{q(x_{\alpha}) - q(x^+) + r(x_{\alpha}) - r(x^{+})}{\alpha}
    \\
    \le
    &\ \dotx{g, x-x^{+}} - r'(y;x-y) + r'(y; x^{+} - y) + (r+q)'(x^+; x-x^+)\,,
  \end{align*}
  using directional differentiability of $q+r$ in the final step. Adding the real-valued equation $0 = q(x) - q(x^+) + r(x) - r(y) + r(y) - r(x^{+}) + (r+q)(x^{+}) - (r+q)(x)$, using the definition of Bregman divergence, and re-arranging terms completes the proof.
\end{proof}

We can now prove Theorem~\ref{thm:cheating-regret}.

\begin{proof}[Proof of Theorem~\ref{thm:cheating-regret}]
	First consider \AdaFTRL. For $t=0,1, \dots, T$, let $\hft_t: \cH \to \ExReals$ be given by $\hft_t(\cdot) := \dotx{g_{1:t}, \cdot} + q_{t}(\cdot) + r_{1:t}(\cdot)$, recalling that $c_{i:j} \equiv 0$ whenever $i>j$. Let $S_t = \dom(r_{1:t})$.
  By \cref{assum:ftrl}, for $t=1,2,\dots,T$,
  \begin{align*}
    -\infty < \hft_{t-1}(x_{t}) = \dotx{g_{1:t-1}, x_t} + q_{t-1}(x_t) + r_{1:t-1}(x_t)  < +\infty\,.
  \end{align*}
  Therefore, $x_t \in \dom(r_{1:t-1} + q_{t-1})$. In addition, by \eqref{eq:ptcond}, $x_t \in \dom(p_t)$. Thus, $x_t \in \dom(r_{1:t-1}+q_{t-1}+p_t)$, i.e., $x_t \in \dom(r_{1:t}) = S_t$. Furthermore, $\hft_t(x_{t+1}) < +\infty$, so $x_{t+1} \in \dom(q_t) \cap \dom(r_{1:t})$.
  Thus, $x_t, x_{t+1} \in \cX \cap S_t \subset \cap_{s=1}^{t} \left( \dom(q_{s-1}) \cap \dom(p_s) \right)$.

	Now, for any $t=1,2,\dots,T$, since $x_t$ minimizes $p_t$ over $\cX$, if we add $p_t$ to the objective of the optimization above, we will still have
	\begin{align*}
		x_{t} \in \argmin_{x \in \cX} \hft_{t-1}(x) + p_t(x) = \argmin_{x \in \cX} \dotx{g_{1:t-1},x} + r_{1:t}(x)\,.
	\end{align*}
	 By Assumption~\ref{assum:ftrl}, $r_{1:t}$ is directionally differentiable. Therefore, for any $t=1,2,\dots,T$, we can apply Lemma~\ref{lem:optimization-margin} with $g \gets g_{1:t-1}$, $r \gets r_{1:t}$, $x^+ \gets x_t$, and $x \gets x_{t+1}$, to obtain
	\begin{align*}
		\dotx{g_{1:t-1}, x_{t} - x_{t+1}} + p_{1:t}(x_{t}) - p_{1:t}(x_{t+1}) + q_{0:t-1}(x_{t}) - q_{0:t-1}(x_{t+1}) \le - \cB_{r_{1:t}}(x_{t+1}, x_t)\,.
	\end{align*}
	In the inequality above, the right-hand side cannot be equal to $-\infty$ (by Lemma~\ref{lem:optimization-margin}), and all other terms are real-valued. Thus, we can sum up this inequality over $t=1,2,\dots,T$, to obtain
	\begin{align*}
	\lefteqn{	- \sum_{t=1}^{T} \cB_{r_{1:t}}(x_{t+1}, x_t)} \\
	\ge &\ \sum_{t=1}^{T} \dotx{g_{1:t-1}, x_{t}} - \sum_{t=1}^{T} \dotx{g_{1:t-1}, x_{t+1}} + \sum_{t=1}^{T} p_{1:t}(x_{t}) - \sum_{t=1}^{T} p_{1:t}(x_{t+1}) + \\
  &\  \sum_{t=1}^{T} q_{0:t-1}(x_{t}) - \sum_{t=1}^{T} q_{0:t-1}(x_{t+1}) \\
  = &\ \sum_{t=0}^{T-1} \dotx{g_{1:t}, x_{t+1}} - \sum_{t=1}^{T} \dotx{g_{1:t-1}, x_{t+1}} + \sum_{t=1}^{T} p_{1:t}(x_{t}) - \sum_{t=2}^{T+1} p_{1:t-1}(x_{t}) + \\
  &\  \sum_{t=0}^{T-1} q_{0:t}(x_{t+1}) - \sum_{t=0}^{T} q_{0:t-1}(x_{t+1}) \\
= &\ \sum_{t=1}^{T-1} \dotx{g_{1:t}, x_{t+1}} - \sum_{t=1}^{T} \dotx{g_{1:t-1}, x_{t+1}} + \sum_{t=1}^{T} p_{1:t}(x_{t}) - \sum_{t=1}^{T+1} p_{1:t-1}(x_{t}) + \\
  &\  \sum_{t=0}^{T-1} q_{0:t}(x_{t+1}) - \sum_{t=0}^{T} q_{0:t-1}(x_{t+1}) \\
	= &\  - \dotx{g_{1:T}, x_{T+1}} + \sum_{t=1}^{T} \dotx{g_{t}, x_{t+1}} + \sum_{t=1}^{T} p_{t}(x_{t}) - p_{1:T}(x_{T+1}) - q_{0:T}(x_{T+1}) + \sum_{t=0}^{T} q_{t}(x_{t+1}) \\
	= &\  \sum_{t=1}^{T} \dotx{g_{t}, x_{t+1}} + \sum_{t=1}^{T} p_{t}(x_{t})
+ \sum_{t=0}^{T} q_{t}(x_{t+1}) - \Big( \dotx{g_{1:T}, x_{T+1}} + p_{1:T}(x_{T+1}) + q_{0:T}(x_{T+1})\Big) \\
	\ge &\ \sum_{t=1}^{T} \dotx{g_{t}, x_{t+1}} + \sum_{t=1}^{T} p_{t}(x_{t})
+ \sum_{t=0}^{T} q_{t}(x_{t+1}) - \Big( \dotx{g_{1:T}, x^*} + p_{1:T}(x^*) + q_{0:T}(x^*)\Big)\\
  = &\ R_T^{+}(x^*) + \sum_{t=1}^{T} p_{t}(x_{t})
+ \sum_{t=0}^{T} q_{t}(x_{t+1}) - p_{1:T}(x^*) - q_{0:T}(x^*)\,,
	\end{align*}
	using, in the last inequality, the optimality of $x_{T+1}$ over $\cX$, as well as the fact that $p_t,q_t$ are proper and all terms on the right-hand side not involving $x^*$ are real-valued (hence the term in the parentheses involving $x^*$ is well-defined and can be added to the rest of the expression). Now if $x^* \not\in \dom(p_{1:T} + q_{0:T})$, the bound of \cref{thm:cheating-regret} holds trivially (recalling that the Bregman divergences cannot be $+ \infty$). Otherwise, $(p_{1:T} + q_{0:T})(x^*)$ is real-valued, and rearranging completes the proof for \AdaFTRL.

  For \AdaMD, we start by presenting the implications of \cref{assum:md}.

  To simplify notation, let $x_0:=g_0:=0$, and define $\hmd_t := \dotx{g_t, x} + q_t(x) + \cB_{r_{1:t}}(x, x_{t})$ and $S_t = \dom(r_{1:t})$ for $t=0,1,\dots,T$ (so that $S_0 = \dom(r_{1:0}) = \cH$).
  Then, by \cref{assum:md}, $\hmd_t(x_{t+1}) < +\infty$ for all $t=0,1,\dots,T$, so $x_{t+1} \in \cX \cap \dom(q_t) \cap S_t$.
  Thus, given that $r_{1:t}'(x_t; \cdot - x_t)$ is real-valued on $S_t$, and $x_t \in S_t$ by assumption, $\cB_{r_{1:t}}(x_{t+1}, x_t)$ is also real-valued.

  Now, note that by the optimality of $x_{T+1}$, and because $\hmd_T(x_{T+1})$ is finite, for all $x^* \in \cX$,
  \begin{align}
    \dotx{g_{T}, x_{T+1} - x^*} \le q_T(x^*) - q_T(x_{T+1}) + \cB_{r_{1:T}}(x^*, x_T) - \cB_{r_{1:T}}(x_{T+1}, x_T)\,.
    \label{eq:md-proof-regret-at-T}
  \end{align}
  Next, fix $t \in \{0,1,2,\dots,T-1\}$ and suppose that $p_{t+1}(x^*)$ is finite-valued. Then, by the definition of $p_{t+1}$, we have $x^* \in \cX \cap \dom(q_t)$ and $x^* \in \dom(r_{t+1}) = \dom(r_{1:t+1}) \subset S_t$.
  Furthermore, by the argument above, $x_{t+1} \in \cX \cap S_t \cap \dom(q_t)$ and $x_t \in S_t$.
  Thus, for all $t=0,1,\dots,T-1$, we can apply \cref{lem:adamd-margin} with $g \gets g_t$, $r \gets r_{1:t}$, $q \gets q_t$, $y \gets x_t$, $x^{+} \gets x_{t+1}$, and $x \gets x^{*}$, to obtain
  \begin{align}
    \dotx{g_t, x_{t+1} - x^*} \le &\ q_t(x^*) - q_t(x_{t+1}) + \cB_{r_{1:t}}(x^*,x_t) - \cB_{r_{1:t}}(x_{t+1},x_t)
    \nonumber
    \\
    &\ - \cB_{r_{1:t}+q_t}(x^*,x_{t+1}) \,. \label{eq:md-proof-instant-regret}
  \end{align}
  Note that this also implies that the right-hand side above cannot be $-\infty$, and only the last Bregman divergence term could be infinite. Now, since $r'_{1:t+1}(x_{t+1}; \cdot - x_{t+1})$ is real-valued on $S_{t+1}$, $r_{1:t} + q_t$ is directionally differentiable, and $x^* \in S_{t+1}$, by \cref{lem:bregman-decomposition} we have
  \begin{align*}
    \cB_{r_{1:t+1}}(x^*, x_{t+1}) - \cB_{r_{1:t} + q_t}(x^*, x_{t+1}) = \cB_{p_{t+1}}(x^*, x_{t+1})\,.
  \end{align*}
  In particular, this implies that $\cB_{p_{t+1}}(x^*, x_{t+1})$ cannot be $-\infty$.
  Moving the (real-valued) first term to the right-hand side, and substituting into \eqref{eq:md-proof-instant-regret}, we have
  \begin{align*}
    \dotx{g_t, x_{t+1} - x^*} \le &\ q_t(x^*) - q_t(x_{t+1}) + \cB_{r_{1:t}}(x^*,x_t) - \cB_{r_{1:t}}(x_{t+1},x_t) + \cB_{p_{t+1}}(x^*, x_{t+1}) \\
    &\ - \cB_{r_{1:t+1}}(x^*, x_{t+1}) \,.
  \end{align*}

  In light of the above, if $p_{t+1}(x^*)$ is finite-valued for all $t=0,1,2,\dots,T-1$, then summing up the above inequality, as well as \eqref{eq:md-proof-regret-at-T}, we have
  \begin{align*}
    \sum_{t=0}^{T} \dotx{g_t, x_{t+1} - x^*}
    \le &\ \sum_{t=0}^{T-1} q_t(x^*) - q_t(x_{t+1}) + \sum_{t=0}^{T-1} \cB_{p_{t+1}}(x^*,x_{t+1}) +
    \\
    &\ \sum_{t=0}^{T-1} \cB_{r_{1:t}}(x^*,x_t) - \sum_{t=0}^{T-1} \cB_{r_{1:t+1}}(x^*,x_{t+1}) - \sum_{t=0}^{T-1} \cB_{r_{1:t}}(x_{t+1},x_t) + \\
    &\ q_T(x^*) - q_T(x_{T+1}) + \cB_{r_{1:T}}(x^*,x_T) - \cB_{r_{1:T}}(x_{T+1}, x_T)
    \\
    = &\ \sum_{t=0}^{T} q_t(x^*) - q_t(x_{t+1}) + \sum_{t=0}^{T-1} \cB_{p_{t+1}}(x^*,x_{t+1}) +
    \\
    &\ \sum_{t=}^{T} \cB_{r_{1:t}}(x^*,x_t) - \sum_{t=0}^{T-1} \cB_{r_{1:t+1}}(x^*,x_{t+1}) - \sum_{t=1}^{T} \cB_{r_{1:t}}(x_{t+1},x_t)
    \\
    = &\ \sum_{t=0}^{T} q_t(x^*) - q_t(x_{t+1}) + \sum_{t=1}^{T} \cB_{p_{t}}(x^*,x_{t}) - \sum_{t=1}^{T} \cB_{r_{1:t}}(x_{t+1},x_t)\,,
    \\
  \end{align*}
  and \eqref{eq:md-forward-regret} holds.

  On the other hand, if $p_{t+1}(x^*)$ is infinite for at least one $t$ in $\{0,1,2,\dots,T-1\}$, then $\cB_{p_{t+1}}(x^*, x_{t+1}) = +\infty$ by definition.
  Therefore, the right-hand side of \eqref{eq:md-forward-regret} will be $+\infty$, given that by the argument above, $\cB_{p_{t+1}}(x^*, x_{t+1})$ cannot be equal to $-\infty$ if $p_{t+1}(x^*)$ is finite-valued.
  Thus, in this case as well, the bound of \eqref{eq:md-forward-regret} holds trivially, completing the proof.
\end{proof}

\end{document}